\newif\ifnobrackets
\renewcommand\@cite[2]{\ifnobrackets\else[\fi{#1\if@tempswa , #2\fi}\ifnobrackets\else]\fi\nobracketsfalse}
\title{A Multi-Resolution Framework for U-Nets \\  with Applications to Hierarchical VAEs}%
\author{%
  Fabian Falck \textbf{\thanks{Equal contribution.}} $^{\ 1,3,4}$ \,\,\, Christopher Williams $^{*,1}$ \,\,\, Dominic Danks $^{2,4}$ \,\,\, George Deligiannidis $^{1}$
  \\ \bf Christopher Yau $^{1,3,4}$ \,\,\, Chris Holmes $^{1,3,4}$ \,\,\, Arnaud Doucet $^{1}$ \,\,\, Matthew Willetts $^{4}$ \\
  $^1$University of Oxford\, $^2$University of Birmingham\, \\
 $^3$Health Data Research UK\, $^4$The Alan Turing Institute \\
  \texttt{\{fabian.falck, williams, deligian, cholmes, doucet\}@stats.ox.ac.uk,} \\
  \texttt{\{ddanks, cyau, mwilletts\}@turing.ac.uk\}}
}
\DeclareMathOperator{\KL}{KL}
\newtheorem{theorem}{Theorem}
\newcommand{\R}{\mathbb{R}}
\newcommand{\Z}{\mathbb{Z}}
\newcommand{\N}{\mathbb{N}}
\newcommand{\bE}{\mathbf{E}}
\newcommand{\bPsi}{\mathbf{\Psi}}
\newcommand{\Pb}{\mathbb{P}}
\newcommand{\Eb}{\mathbb{E}}
\newcommand{\Xb}{\mathbb{X}}
\newcommand{\Db}{\mathbb{D}}
\newcommand{\1}{\mathbbm{1}}
\newcommand{\sds}{\, \cdot \,}
\newcommand{\proj}{\text{proj}}
\newcommand{\embd}{\text{embd}}
\newcommand{\norm}[1]{ \left\| #1 \right\| }
\newcommand{\bmu}{\overleftarrow{\mu}}
\newcommand{\bsig}{\overleftarrow{\sigma}}
\theoremstyle{definition}
\newtheorem{definition}{Definition}%
\newtheorem{lemma}{Lemma}
\newtheorem{remark}{Remark}
\newtheorem{prop}{Proposition}
\newtheorem*{theorem*}{Theorem}
\newcommand*\colourcross[1]{%
  \expandafter\newcommand\csname #1cross\endcsname{\textcolor{#1}{\ding{55}}}%
}
\newif\ifdraft
\newcommand{\ff}[1]{\textbf{\textcolor{blue}{#1}}}
\DeclareMathOperator{\expect}{\mathbb{E}}
\DeclareMathOperator{\ELBO}{\mathcal{L}}
\newcommand*\intd{\mathop{}\!\mathrm{d}}
\renewcommand{\vector}[1]{\boldsymbol{\mathbf{#1}}}
\renewcommand{\v}{\vector}
\newcommand*\vv[1]{\vec{\v{#1}}}
\newcommand{\B}[1]{\boldsymbol{#1}}
\newcommand{\newreptheorem}[2]{\newtheorem*{rep@#1}{\rep@title}\newenvironment{rep#1}[1]{\def\rep@title{#2 \ref*{##1}}\begin{rep@#1}}{\end{rep@#1}}}
\begin{document}
\maketitle
\begin{abstract}

U-Net architectures are ubiquitous in state-of-the-art deep learning, however their regularisation properties and relationship to wavelets are understudied.
In this paper, we formulate a multi-resolution framework which identifies U-Nets as finite-dimensional truncations of models on an infinite-dimensional function space.
We provide theoretical results which prove that average pooling corresponds to projection within the space of square-integrable functions and show that U-Nets with average pooling implicitly learn a Haar wavelet basis representation of the data.
We then leverage our framework to identify state-of-the-art hierarchical VAEs (HVAEs), which have a U-Net architecture, as a type of two-step forward Euler discretisation of multi-resolution diffusion processes which flow from a point mass, introducing sampling instabilities.
We also demonstrate that HVAEs learn a representation of time which allows for improved parameter efficiency through weight-sharing. 
We use this observation to achieve state-of-the-art HVAE performance with half the number of parameters of existing models, exploiting the properties of our continuous-time formulation.

\end{abstract}

\section{Introduction}
\label{sec:Introduction}

U-Net architectures are extensively utilised in modern deep learning models.
First developed for image segmentation in biomedical applications~\cite{ronneberger2015u}, U-Nets have been widely applied for text-to-image models~\cite{saharia2022photorealistic}, image-to-image translation~\cite{image_to_image}, image restoration~\cite{image_restoration_2016, image_restoration_2021}, super-resolution~\cite{shi2022conditional}, and multiview learning~\cite{contrastive_multiview_coding}, amongst other tasks \cite{9196843}.
They also form a core building block as the neural architecture of choice in state-of-the-art generative models, particularly for images, such as HVAEs~\cite{Child2020VeryImages, Vahdat2020NVAE:Autoencoder,hazami2022efficient,kohl2019hierarchical} and diffusion models~\cite{saharia2022photorealistic,DDPM,DDIM, ImprovedDDPM,song2020score, diffusion_models_beat_gans, de2021diffusion,  Kingma2021VariationalModels,rombach2022high}. 
In spite of their empirical success, it is poorly understood why U-Nets work so well, and what regularisation they impose.

In likelihood-based generative modelling, various model classes are competing for superiority, including normalizing flows~\cite{ho2019flow++,kingma2018glow}, autoregressive models~\cite{child2019generating,van2016conditional}, diffusion models, and hierarchical variational autoencoders (HVAEs), the latter two of which we focus on in this work.
HVAEs form groups of latent variables with a conditional dependence structure, use a U-Net neural architecture, and are trained with the typical VAE ELBO objective (for a detailed introduction to HVAEs, see Appendix \ref{sec:Background}).
HVAEs show impressive synthesis results on facial images, and yield competitive likelihood performance, consistently outperforming the previously state-of-the-art autoregressive models, VAEs and flow models on computer vision benchmarks~\cite{Child2020VeryImages,Vahdat2020NVAE:Autoencoder}.
HVAEs have undergone a journey of design iterations and architectural improvements in recent years, for example the introduction a deterministic backbone~\cite{Rezende2014StochasticModels,zhao2017learning,falck2021multi} and ResNet elements~\cite{Kingma2016ImprovedFlow,he2016deep} with shared parameters between the inference and generative model parts. 
There has also been a massive increase in the number of latent variables and overall stochastic depth,  as well as the use of different types of residual cells in the decoder~\cite{Child2020VeryImages,Vahdat2020NVAE:Autoencoder} (see \S\ref{sec:Related_work} and Fig.~\ref{fig:hvae_cells_all} for a detailed discussion). 
However, a theoretical understanding of these choices is lacking. 
For instance, it has not been shown why a residual backbone may be beneficial, or what the specific cell structures in VDVAE~\cite{Child2020VeryImages} and NVAE~\cite{Vahdat2020NVAE:Autoencoder} correspond to, or how they could be improved.

\begin{figure}
\centering
\includegraphics[width=\linewidth]{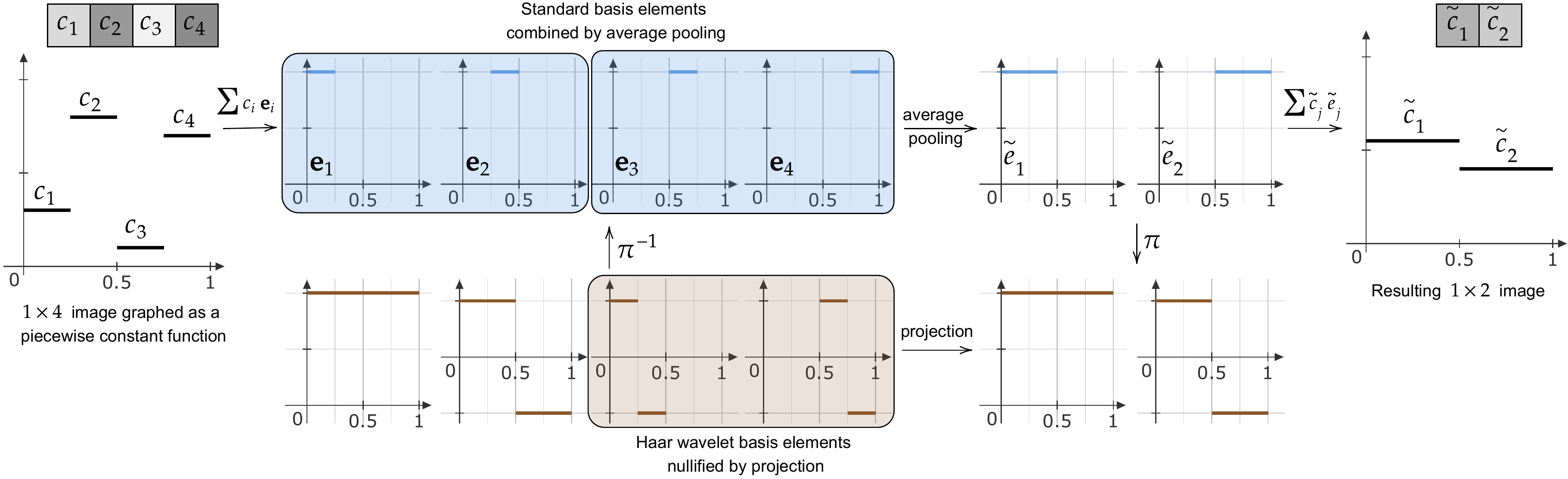}
\caption{
U-Nets with average pooling learn a Haar wavelet basis representation of the data.
}  
\label{fig:change_of_basis}
\end{figure}

In this paper we provide a theoretical framework for understanding the latent spaces in U-Nets, and apply this to HVAEs specifically. 
Doing so allows us to relate HVAEs to diffusion processes, and also to motivate a new type of piecewise time-homogenenous model which demonstrates state-of-the-art performance with approximately half the number of parameters of a~VDVAE~\cite{Child2020VeryImages}.
More formally, our contributions are as follows:
\textbf{(a)} 
We provide a multi-resolution framework for U-Nets. 
We formally define U-Nets as acting over a multi-resolution hierarchy of $L^2([0,1]^2)$. 
We prove that average pooling is a conjugate operation to projection in the Haar wavelet basis within $L^2([0,1]^2)$. 
We use this insight to show how U-Nets with average pooling implicitly learn a Haar wavelet basis representation of the data (see Fig. \ref{fig:change_of_basis}), helping to characterise the regularisation within U-Nets.
\textbf{(b)}
We apply this framework to state-of-the-art HVAEs as an example, identifying their residual cell structure as a type of two-step forward Euler discretisation of a multi-resolution diffusion bridge. 
We uncover that this diffusion process flows from a point mass, which causes instabilities, for instance during sampling, and identify parameter redundancies through our continuous-time formulation.
Our framework both allows us to understand the heuristic choices of existing work in HVAEs and enables future work to optimise their design, for instance their residual cell.
\textbf{(c)}
In our experiments, we demonstrate these sampling instabilities and train HVAEs with the largest stochastic depth ever, achieving state-of-the-art performance with half the number of parameters by exploiting our theoretical insights. 
We explain these results by uncovering that HVAEs secretly represent time in their state and show that they use this information during training.
We finally provide extensive ablation studies which, for instance, rule out other potential factors which correlate with stochastic depth, show the empirical gain of multiple resolutions, and find that Fourier features (which discrete-time diffusion models can strongly benefit from~\cite{Kingma2021VariationalModels}) do not improve performance in the HVAE setting.

\section{The Multi-Resolution Framework} 
\label{sec:multi-res}

A grayscale
image with infinite resolution can be thought of as the graph\footnote{
For a function $f(\,\cdot\,)$, its graph is the set $\bigcup_{x \in [0,1]^2} \{x, f(x) \}$.
} of a two-dimensional function over the unit square.
To store these infinitely-detailed images in computers, we project them to some finite resolution.
These projections can still be thought of as the graphs of functions with support over the unit square, but they are piecewise constant on finitely many intervals or `pixels’, e.g. $512^2$ pixels, and we store the function values obtained at these pixels in an array or `grid’.
The relationship between the finite-dimensional version and its infinitely-fine counterpart depends entirely on how we construct this projection to preserve the details we wish to keep.
One approach is to prioritise preserving the large-scale details of our images, so unless closely inspected, the projection is indistinguishable from the original.
This can be achieved with a multi-resolution projection
\cite{daubechies1992ten} of the image.
In this section we introduce a \emph{multi-resolution framework} for constructing neural network architectures that utilise such projections, prove what regularisation properties they impose, and show as an example how HVAEs with a U-Net \cite{ronneberger2015u} architecture can be interpreted in our framework. 
Proofs of all theorems in the form of an extended exposition of our framework can be found in Appendix~\ref{app:Proofs}.

\subsection{Multi-Resolution Framework: Definitions and Intuition}
\label{sec:Multi-Resolution Framework: Definitions and Intuition}

What makes a multi-resolution projection good at prioritising large-scale details can be informally explained through the following thought experiment.
Imagine we have an image, represented as the graph of a function, and its finite-dimensional projection drawn on the wall. 
We look at the wall, start walking away from it and stop when the image and its projection are indistinguishable by eye.
The number of steps we took away from the wall can be considered our measure of `how far away' the approximation is from  the underlying function.
The goal of the multi-resolution projection is therefore to have to take as few steps away as possible.
The reader is encouraged to physically conduct this experiment with the examples provided in Appendix~\ref{app:Thought experiment on multiresolution analysis}.
We can formalise the aforementioned intuition by defining a \emph{multi-resolution hierarchy}~\cite{daubechies1992ten} of subspaces we may project to: 

\begin{definition}%
\label{def:multi-res-approx-space}
[\emph{Daubechies (1992)}~\cite{daubechies1992ten}] Given a nested sequence of \emph{approximation spaces} $\cdots \subset V_1 \subset V_0 \subset V_{-1} \subset \cdots$, $\{V_{-j}\}_{{j\in}\mathbb{Z}}$ is a \emph{multi-resolution hierarchy} of the function space $L^2(\R^m)$ if:
    \begin{enumerate*}[start=1,label={(\bfseries A\arabic*) }]
        \item ${\overline{\bigcup_{j \in \Z} V_{-j}}  =  L^2(\R^m)}$;
        \item ${\bigcap_{j \in \Z} V_{-j}   = \{ 0 \}}$;
        \item ${f(\cdot) \in V_{-j} \Leftrightarrow f(2^j  \cdot  ) \in V_0}$;
        \item ${f(\cdot)\in V_0 \Leftrightarrow  f(  \cdot -n ) \in V_0}$ for $n \in \Z$.
    \end{enumerate*}
For a compact set $\Xb \subset \R^m$, a \emph{multi-resolution hierarchy} of $L^2(\Xb)$ is $\{V_{-j}\}_{{j\in}\mathbb{Z}}$ as defined above, restricting functions in $V_{-j}$ to be supported on $\Xb$.
\end{definition}

In Definition \ref{def:multi-res-approx-space}, the index $j$ references how many steps we took in our thought experiment, so negative $j$ corresponds to `zooming in' on the images.  %
The original image\footnote{
We here focus on grayscale, squared images for simplicity, but note that our framework can be seamlessly extended to colour images with a Cartesian product $L^2([0,1]^2) \times L^2([0,1]^2) \times L^2([0,1]^2)$, and other continuous signals such as time series.}  %
is a member of $L^2([0,1]^2)$, the space of square-integrable functions on the unit square, and its finite projection to $2^{j} \cdot 2^{j}$ many pixels is a member of $V_{-j}$.
Images can be represented as piecewise continuous functions in the subspaces
    $V_{-j} = \{ f \in L^2([0,1]) \ | \ f|_{[2^{-j} \cdot k, 2^{-j} \cdot (k+1) )} = c_k , \, k \in \{0, \dots, 2^j-1 \}, \, c_k \in \R \}.$
The nesting property $V_{-j+1} \subset V_{-j}$ ensures that any image with $(2^{j-1})^2$ pixels can also be represented by $(2^{j})^2$ pixels, but at a higher resolution.
Assumption (\textbf{A1}) states that with infinitely many pixels, we can describe any infinitely detailed image.
In contrast, (\textbf{A2}) says that with no pixels, we cannot approximate any images.
Assumptions (\textbf{A3}) and (\textbf{A4}) allow us to form a basis for images in any $V_{-j}$ if we know the basis of $V_0$.
One basis made by extrapolating from $V_0$ in this way is known as a \textit{wavelet basis} \cite{daubechies1992ten}. 
Wavelets have proven useful for representing images, for instance in the JPEG standard \cite{taubman2012jpeg2000}, and are constructed to be orthonormal.

Now suppose we have a probability measure $\nu_{\infty}$ over infinitely detailed images represented in $L^2([0,1]^2)$ and wish to represent it at a lower resolution.
Similar to how we did for infinitely detailed images, we want to project the measure $\nu_{\infty}$ to a lower dimensional measure $\nu_{j}$ on the finite dimensional space $V_{-j}$.
In extension to this, we want the ability to reverse this projection so that we may sample from the lower dimensional measure and create a generative model for $\nu_{\infty}$. 
We would like to again prioritise the presence of large-scale features of the original image within the lower dimensional samples. 
We do this by constructing a \emph{multi-resolution bridge} from $\nu_{\infty}$ to $\nu_j$, as defined below.

\begin{definition}%
\label{def:multi-res-bridge}
Let $\Xb \subset \R^{m}$ be compact, $\{V_{-j} \}_{j=0}^{\infty}$ be a multi-resolution hierarchy of scaled so $L^2(\Xb) = \overline{\bigcup_{j \in \N_0 } V_{-j} }$ and $V_0 = \{0\}$.
If $\Db(L^2(\Xb))$ is the space of probability measures over $L^2(\Xb)$, then a family of probability measures $\{\nu_t\}_{t \in [0,1]}$ on $L^2(\Xb)$ is a \emph{multi-resolution bridge} if:
\begin{enumerate}[(i),itemsep=1mm,topsep=0pt,parsep=0pt,partopsep=0pt,leftmargin=20pt]  %
    \item there exist increasing times $\mathcal{I} \coloneqq \{t_j \}_{j \in \N_0}$ where $t_0 =0$, $\lim_{j \to \infty} t_j = 1$, such that $s \in [t_j, t_{j+1})$ implies $\text{supp}( \nu_s ) \subset V_{-j}$, i.e $\nu_s \in \Db(V_{-j})$; and,
    \item for $s \in (0,1)$, the mapping $s \mapsto \nu_s$ is continuous for $s\in (t_j, t_{j+1})$ for some $j$. %
\end{enumerate}
\end{definition}

The continuous time dependence in Definition \ref{def:multi-res-bridge} plays a movie of the measure $\nu_{0}$ supported on $V_0$ growing to $\nu_{\infty}$, a measure on images with infinite resolution.
At a time interval $[t_j,t_{j+1})$, the space $V_{-j}$ which the measure is supported on is fixed.
We may therefore define a finite-dimensional model transporting probability measures within $V_{-j}$, but at $t_{j+1}$ the support flows over to $V_{-j-1}$.
Given a multi-resolution hierarchy, we may glue these finite models, each acting on a disjoint time interval, together in a unified fashion.
In Theorem \ref{thm:id-diff} we show this for the example of a continuous-time multi-resolution diffusion process truncated up until some time $t_{J} = T \in (0,1)$ and in the \emph{standard basis} discussed in \S\ref{sec:The regularisation property imposed by U-Net architectures with average pooling}, which will be useful when viewing HVAEs as discretisations of diffusion processes on functions in \S\ref{sec:Example: HVAEs are Sum Representation Diffusion Discretisations}. %

\begin{theorem}\label{thm:id-diff}
Let $B_j: [t_j, t_{j+1}) \times \Db(V_{-j}) \mapsto \Db(V_{-j})$ be a linear operator (such as a diffusion transition kernel, see Appendix \ref{app:Proofs}) for $j<J$ with coefficients $\mu^{(j)}, \sigma^{(j)} :[t_j, t_{j+1}) \times V_{-j} \mapsto V_{-j}$, and define the natural extensions within $V_{-J}$ in bold, i.e. $\bm{B}_j \coloneqq B_{j} \oplus \bm{I}_{V_{-j}^{\perp} }$.
Then the operator $\bm{B}: [0,T] \times \Db(V_{-J}) \mapsto \Db(V_{-J})$ and the coefficients $\bm{\mu}, \bm{\sigma} :[0,T] \times V_{-J} \mapsto V_{-J}$ given by

{\centering
 $ \displaystyle
\begin{aligned} 
    \bm{B} \coloneqq \sum_{j = 0}^{J} \1_{[t_j, t_{j+1})} \cdot \bm{B}_j,
    &&
    \bm{\mu} \coloneqq \sum_{j = 0}^{J} \1_{[t_j, t_{j+1})} \cdot \bm{\mu}^{(j)},
    &&
    \bm{\sigma} \coloneqq \sum_{j = 0}^{J} \1_{[t_j, t_{j+1})} \cdot \bm{\sigma}^{(j)},
\end{aligned}
 $ 
\par}
induce a multi-resolution bridge of measures from the dynamics for $t \in [0,T]$ and on the standard basis as $dZ_t = \bm{\mu}_t(Z_t) dt + \bm{\sigma}_t(Z_t) dW_t$ (see Appendix \ref{app:U-Nets in V_-J} for details) for $Z_t \in V_{-j}$ for $t \in [t_{j},t_{j+1})$, i.e. a multi-resolution diffusion process. 
\end{theorem}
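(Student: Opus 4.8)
The plan is to verify directly that the marginal laws $\nu_t \coloneqq \mathrm{Law}(Z_t)$ of the glued SDE satisfy the two defining properties of a multi-resolution bridge in Definition \ref{def:multi-res-bridge}. The structural feature I would exploit is that the indicator decomposition $\bm{\mu} = \sum_{j} \1_{[t_j,t_{j+1})} \bm{\mu}^{(j)}$ (and likewise for $\bm{\sigma}$) makes the coefficients piecewise constant in the index $j$, so that on each interval $[t_j,t_{j+1})$ the dynamics coincide with the single finite-dimensional diffusion generated by $B_j$. This reduces the theorem to (a) analysing each block diffusion on $V_{-j}$ in isolation and (b) checking that the blocks concatenate consistently at the gluing times $t_j$. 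Since Definition \ref{def:multi-res-bridge}(ii) only demands continuity on the open intervals $(t_j,t_{j+1})$, the concatenation need not be continuous across the jumps in the coefficients, which removes the only potential source of difficulty in the gluing.

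For the support property (i), I would argue by induction on $j$ starting from $\nu_0 = \delta_0$, the point mass on $V_0 = \{0\}$. The key step is to unpack the extension $\bm{B}_j = B_j \oplus \bm{I}_{V_{-j}^{\perp}}$ at the level of the SDE: the summand $\bm{I}_{V_{-j}^{\perp}}$ is the identity transition kernel on the orthogonal complement, which translates into the coefficients $\bm{\mu}^{(j)}, \bm{\sigma}^{(j)}$ vanishing in the $V_{-j}^{\perp}$ directions. Writing $P_{-j}$ for the orthogonal projection $V_{-J} \to V_{-j}$, the orthogonal component $(\bm{I} - P_{-j})Z_t$ therefore solves an SDE with zero coefficients on $[t_j,t_{j+1})$; if $Z_{t_j} \in V_{-j}$ almost surely (the inductive hypothesis), this component starts at zero and stays zero, so $Z_t \in V_{-j}$ throughout and $\mathrm{supp}(\nu_s) \subset V_{-j}$. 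At the transition time $t_{j+1}$ we have $Z_{t_{j+1}} \in V_{-j} \subset V_{-j-1}$ by the nesting property of the hierarchy, which supplies a valid $V_{-j-1}$-valued initial condition for the next block and closes the induction.

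For the continuity property (ii), I would note that on each open interval $(t_j,t_{j+1})$ the coefficients reduce to the fixed, finite-dimensional drift and diffusion of the single diffusion $B_j$ on $V_{-j}$. Under the standing regularity needed for $B_j$ to be a bona fide transition kernel --- for instance local Lipschitz continuity together with a linear growth bound, guaranteeing a unique strong solution --- standard SDE theory gives that $s \mapsto Z_s$ is almost surely continuous, whence $s \mapsto \nu_s$ is continuous in the weak (equivalently Wasserstein) topology on $\Db(V_{-j})$. Combined with (i), this establishes that $\{\nu_t\}_{t\in[0,T]}$ is a multi-resolution bridge; identifying $\bm{B}$ as the transition kernel of the concatenated process $Z_t$ then yields the stated diffusion.

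The main obstacle I anticipate is making the abstract extension ``$\oplus\, \bm{I}$'' rigorous and reconciling the operator-level and coefficient-level descriptions: concretely, I must express the extended coefficients through the projections $P_{-j}$, confirm that the indicator-weighted sums define measurable coefficients for which $dZ_t = \bm{\mu}_t(Z_t)\,dt + \bm{\sigma}_t(Z_t)\,dW_t$ admits a pathwise-unique solution on all of $[0,T]$ (built by solving on $[t_j,t_{j+1})$ and restarting from the endpoint law), and verify that these manipulations remain valid once each $V_{-j}$ is identified with its finite-dimensional coordinate representation in the standard basis. The remaining arguments are then routine applications of finite-dimensional diffusion theory.
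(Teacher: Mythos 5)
Your proposal is correct and follows essentially the same route as the paper's own proof: an induction on the support over the intervals $[t_j,t_{j+1})$, starting from $\nu_0=\delta_{\{0\}}$ on $V_0$ and using the fact that the extension $\bm{B}_j = B_j \oplus \bm{I}_{V_{-j}^{\perp}}$ forces the coefficients to vanish on $V_{-j}^{\perp}$, so the orthogonal component stays at zero and the block SDE on $V_{-j}$ embeds into the SDE on $V_{-J}$. Your explicit verification of the weak-continuity requirement (ii) on the open intervals, and your attention to well-posedness of the glued SDE, are in fact slightly more thorough than the paper's argument, which leaves these points implicit.
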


\begin{wrapfigure}[24]{r}{.3\textwidth}
\vspace{-1em}
\centering
\includegraphics[width=1\linewidth]{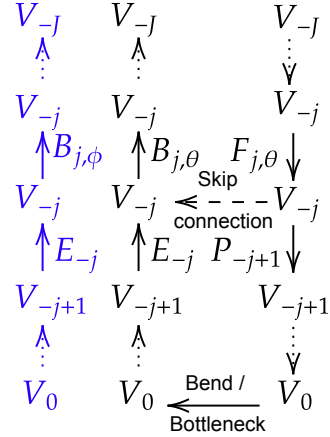}
\caption{
A U-Net in our multi-resolution framework. See Appendix \ref{app:U-Net Model} for details.
}  %
\label{fig:U-Net_short}

\end{wrapfigure}

The concept of a multi-resolution bridge will become important in Section \ref{sec:The regularisation property imposed by U-Net architectures with average pooling} where we will show that current U-Net bottleneck structures used for unconditional sampling impose a multi-resolution bridge on the modelled densities. 
To preface this, we here provide a description of a U-Net within our framework, illustrated in \ref{fig:U-Net_short}.
Consider $B_{j,\theta}, F_{j,\theta} :\Db(V_{-j}) \rightarrow \Db(V_{-j})$ as the forwards and backwards passes of a U-Net on resolution $j$.
Further, let $P_{-j+1} : \Db(V_{-j}) \rightarrow \Db(V_{-j+1})$ and $E_{-j}: \Db(V_{-j+1}) \rightarrow \Db(V_{-j})$ be the projection (here: average pooling) and embedding maps (e.g. interpolation), respectively.
When using an $L^2$-reconstruction error, a U-Net~\cite{ronneberger2015u} architecture implicitly learns a sequence of models $\bold{B}_{j,\phi} : \Db(V_{-j+1}) \times \Db(V_{-j+1}^{\perp}) \mapsto \Db(V_{-j})$ due to the orthogonal decomposition $V_{-j} = V_{-j+1} \oplus U_{-j+1}$ where $ U_{-j+1} \coloneqq V_{-j} \cap V_{-j+1}^{\perp} $.
The backwards operator for the U-Net has a (bottleneck) input from $ \Db(V_{-j+1})$ and a (skip) input yielding information from $\Db(V_{-j+1}^{\perp})$. 
A simple \emph{bottleneck} map $\mathit{U}_{j,\theta}:\Db(V_{-j}) \rightarrow \Db(V_{-j})$ (without skip connection) is given by
\begin{align}
    \mathit{U}_{j,\theta} \coloneqq B_{j, \theta} \circ E_{-j} \circ P_{-j+1} \circ  F_{j, \theta},
\end{align}
and a U-Net bottleneck with skip connection is
\begin{align}
    \mathbf{U}_{j,\phi} \coloneqq B_{j, \phi} (E_{-j} \circ P_{-j+1} \circ  F_{j, \theta} ,  F_{j, \theta} ).
\end{align}
In HVAEs, the map $ \mathbf{U}_{j,\phi}:\Db(V_{-j}) \rightarrow \Db(V_{-j})$ is trained to be the identity by minimising reconstruction error, and further shall approximate $\mathit{U}_{j,\theta} \approx \mathbf{U}_{j,\phi}$ via a KL divergence. 
The $L^2$-reconstruction error for $ \mathbf{U}_{j,\phi}$ has an orthogonal partition of the inputs from $V_{-j+1} \times V_{-j}$, hence the only new subspace added is $U_{-j+1}$. 
As each orthogonal $U_{-j+1}$ is added sequentially in HVAEs, the skip connections induce a multi-resolution structure of this hierarchical neural network structure.
What we will investigate in Theorem \ref{thm:truncation} is the regularisation imposed on this partitioning by enforcing $\mathit{U}_{j,\theta} \approx \mathbf{U}_{j,\phi}$, as is often enforced for generative models with VAEs.

\subsection{The regularisation property imposed by U-Net architectures with average pooling}
\label{sec:The regularisation property imposed by U-Net architectures with average pooling}

Having defined U-Net architectures within our multi-resolution framework, we are now interested in the regularisation they impose. 
We do so by analysing a U-Net when skip connections are absent, so that we may better understand what information is transferred through each skip connection when they are present.
In practice, a pixel representation of images is used when training U-Nets, which we henceforth call the \emph{standard basis} (see \ref{app:Dimension Reduction Conjucacy}, Eq. \eqref{eq:standard_basis}).
The standard basis is not convenient to derive theoretical results.
It is instead preferable to use a basis natural to the multi-resolution bridge imposed by a U-Net with a corresponding projection operation, which for average pooling is the \emph{Haar (wavelet) basis} \cite{haar1909theorie} (see Appendix \ref{app:Dimension Reduction Conjucacy}).
The Haar basis, like a Fourier basis, is an orthonormal basis of $L^2(\Xb)$ which has desirable $L^2$-approximation properties.
We formalise this in Theorem~\ref{thm:conj}
which states that the dimension reduction operation of average pooling in the standard basis is a conjugate operation to co-ordinate projection within the Haar basis (details are provided in Appendix \ref{app:Dimension Reduction Conjucacy}).

\begin{theorem}\label{thm:conj}
Given $V_{-j}$ as in Definition \ref{def:multi-res-approx-space}, let $x \in V_{-j}$ be represented in the standard basis $\bE_j$ and Haar basis $\bPsi_j$.
Let $\pi_j : \bE_j \mapsto \bPsi_j$ be the change of basis map illustrated in Fig. \ref{fig:change_basis_map}, then we have the conjugacy $\pi_{j-1} \circ \text{pool}_{-j,-j+1} = \proj_{V_{-j+1}} \circ \pi_{j}$. 
\end{theorem}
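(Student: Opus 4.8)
The plan is to reduce the claimed conjugacy to two facts: that average pooling realises the $L^2$-orthogonal projection onto $V_{-j+1}$ when expressed in the standard (pixel) basis, and that the Haar basis is \emph{adapted} to the orthogonal splitting $V_{-j} = V_{-j+1} \oplus U_{-j+1}$ with $U_{-j+1} = V_{-j} \cap V_{-j+1}^{\perp}$, so that this orthogonal projection becomes a coordinate projection once we change into the Haar basis. First I would fix explicit representations: in one dimension the standard basis $\bE_j$ consists of the $L^2$-normalised pixel indicators $e^{(j)}_k = 2^{j/2}\,\1_{[2^{-j}k,\,2^{-j}(k+1))}$, while the Haar basis $\bPsi_j$ consists of the coarse scaling function together with the dilated and translated Haar wavelets $\psi_{l,k}$ for $0 \le l \le j-1$; the two-dimensional bases are the corresponding tensor products, where the finest level contributes the horizontal, vertical and diagonal wavelets that span $U_{-j+1}$. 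Throughout, $\pi_j$ is the orthonormal discrete Haar transform sending standard coordinates on $V_{-j}$ to Haar coordinates.

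The first key step is to show that $\text{pool}_{-j,-j+1}$ equals the $L^2$-orthogonal projection $\proj^{\perp}_{V_{-j+1}}$ as a map on $V_{-j}$ written in the standard basis. For $f \in V_{-j}$ with pixel values $a_k$, the orthogonal projection onto $V_{-j+1}$ replaces $f$ on each coarse cell by its integral-average over that cell; a one-line computation gives the value $\tfrac{1}{2}(a_{2k}+a_{2k+1})$ on coarse cell $k$ in one dimension (and the mean of the four constituent pixels in two dimensions), which is exactly average pooling. The only thing requiring care here is matching the $L^2$-normalisations of $\bE_j$ and $\bE_{j-1}$ against this pixel-value description.

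The second key step is to identify the coordinate projection $\proj_{V_{-j+1}}$ in the Haar basis with $\proj^{\perp}_{V_{-j+1}}$. Since $\bPsi_j$ is orthonormal and splits as the Haar basis $\bPsi_{j-1}$ of $V_{-j+1}$ together with the finest-level wavelets spanning $U_{-j+1}$, for any $\psi \in \bPsi_{j-1} \subset V_{-j+1}$ the Haar coordinate of $x$ on $\psi$ is $\langle x, \psi \rangle = \langle \proj^{\perp}_{V_{-j+1}} x, \psi \rangle$, using $x - \proj^{\perp}_{V_{-j+1}} x \perp V_{-j+1}$. Hence discarding the $U_{-j+1}$-coordinates of $\pi_j x$ returns precisely the Haar coordinates of $\proj^{\perp}_{V_{-j+1}} x$ within $V_{-j+1}$. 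This uses the nesting of the Haar transforms, namely that $\bPsi_{j-1}$ sits inside $\bPsi_j$ as its coarse block, which is the recursive structure of the fast wavelet transform.

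Finally I would chain the diagram: for $x \in V_{-j}$ in standard coordinates, $\proj_{V_{-j+1}} \circ \pi_j(x)$ equals the Haar coordinates of $\proj^{\perp}_{V_{-j+1}} x$ by the second step, which by the first step is $\pi_{j-1}(\text{pool}_{-j,-j+1}(x))$, yielding the stated conjugacy. I expect the main obstacle to be the bookkeeping in the two-dimensional case, that is, verifying that $2\times 2$ average pooling is still the orthogonal projection and that the three finest tensor-product wavelets exactly span $U_{-j+1}$, rather than any conceptual difficulty, since the one-dimensional argument transfers through the tensor-product structure.
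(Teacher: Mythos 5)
Your proposal is correct and follows essentially the same route as the paper's proof: both identify average pooling with the $L^2$-orthogonal projection onto $V_{-j+1}$ (the paper does this via the local computation $\int_\Xb \tilde c_2\psi + \tilde c_1\phi_1 = \tilde c_1$, i.e.\ the vanishing moment of the mother wavelet, which is your ``averaging over coarse cells'' fact in disguise) and then use orthonormality and nesting of the Haar basis to equate that orthogonal projection with coordinate truncation. The only organisational difference is that the paper reduces to the single local step $\text{pool}_{-2,-1}$ by scaling/translation and handles two dimensions by tensor products, exactly as you anticipate in your final paragraph.
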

\begin{wrapfigure}[9]{r}{.4\textwidth}
\vspace{-2em}
\begin{tikzcd}
	{(V_{-j},\bE_j)} && {(V_{-j+1},\bE_{j-1})} \\
	{(V_{-j},\bPsi_j)} && {(V_{-j+1},\bPsi_{j-1})}
	\arrow["{\text{pool}_{-j,-j+1}}", from=1-1, to=1-3]
	\arrow["{\proj_{V_{-j+1}}}"', dashed, from=2-1, to=2-3]
	\arrow["{\pi_{j-1}}"', from=1-3, to=2-3]
	\arrow["{\pi_j^{-1}}"', from=2-1, to=1-1]
\end{tikzcd}
\caption{%
The function space $V_{-j}$ remains the same, but the basis changes under $\pi_j$.
}  %
\label{fig:change_basis_map}
\end{wrapfigure}
Theorem~\ref{thm:conj} means that if we project an image from $V_{-j}$ to $V_{-j+1}$ in the Haar wavelet basis, we can alternatively view this as changing to the standard basis via $\pi_j^{-1}$, performing average pooling, and reverting back via $\pi_{j-1}$~(see Figure~\ref{fig:change_basis_map}).
This is important because the Haar basis is orthonormal, which in Theorem \ref{thm:truncation} allows us to precisely quantify what information is lost with average pooling.

\begin{theorem}\label{thm:truncation}
Let $\{V_{-j}\}_{j=0}^J$ be a multi-resolution hierarchy of $V_{-J}$ where $V_{-j} = V_{-j+1} \oplus U_{-j+1}$, and further, let $F_{j,\phi}, \, B_{j,\theta} : \Db(V_{-j}) \mapsto \Db(V_{-j})$ be such that $ B_{j,\theta} F_{j,\phi} = I$ with parameters $\phi$ and $\theta$.
Define $\bm{F}_{j_1|j_2,\phi} \coloneqq \bm{F}_{j_1,\phi} \circ  \cdots  \circ  \bm{F}_{j_2,\phi}$ by $\bm{F}_{j,\phi}: \Db(V_{-j}) \mapsto \Db(V_{-j+1})$ where $\bm{F}_{j,\phi} \coloneqq \proj_{V_{-j+1}} \circ F_{j,\phi}$, and analogously define $\bm{B}_{j_1|j_2,\theta}$ with
$\bm{B}_{j,\theta} \coloneqq B_{j,\theta} \circ \embd_{V_{-j} }$. 
Then, the sequence $\{ \bm{B}_{1|j,\theta} (\bm{F}_{1|J,\phi} \nu_{J}) \}_{j=0}^{J}$ forms a discrete multi-resolution bridge between $\bm{F}_{1|J,\phi} \nu_{J}$ and $\bm{B}_{1|J,\theta} \bm{F}_{1|J,\phi} \nu_{J}$ at times $\{t_j\}_{j=1}^{J}$, and
\vspace{-0.5em}
\begin{align}\label{eq:truncation_bound}
    \sum_{j=0}^{J}
    \Eb_{X_{t_j} \sim \nu_{j} }{\norm{\text{\emph{proj}}_{U_{-j+1} }X_{t_j} }_2^2}/{\norm{\bm{F}_{j|J,\phi}}_2^2} 
    \leq 
    (\mathcal{W}_2( \bm{B}_{1|J,\theta} \bm{F}_{1|J,\phi} \nu_{J} , \nu_{J}))^2,
\end{align}
\vspace{-0.5em}
where $\mathcal{W}_2$ is the Wasserstein-$2$ metric and $\norm{\bm{F}_{j|J,\phi}}_2$ is the Lipschitz constant of $\bm{F}_{j|J,\phi}$.
\end{theorem}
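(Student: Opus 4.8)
The plan is to treat the two claims separately: the multi-resolution bridge property is essentially bookkeeping of supports, whereas the inequality \eqref{eq:truncation_bound} is the substantive part and requires a lower bound on the Wasserstein cost that decouples across resolution levels.

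For the bridge claim I would simply track where each map sends its input. Since $\bm{B}_{1|j,\theta}$ is built from the embeddings $\embd_{V_{-j}}$ and the level-wise operators and therefore has range inside $V_{-j}$, the pushforward $\bm{B}_{1|j,\theta}(\bm{F}_{1|J,\phi}\nu_J)$ is supported on $V_{-j}$; assigning it to the interval $[t_j,t_{j+1})$ immediately gives condition (i) of Definition \ref{def:multi-res-bridge}, with $t_0=0$ producing the coarse end $\bm{F}_{1|J,\phi}\nu_J\in\Db(V_0)$ and $t_J$ the full reconstruction $\bm{B}_{1|J,\theta}\bm{F}_{1|J,\phi}\nu_J$. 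Condition (ii) holds because the sequence is constant on each $[t_j,t_{j+1})$, so continuity within an interval is automatic. This step is routine.

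For the inequality the key identity I would exploit is $B_{j,\theta}F_{j,\phi}=I$: writing $P_j \coloneqq \proj_{V_{-j+1}}$ and $Q_j\coloneqq\proj_{U_{-j+1}}=I-P_j$ as orthogonal projections on $V_{-j}$ (orthogonality being exactly the Haar multi-resolution structure of Theorem \ref{thm:conj}), the single-level round trip satisfies $\bm{B}_{j,\theta}\bm{F}_{j,\phi}=B_{j,\theta}P_jF_{j,\phi}=I-B_{j,\theta}Q_jF_{j,\phi}$, so the only error injected at level $j$ is carried by the discarded detail $Q_j$. I would then telescope the global reconstruction error $X-\bm{B}_{1|J,\theta}\bm{F}_{1|J,\phi}X$ over $j=1,\dots,J$ by inserting the partial reconstructions $\bm{B}_{j+1|J,\theta}(Y_j)$, where $Y_j=\bm{F}_{j+1|J,\phi}X$ is the forward state at level $j$; using linearity of the level operators (as in the operator setting of Theorem \ref{thm:id-diff}), consecutive differences isolate precisely $B_{j,\theta}Q_jF_{j,\phi}$ applied to $Y_j$ and transported up by $\bm{B}_{j+1|J,\theta}$. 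Because the subspaces $\{U_{-j+1}\}$ are mutually orthogonal, I expect this telescoped error to split into a sum of squared per-level contributions rather than suffering cross terms. To pass from the easy upper bound $\mathcal{W}_2^2(\bm{B}_{1|J,\theta}\bm{F}_{1|J,\phi}\nu_J,\nu_J)\le\Eb_{\nu_J}\norm{X-\bm{B}_{1|J,\theta}\bm{F}_{1|J,\phi}X}_2^2$ (via the graph coupling) to the required lower bound, I would instead invoke the contraction inequality $\mathcal{W}_2(g_\#\mu,g_\#\eta)\le\norm{g}_2\,\mathcal{W}_2(\mu,\eta)$ for a Lipschitz $g$, applied with $g=\bm{F}_{j|J,\phi}$: pushing the two measures down to the level-$j$ space contracts distances by at most $\norm{\bm{F}_{j|J,\phi}}_2$, so the detail content $\Eb\norm{Q_jX_{t_j}}_2^2$ visible at the reconstruction level lower-bounds the full-resolution transport cost after dividing by $\norm{\bm{F}_{j|J,\phi}}_2^2$, which is exactly the normalisation appearing in \eqref{eq:truncation_bound}.

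The main obstacle I anticipate is the direction of the Wasserstein estimate: the identity or graph coupling only yields an upper bound on $\mathcal{W}_2^2$, whereas \eqref{eq:truncation_bound} needs $\mathcal{W}_2^2$ on the larger side, so the entire argument hinges on producing a genuine lower bound on the optimal transport cost that survives for an arbitrary optimal coupling rather than the graph coupling alone. Making the Lipschitz-pushforward lower bound interact cleanly with the orthogonal telescoping — so that the per-level normalisations $1/\norm{\bm{F}_{j|J,\phi}}_2^2$ enter with the correct index and the cross terms genuinely vanish under orthogonality of the $U_{-j+1}$ — is where I expect the delicate bookkeeping and the real content of the proof to lie.
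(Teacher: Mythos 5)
Your plan follows essentially the same route as the paper's proof: the bridge claim is support bookkeeping, and the inequality is obtained by telescoping the reconstruction error across levels (with orthogonality of the $U_{-j+1}$ eliminating cross terms) and then converting each discarded detail energy into a lower bound on the transport cost via the Lipschitz pushforward contraction applied to the forward operator together with $B_{j,\theta}F_{j,\phi}=I$. This is precisely the mechanism of the paper's single-level measure lemma (its Proposition on $\overline{F}_j,\overline{B}_j$) iterated over $j$ with the diagonal normalisation by $\norm{\bm{F}_{j|J,\phi}}_2^2$, so you have correctly identified both the central difficulty (the direction of the Wasserstein estimate) and the paper's resolution of it.
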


Theorem \ref{thm:truncation} states that the bottleneck component of a U-Net
pushes the latent data distribution to a finite multi-resolution basis, specifically a Haar basis when average pooling is used.
To see this, note that the RHS of Eq. \eqref{eq:truncation_bound} is itself upper-bounded by the $L^2$-reconstruction error.
This is because the Wasserstein-2 distance finds the infinimum over all possible couplings between the data and the `reconstruction' measure, hence any coupling (induced by the learned model) bounds it. 
Note that models using a U-Net, for instance HVAEs or diffusion models, either directly or indirectly optimise for low reconstruction error in their loss function.
The LHS of Eq. \eqref{eq:truncation_bound} represents what percentage of our data enters the orthogonal subspaces $\{U_{-j}\}_{j=0}^{J}$ which are (by Theorem \ref{thm:conj}) discarded by the bottleneck structure when using a U-Net architecture with average pooling. 
Theorem \ref{thm:truncation} thus shows that as we minimise the reconstruction error during training, we minimise the percentage of our data transported to the orthogonal sub-spaces $\{U_{-j}\}_{j=0}^J$.
Consequently, the bottleneck architecture implicitly decomposes our data into a Haar wavelet decomposition, and when the skip connections are absent (like in a traditional auto-encoder) our network learns to compress the discarded subspaces $U_{-j}$.
This characterises the regularisation imposed by a U-Net in the absence of skip connections.

These results suggest that U-Nets with average pooling provide a direct alternative to Fourier features~\cite{Kingma2021VariationalModels,tancik2020fourier,wang2020high,rahimi2007random} which impose a Fourier basis, an alternative orthogonal basis on $L^2(\Xb)$, as with skip connections the U-Net adds each subspace $U_{-j}$ sequentially. 
However, unlike Fourier bases, there are in fact a multitude of wavelet bases which are all encompassed by the multi-resolution framework, and in particular, Theorem \ref{thm:truncation} pertains to all of them for the bottleneck structure. 
This opens the door to exploring conjugacy operations beyond average pooling induced by other wavelet bases optimised for specific data types.  %

\subsection{Example: HVAEs as Diffusion Discretisations}
\label{sec:Example: HVAEs are Sum Representation Diffusion Discretisations}

To show what practical inferences we can derive from our multi-resolution framework, we apply it to analyse state-of-the-art HVAE architectures (see Appendix \ref{app:Hierarchical VAEs} for an introduction), identifying parameter redundancies and instabilities.
Here and in our experiments, we focus on VDVAEs~\cite{Child2020VeryImages}. 
We provide similar results for Markovian HVAEs~\cite{LVAE, burda2015importance} %
and NVAEs~\cite{Vahdat2020NVAE:Autoencoder} (see \S~\ref{sec:Related_work}) in Appendix \ref{app:Forward Euler Diffusion Approximations}.

We start by inspecting VDVAEs.
As we show next, we can tie the computations in VDVAE cells to the (forward and backward) operators $F_{j,\phi}$ and $B_{j,\theta}$ within our framework and identify them as a type of two-step forward Euler discretisation of a diffusion process.
When used with a U-Net, as is done in VDVAE \cite{Child2020VeryImages}, this creates a \emph{multi-resolution diffusion bridge} by Theorem \ref{thm:vdvae_sde}.

\begin{theorem} 
\label{thm:vdvae_sde}
Let $t_{J} \coloneqq T \in (0,1)$ and consider (the $p_{\theta}$ backward pass) $\bm{B}_{\theta,1|J}: \Db(V_{-J}) \mapsto \Db(V_{0})$ given in multi-resolution Markov process in the standard basis:
\begin{align}
d Z_t = (\bmu_{1,t} (Z_t) + \bmu_{2,t} (Z_t) )dt + \bsig_t(Z_t) dW_t,  \label{eq:coupled_sdes}  
\end{align}
where  $\proj_{U_{-j}}Z_{t_j} = 0,  \ \|Z_t \|_2 > \|Z_s \|_2$ with $0\leq s<t \leq T$ and for a measure $\nu_{J} \in \Db(V_{-J})$ we have $X_T, \, Z_0 \sim \bm{F}_{\phi,J|1}\nu_{J} = \delta_{\{0\}}$.
Then, VDVAEs approximates this process, and its residual cells are a type of two-step forward Euler discretisation of this Stochastic Differential Equation (SDE). %
\end{theorem}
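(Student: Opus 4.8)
The plan is to prove this by directly reading off the update equations of a VDVAE generative residual cell, matching them term-by-term against a forward Euler (Euler--Maruyama) step of the SDE \eqref{eq:coupled_sdes}, and then gluing the per-resolution dynamics together using the machinery already established in Theorem \ref{thm:id-diff}. First I would write the decoder cell explicitly: a VDVAE topdown cell maintains a running state $Z$ and applies, in sequence, a deterministic residual update $Z \mapsto Z + r_\theta(Z)$ (a ResNet block) followed by a stochastic update in which a latent is drawn from a Gaussian $\mathcal{N}(m_\theta(Z), s_\theta(Z)^2)$ and added back into the state. Comparing against one Euler--Maruyama step $Z_{t+\Delta t} = Z_t + \bmu_t(Z_t)\Delta t + \bsig_t(Z_t)\sqrt{\Delta t}\,\epsilon$, I would identify the two residual additions of the cell with the two drift contributions $\bmu_{1,t}$ and $\bmu_{2,t}$ (hence the \emph{two-step} nature), and identify the sampled Gaussian increment with the Brownian term $\bsig_t(Z_t)\,dW_t$. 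The interval length $\Delta t = t_{j+1}-t_j$ is absorbed into the learned coefficients, which is exactly why this is only \emph{a type of} forward Euler scheme: the step sizes are implicit in the trained network rather than prescribed.

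Next I would handle the multi-resolution structure. On each interval $[t_j,t_{j+1})$ the cell acts within the fixed space $V_{-j}$, so the identification above yields a finite-dimensional diffusion operator $B_j$ with coefficients $\bmu^{(j)},\bsig^{(j)}$ supported on $V_{-j}$. Invoking Theorem \ref{thm:id-diff} with these operators, the indicator-weighted sum $\bm{B} = \sum_{j} \1_{[t_j,t_{j+1})}\cdot\bm{B}_j$ induces a multi-resolution bridge, and the embedding maps $\embd_{V_{-j}}$ (interpolation) inserted between cells at successive resolutions realise the transition from $V_{-j+1}$ to $V_{-j}$ at each breakpoint $t_j$. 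This composition is precisely the backward pass $\bm{B}_{\theta,1|J}$ of the statement, so the VDVAE decoder is seen to approximate the claimed multi-resolution diffusion process.

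I would then verify the boundary and growth conditions of Definition \ref{def:multi-res-bridge}. Because the hierarchy is scaled so that $V_0 = \{0\}$, any measure pushed all the way down by the forward pass $\bm{F}_{\phi,J|1}$ is supported on $\{0\}$ and must therefore equal $\delta_{\{0\}}$, giving $Z_0 \sim \delta_{\{0\}}$. The constraint $\proj_{U_{-j}}Z_{t_j}=0$ records that immediately after embedding into the finer space $V_{-j}$ no energy yet lives in the freshly added detail subspace $U_{-j+1}$; the network then populates that subspace over $[t_j,t_{j+1})$, producing the monotone norm growth $\norm{Z_t}_2 > \norm{Z_s}_2$ as the support dimension increases. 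This confirms the process flows \emph{out of} the point mass as resolution grows.

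The main obstacle I anticipate is twofold. The delicate part is justifying that the discrete VDVAE updates converge, as the implied step size shrinks, to a genuine SDE rather than merely resembling one Euler--Maruyama step: this requires arguing consistency of the scheme and well-posedness of the limiting dynamics, and in particular confronting the singular behaviour near $t=0$, where flowing out of the point mass $\delta_{\{0\}}$ forces the diffusion coefficient $\bsig_t$ to degenerate---this is the very source of the sampling instabilities the paper later exploits. The second subtlety is pinning down precisely which two operations in the concrete cell constitute the ``two steps'': one must disentangle the prior and posterior residual paths and the gating from the genuine drift contributions so that the identification $\bmu_{1,t}+\bmu_{2,t}$ is unambiguous rather than a loose analogy. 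I expect most of the technical weight of the argument to fall on these two points, with the gluing and boundary-condition verification following routinely from Theorem \ref{thm:id-diff} and the scaling $V_0=\{0\}$.
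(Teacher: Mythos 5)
Your proposal is correct and follows essentially the same route as the paper: the paper likewise matches the VDVAE cell term-by-term to a forward-Euler scheme, making the ``two steps'' precise as an operator splitting of Eq.~\eqref{eq:coupled_sdes} into a stochastic sub-equation $dZ^{(1)}_t = \bmu_{1,t}dt + \bsig_t dW_t$ and a deterministic one $dZ^{(2)}_t = \bmu_{2,t}dt$, each given one Euler step in sequence with the intermediate state passed between them, and it obtains $Z_0 \sim \delta_{\{0\}}$ from Theorems~\ref{thm:truncation} and~\ref{thm:discrete-VDVAE} together with VDVAE's zero initialisation. The convergence-as-$\Delta t \to 0$ analysis you flag as the main obstacle is not attempted in the paper --- the claim is only that the cell \emph{is a type of} discretisation, so the identification itself is the whole proof.
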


To better understand Theorem \ref{thm:vdvae_sde}, we visualise its residual cell structure of VDVAEs and the corresponding discretisation steps in Fig. \ref{fig:hvae_cell_vdvae}, and together those of NVAEs and Markovian HVAEs in Appendix \ref{app:Forward Euler Diffusion Approximations}, Fig. \ref{fig:hvae_cells_all}.
Note that this process is Markov and increasing in the $Z_i$ variables. 
Similar processes have been empirically observed as efficient first-order approximates to higher-order chains, for example the memory state in LSTMs \cite{hochreiter1997long}.
Further, VDVAEs and NVAEs are even claimed to be high-order chains (see Eqs. (2,3) in \cite{Child2020VeryImages} and Eq. (1) in \cite{Vahdat2020NVAE:Autoencoder}), despite only approximating this with a accumulative process.  %

\begin{wrapfigure}[21]{r}{.5\textwidth}
\vspace{-1.5em}
\centering
\includegraphics[width=1\linewidth]{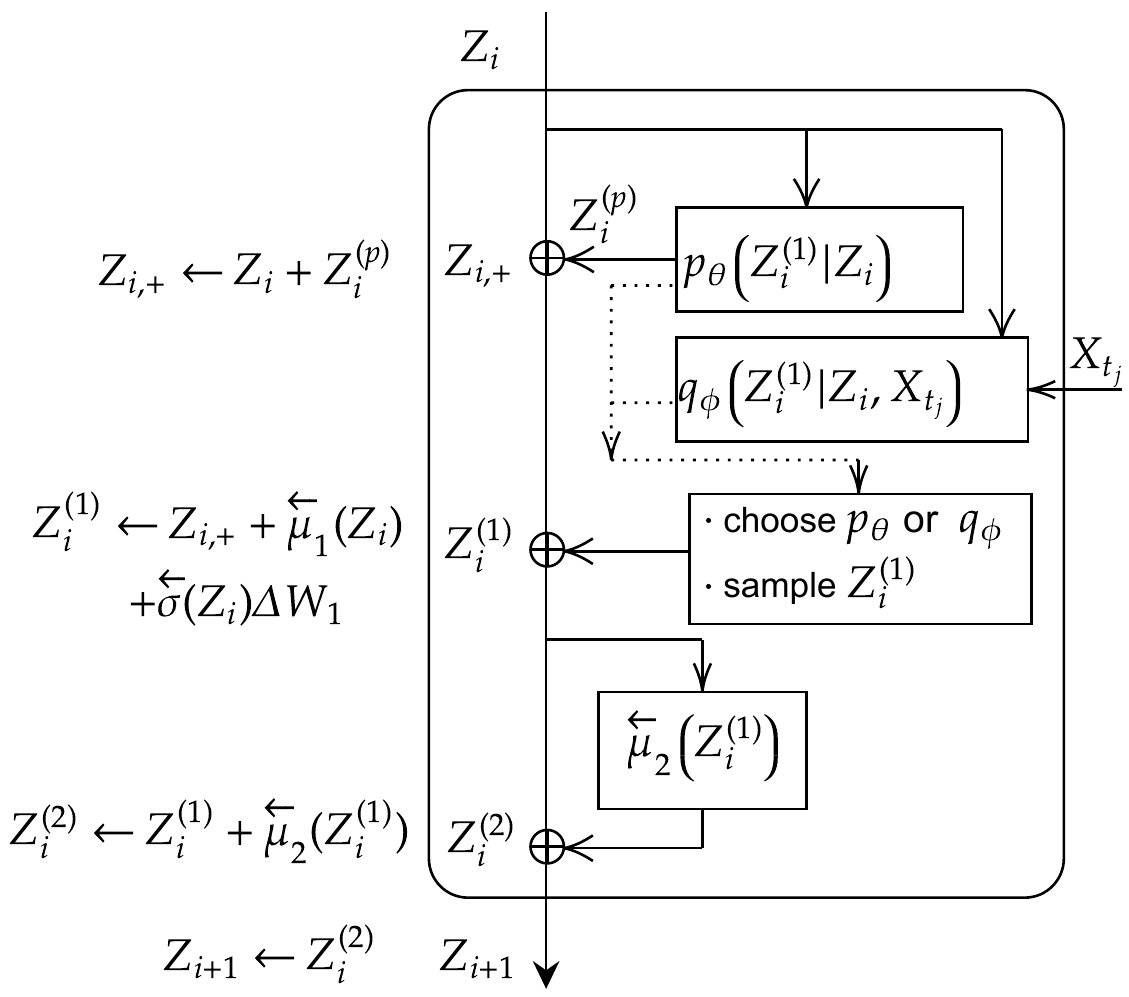}
\caption{
The VDVAE \cite{Child2020VeryImages} cell is a type of two-step forward Euler discretisations of the continuous-time diffusion process in Eq. \ref{eq:coupled_sdes}.
See Fig. \ref{fig:hvae_cells_all} for similar schemas on NVAE \cite{Vahdat2020NVAE:Autoencoder} and Markovian HVAE \cite{LVAE, burda2015importance}.
}  
\label{fig:hvae_cell_vdvae}
\end{wrapfigure}

To show how VDVAEs impose the growth of the $Z_t$, we prove that the bottleneck component of VDVAE's U-Net enforces $Z_0 =0$.
This is done by identifying that the measure $\nu_0$, which a VDVAE connects to the data $\nu_{\infty}$ via a multi-resolution bridge, is a point mass on the zero function.
Consequently the backward pass must grow from this, and the network learns this in a monotonic manner as we later confirm in our experiments (see \S\ref{sec:HVAEs secretly represent time and make use of it}).

\begin{theorem}\label{thm:discrete-VDVAE}
Consider the SDE in Eq.~\eqref{eq:coupled_sdes}, trained through the ELBO in Eq. \ref{eq:hvae_elbo}.
Let $\tilde{\nu}_{J}$ denote the data measure and $\nu_{0}= \delta_{\{0\}}$ be the initial multi-resolution bridge measure imposed by VDVAEs.
If $q_{\phi,j}$ and $p_{\theta,j}$ are the densities of $B_{\phi,1|j}\bm{F}_{J|1}\tilde{\nu}_{J}$ and  $B_{\theta,1|j}\nu_{0}$ respectively, then a VDVAE optimises the boundary condition $\min_{\theta,\phi} KL(q_{\phi,0,1} || q_{\phi,0}p_{\theta,1} )$, where a double index indicates the joint distribution.
\end{theorem}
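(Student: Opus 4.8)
The plan is to begin from the VDVAE ELBO in Eq.~\ref{eq:hvae_elbo} and isolate, via the hierarchical chain-rule decomposition of its KL term, the single contribution acting at the coarsest resolution, i.e. at the boundary $t_0 = 0$ of the multi-resolution bridge. I would first recall that the generative (prior) trajectory is produced by the backward operators acting on the initial measure, so that $p_{\theta,j}$ is the density of $\bm{B}_{\theta,1|j}\nu_0$, while the inference trajectory pushes the data to the coarsest level and then reconstructs, so that $q_{\phi,j}$ is the density of $\bm{B}_{\phi,1|j}\bm{F}_{J|1}\tilde{\nu}_{J}$. The KL term of the ELBO between the full approximate posterior and the prior then factorises over resolutions,
\[
\KL\big(q_\phi(z_{0:J}\mid x)\,\|\,p_\theta(z_{0:J})\big) \;=\; \sum_{j=0}^{J}\Eb\big[\KL(q_{\phi,j\mid <j}\,\|\,p_{\theta,j\mid <j})\big],
\]
and I would peel off the $j\in\{0,1\}$ contribution as the boundary term, the remaining summands being the interior transitions handled by Theorem~\ref{thm:vdvae_sde}.

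Next I would invoke the defining property $V_0 = \{0\}$ of the multi-resolution hierarchy (Definition~\ref{def:multi-res-bridge}), which forces $\text{supp}(\nu_0)\subset V_0 = \{0\}$ and hence $\nu_0 = \delta_{\{0\}}$, so that $q_{\phi,0}=p_{\theta,0}=\delta_{\{0\}}$ and the level-$1$ prior marginal collapses to $p_{\theta,1}(z_1) = \int p_\theta(z_1\mid z_0)\,\nu_0(dz_0) = p_\theta(z_1\mid z_0 = 0)$. Substituting these and applying the chain rule $q_{\phi,0,1}(z_0,z_1)=q_{\phi,0}(z_0)\,q_{\phi,1\mid 0}(z_1\mid z_0)$, the $q_{\phi,0}$ factor cancels in the logarithm and the boundary term rewrites as
\[
\Eb_{q_{\phi,0}}\big[\KL(q_{\phi,1\mid 0}\,\|\,p_{\theta,1})\big] \;=\; \KL\big(q_{\phi,0,1}\,\|\,q_{\phi,0}\,p_{\theta,1}\big),
\]
which is exactly the claimed objective. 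I would then conclude that, since maximising the ELBO minimises each KL summand, the VDVAE in particular minimises this boundary term over $(\theta,\phi)$, and that its minimiser is consistent with — and thereby enforces — the point-mass initial condition $\nu_0=\delta_{\{0\}}$.

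I expect the main obstacle to be the measure-theoretic bookkeeping around the degenerate coarsest level: because $\nu_0=\delta_{\{0\}}$ is a point mass it carries no Lebesgue density, so the densities $q_{\phi,0}$ and $p_{\theta,0}$ must be interpreted on the zero-dimensional space $V_0=\{0\}$, and I would need to verify that $\KL(q_{\phi,0,1}\,\|\,q_{\phi,0}\,p_{\theta,1})$ remains well defined (i.e. the relevant absolute continuity holds despite this degeneracy, the cancellation of the $q_{\phi,0}$ factor being the mechanism that keeps it finite). A secondary difficulty is aligning the operator notation of the framework ($\bm{B}_{\phi,1|j}$, $\bm{F}_{J|1}$) with the concrete top-down VDVAE posterior/prior factorisation, so that the per-level KL summands match the inference- and generative-trajectory densities $q_{\phi,j}$ and $p_{\theta,j}$ exactly, including the correct ordering of the hierarchy.
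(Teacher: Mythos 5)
Your proposal matches the paper's proof in essentials: both isolate the top-level KL term of the ELBO, use the zero-initialisation of the prior's first latent to collapse $p_{\theta,1|0}(\,\cdot\,\mid Z_0=0)$ to the marginal $p_{\theta,1}$, and then rewrite the expected conditional KL $\Eb_{x\sim q_{\phi,0}}\bigl[\KL(q_{\phi,1|0}(\,\cdot\,\mid x)\,\|\,p_{\theta,1})\bigr]$ as the joint KL $\KL(q_{\phi,0,1}\,\|\,q_{\phi,0}\,p_{\theta,1})$. One small correction: only the prior-side initial measure $\nu_0$ is the point mass $\delta_{\{0\}}$; the encoder-output distribution $q_{\phi,0}$ is not assumed to equal $\delta_{\{0\}}$ (your derivation does not actually use that assertion, so the argument stands).
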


Theorem \ref{thm:discrete-VDVAE} states that the VDVAE architecture forms multi-resolution bridge with the dynamics of Eq. \eqref{eq:coupled_sdes}, and connects our data distribution to the trivial measure on $V_0$: a Dirac mass at $0$ as the pooling here cascades completely to $V_0$.
From this insight, we can draw conclusions on instabilities and on parameter redundancies of this HVAE cell.
There are two major instabilities in this discretisation. 
First, the imposed $\nu_0$ is disastrously unstable as it enforces a data set, with potentially complicated topology to derive from a point-mass in $U_{-j}$ at each $t = t_j$, and we observe the resulting sampling instability in our experiments in \S\ref{sec:Sampling instabilities in HVAEs}.
We note that similar arguments are applicable in settings without a latent hierarchy imposed by a U-Net, see for instance \cite{cornish2020relaxing}.
The VDVAE architecture does, however, bolster this rate through the $Z_{i,+}^{(\sigma)}$ term, which is absent in NVAEs~\cite{Vahdat2020NVAE:Autoencoder}, in the discretisation steps of the residual cell.
We empirically observe this controlled backward error in Fig. \ref{fig:magnitude_increase_mnist} [Right].
We refer to Fig. \ref{fig:hvae_cells_all} for a detailed comparison of HVAE cells and their corresponding discretisation of the coupled SDE in Eq. \eqref{eq:coupled_sdes}.

Moreover, the current form of VDVAEs is over-parameterised and not informed by this continuous-time formulation. 
The continuous time analogue of VDVAEs \cite{Child2020VeryImages} in Theorem \ref{thm:vdvae_sde} has time dependent coefficients $\bmu_{t,1}, \bmu_{t,2}, \bsig_t$.
We hypothesise that the increasing diffusion process in $Z_i$ implicitly encodes time. 
Hence, explicitly representing this in the model, for instance via ResNet blocks with independent parameterisations at every time step, is redundant, and a time-homogeneous model (see Appendix \ref{app:Time-homogenuous model} for a precise formulation)---practically speaking, performing weight-sharing across time time steps/layers---has the same expressivity, but requires far fewer parameters than the state-of-the-art VDVAE. 
It is worth noting that such a time-homogeneous model would make the parameterisation of HVAEs more similar to the recently popular (score-based) diffusion models \cite{sohl2015deep,ho2020denoising} which perform weight-sharing across all time steps.

\section{Experiments}
\label{sec:Experiments}

\begin{figure*}[t]
    \centering
    \includegraphics[width=.42\linewidth]{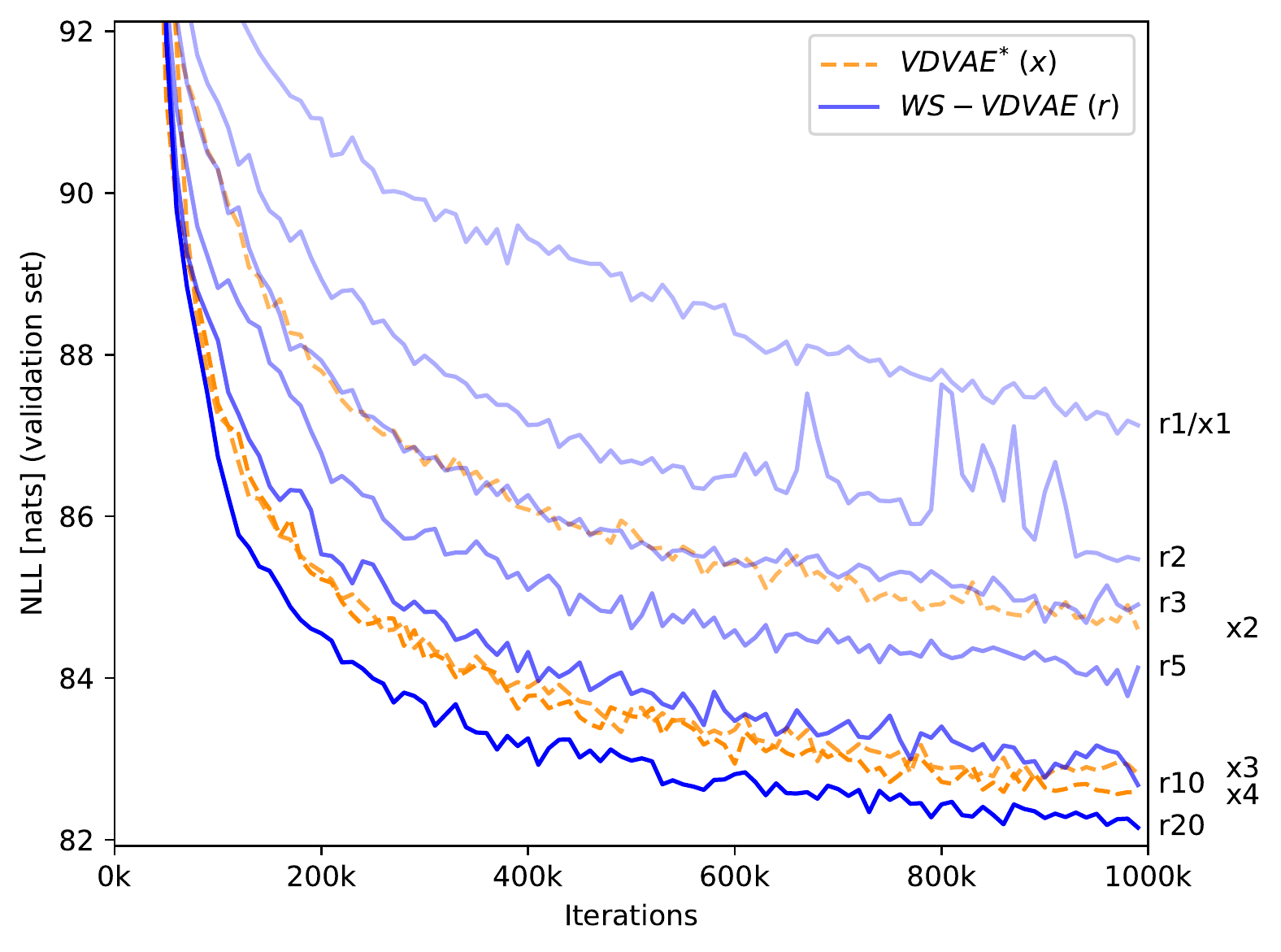}    %
    \includegraphics[width=.42\linewidth]{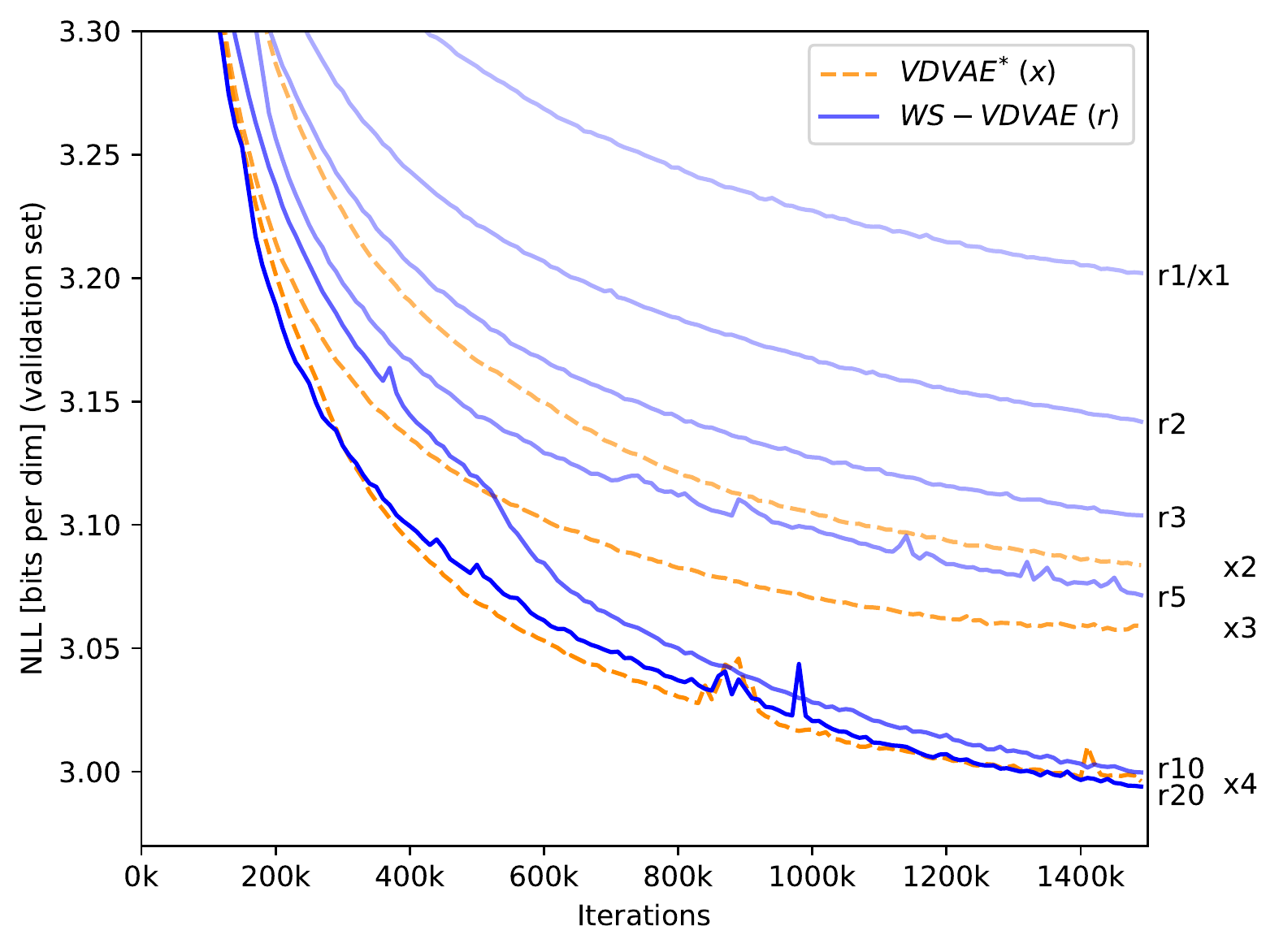}   %
    \caption{
    A small-scale study on parameter efficiency of HVAEs.
    We compare models with with 1,2,3 and 4 parameterised blocks per resolution ($\{x1, x2, x3, x4\}$) against models with a single parameterised block per resolution weight-shared $\{2,3,5,10,20\}$ times ($\{r2,r3,r5,r10,r20\}$).
    We report NLL ($\downarrow$) measured on the validation set of MNIST [left] and CIFAR10 [right].
    NLL performance increases with more weight-sharing repetitions and surpasses models without weight-sharing but with more parameters. 
    }
    \label{fig:small_scale_param_efficiency}
\end{figure*}

\begin{wraptable}{r}{9cm}
\caption{
A large-scale study of parameter efficiency in HVAEs.
We compare our runs of VDVAE with original hyperparameters \cite{Child2020VeryImages} (\texttt{VDVAE$^*$}) against our weight-shared VDVAE (\texttt{WS-VDVAE}). 
While \texttt{WS-VDVAE}s have improved parameter efficiency by a factor of 2, they reach similar NLL as \texttt{VDVAE$^*$} with the simple modification inspired by our framework (weight sharing).
We note that a parameter count cannot be provided for VDM \cite{Kingma2021VariationalModels} as the code is not public and the manuscript does not specify it.
}
\centering
\begin{tabular}{ccccc} \toprule  
Dataset & Method &  Type & \#Params  &  NLL $\downarrow$ \\ \midrule   %
\multirow{4}{*}[0cm]{\rotatebox{90}{\parbox{.98cm}{\textbf{MNIST} $28\times28$}}}  %
& \texttt{WS-VDVAE} (ours) & VAE & \textbf{232k}   & $\leq 79.98$ \\   %
& \texttt{VDVAE$^*$} (ours) & VAE & 339k   & $\leq 80.14$ \\   %
& NVAE \cite{Vahdat2020NVAE:Autoencoder} & VAE & 33m   &  $\leq 78.01$ \\  %
\midrule
\multirow{6}{*}[0cm]{\rotatebox[origin=c]{90}{\parbox{1.5cm}{\textbf{CIFAR10} $32 \times 32$}}} 
& \texttt{WS-VDVAE}  (ours) & VAE & \textbf{25m} & $\leq2.88$   \\  %
& \texttt{WS-VDVAE}  (ours) & VAE & 39m   & $\leq 2.83$ \\  %
& \texttt{VDVAE$^*$} (ours) & VAE & 39m   & $\leq 2.87$ \\  %
& NVAE \cite{Vahdat2020NVAE:Autoencoder} & VAE & 131m   & $\leq2.91$ \\  %
& VDVAE \cite{Child2020VeryImages} & VAE & 39m  &  $\leq2.87$ \\  %
& VDM \cite{Kingma2021VariationalModels} & Diff & --  & $\leq 2.65$ \\  %
\midrule
\multirow{5}{*}[0cm]{\rotatebox{90}{\parbox{1.5cm}{\textbf{ImageNet} $32 \times 32$}}} 
& \texttt{WS-VDVAE}  (ours) & VAE & \textbf{55m}   & $\leq 3.68$ \\  %
& \texttt{WS-VDVAE}  (ours) & VAE & 85m   & $\leq 3.65$ \\  %
& \texttt{VDVAE$^*$} (ours) & VAE & 119m   & $\leq 3.67$ \\  %
& NVAE \cite{Vahdat2020NVAE:Autoencoder} & VAE & 268m   & $\leq3.92$ \\   %
& VDVAE \cite{Child2020VeryImages} & VAE & 119m   & $\leq3.80$ \\   %
& VDM \cite{Kingma2021VariationalModels} & Diff  & --  & $\leq3.72$ \\  %
\midrule
\multirow{3}{*}[0cm]{\rotatebox{90}{\parbox{1.1cm}{\textbf{CelebA} $64 \times 64$}}} 
& \texttt{WS-VDVAE}  (ours) & VAE & \textbf{75m}  & $\leq 2.02$  \\  %
 & \texttt{VDVAE$^*$} (ours) & VAE & 125m  & $\leq2.02$  \\  %
& NVAE \cite{Vahdat2020NVAE:Autoencoder} & VAE & 153m   & $\leq2.03$ \\  %
\bottomrule 
\end{tabular}
\label{tab:sota_quant_comp}
\end{wraptable}

In the following we probe the theoretical understanding of HVAEs gained through our framework, demonstrating its utility in four experimental analyses: (\emph{a}) Improving parameter efficiency in HVAEs, (\emph{b}) Time representation in HVAEs and how they make use of it, (\emph{c}) Sampling instabilities in HVAEs, and (\emph{d}) Ablation studies.

We train HVAEs using VDVAE~\cite{Child2020VeryImages} as the basis model on five datasets: MNIST~\cite{lecun2010mnist}, CIFAR10~\cite{krizhevsky2009learning}, two downsampled versions of ImageNet~\cite{deng2009imagenet,chrabaszcz2017downsampled}, and CelebA~\cite{liu2015faceattributes}, splitting each into a training, validation and test set (see Appendix \ref{app:Datasets} for details).
In general, reported numeric values refer to Negative Log-Likelihood~(NLL) in nats (MNIST) or bits per dim (all other datasets) on the test set at model convergence, if not stated otherwise.
We note that performance on the validation and test set have similar trends in general.
An optional \textit{gradient checkpointing} implementation to trade in GPU memory for compute is discussed in Appendix~\ref{app:Model and training details}.
Appendices \ref{app:Model and training details} and \ref{app:Experimental details} define the HVAE models we train, i.e. $p_\theta(\v{z}_L), p_\theta(\v{z}_l|\v{z}_{>l}), q_\phi(\v{z}_L|\v{x}), q_\phi(\v{z}_l|\v{z}_{>l}, \v{x})$ and  $p_\theta(\v{x}|\vv{z})$, and present additional experimental details and results.
We provide our PyTorch code base at \href{https://github.com/FabianFalck/unet-vdvae}{https://github.com/FabianFalck/unet-vdvae} (see Appendix~\ref{app:Code, computational resources, existing assets used} for details).

\subsection{``More from less'': Improving parameter efficiency in HVAEs}  %
\label{sec:``More from less'': Parameter efficiency in HVAEs}

In \S\ref{sec:Example: HVAEs are Sum Representation Diffusion Discretisations}, we hypothesised that a time-homogeneous model has the same expressivity as a model with time-dependent coefficients, yet uses much less parameters.
We start demonstrating this effect by weight-sharing ResNet blocks across time on a small scale. 
In Fig.~\ref{fig:small_scale_param_efficiency}, we train HVAEs on MNIST and CIFAR10 with $\{1, 2, 3, 4\}$ ResNet blocks (referred to as \{\texttt{x1}, \texttt{x2}, \texttt{x3}, \texttt{x4}\}) in each resolution with spatial dimensions $\{32^2, 16^2, 8^2, 4^2, 1^2\}$ (\texttt{VDVAE$^*$}), and compare their performance when weight-sharing a single parameterised block per resolution $\{2,3,5,10,20\}$ times (referred to as \{\texttt{r2},\texttt{r3},\texttt{r5},\texttt{r10},\texttt{r20}\}; \texttt{WS-VDVAE}), excluding projection and embedding blocks.
As hypothesised by our framework, yet very surprising in HVAEs, NLL after 1m iterations measured on the validation set gradually increases the more often blocks are repeated even though all weight-sharing models have an identical parameter count to the $x1$ model (MNIST: 107k, CIFAR10: 8.7m).  %
Furthermore, the weight-sharing models often outperform or reach equal NLLs compared to \texttt{x2}, \texttt{x3}, \texttt{x4}, all of which have more parameters (MNIST: 140k; 173k; 206k. CIFAR10: 13.0m; 17.3m; 21.6m), yet fewer activations, latent variables, and number of timesteps at which the coupled SDE in Eq. \eqref{eq:coupled_sdes} is discretised.

We now scale these findings up to large-scale hyperparameter configurations. 
We train VDVAE closely following the state-of-the-art hyperparameter configurations in \cite{Child2020VeryImages}, specifically with the same number of parameterised blocks and without weight-sharing (\texttt{VDVAE$^*$}), and compare them against models with weight-sharing (\texttt{WS-VDVAE}) and fewer parameters, i.e. fewer parameterised blocks, in Table~\ref{tab:sota_quant_comp}. %
On all four datasets, the weight-shared models achieve similar NLLs with fewer parameters compared to their counterparts without weight-sharing:
We use $32 \%$, $36 \%$, $54 \%$, and $40 \%$ less parameters on the four datasets reported in Table \ref{tab:sota_quant_comp}, respectively.
For the larger runs, weight-sharing has diminishing returns on NLL as these already have many discretisation steps.
To the best of our knowledge, our models achieve a new state-of-the-art performance in terms of NLL compared to any HVAE on CIFAR10, ImageNet32 and CelebA.
Furthermore, our \texttt{WS-VDVAE} models have stochastic depths of 57, 105, 235, 125, respectively, the highest ever trained.  
In spite of these results, it is worth noting that current HVAEs, and VDVAE in particular remains notoriously unstable to train, partly due to the instabilities identified in Theorem \ref{thm:discrete-VDVAE}, and finding the right hyperparameters helps, but cannot solve this.

\subsection{HVAEs secretly represent time and make use of it}  %
\label{sec:HVAEs secretly represent time and make use of it}

\begin{figure}[t]
    \centering
    \includegraphics[width=.42\linewidth]{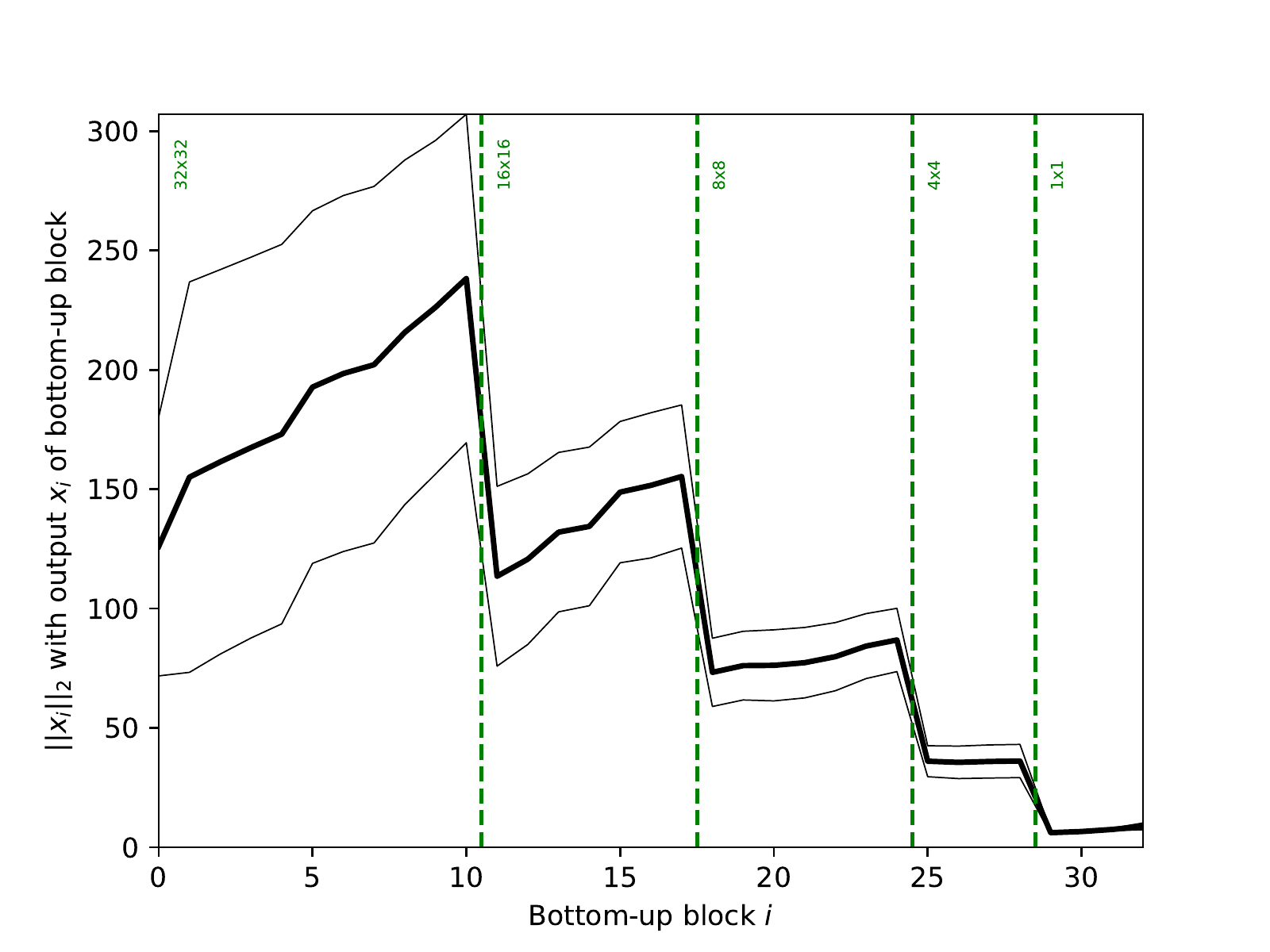}  
    \includegraphics[width=.42\linewidth]{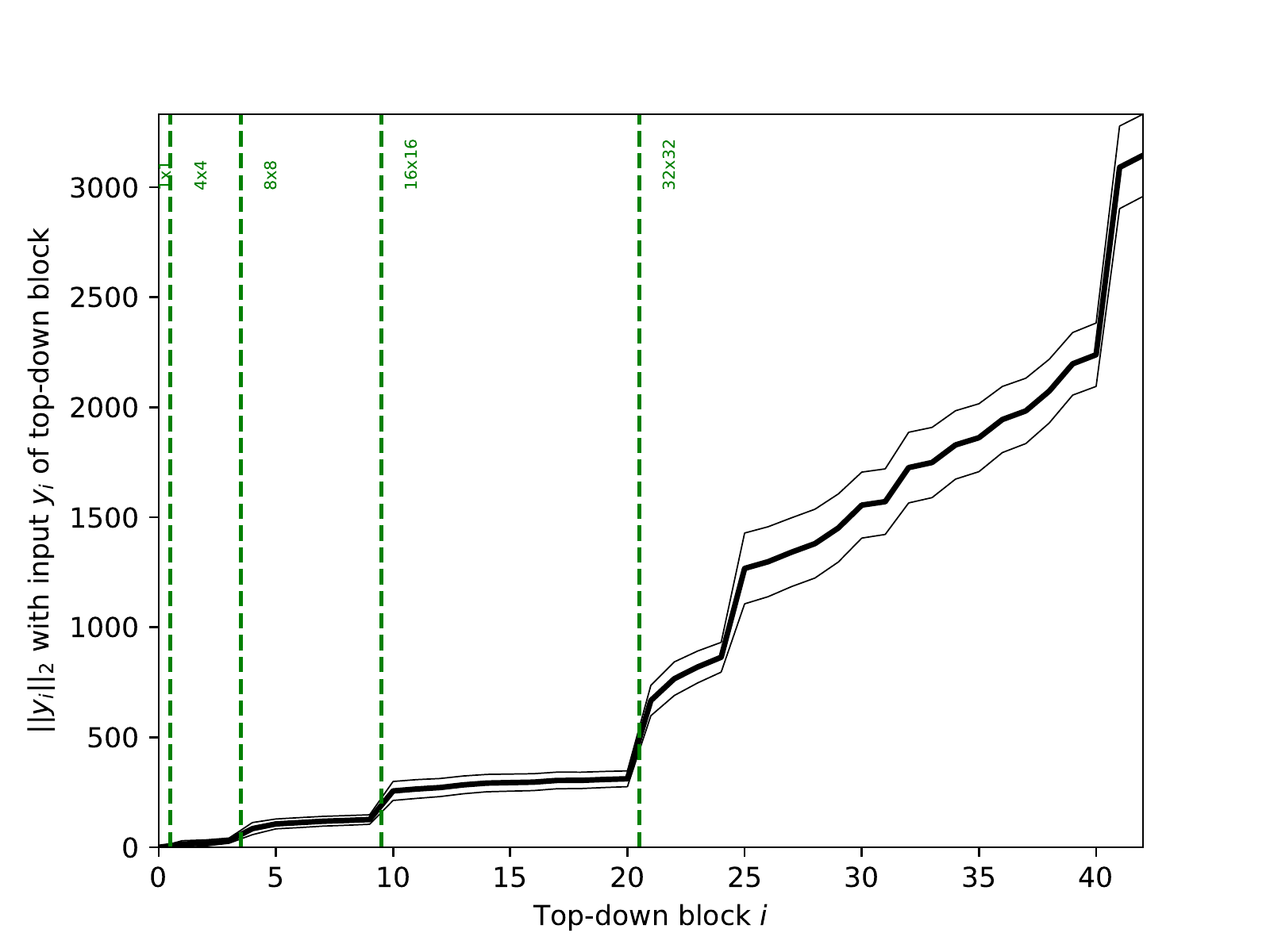}
    \caption{
    HVAEs secretly represent a notion of time: 
    We measure the $L_2$-norm of the residual state for the [Left] forward/bottom-up pass and the [Right] backward/top-down pass over 10 batches with 100 data points each.
    In both plots, the thick, central line refers to the average and the thin, outer lines refer to $\pm 2$ standard deviations.   %
    }
    \label{fig:magnitude_increase_mnist}
\end{figure}

In \S\ref{sec:``More from less'': Parameter efficiency in HVAEs}, we showed how we can exploit insight on HVAEs through our framework to make HVAEs more parameter efficient. 
We now want to explain and understand this behavior further.
In Fig.~\ref{fig:magnitude_increase_mnist}, we measure $\|Z_i \|_2$, the $L_2$-norm of the residual state at every backward/top-down block with index i, over several batches for models trained on MNIST (see Appendix~\ref{app:add_exp_details_results_HVAEs secretly represent time and make use of it} for the corresponding figure of the forward/bottom-up pass, and similar results on CIFAR10 and ImageNet32). 
On average, we experience an increase in the state norm across time in every resolution, interleaved by discontinuous `jumps' at the resolution transitions (projection or embedding) where the dimension of the residual state changes. 
This supports our claim in \S\ref{sec:multi-res} that HVAEs discretise multi-resolution diffusion processes which are increasing in the $Z_i$ variables, and hence learn to represent a notion of time in their residual state.

It is now straightforward to ask how HVAEs benefit from this time representation during training: 
As we show in Table~\ref{tab:state_norm}, when normalising the state by its norm at every forward and backward block during training, i.e. forcing a ``flat line'' in Fig.~\ref{fig:magnitude_increase_mnist} [Left], learning deteriorates after a short while, resulting in poor NLL results compared to the runs with a regular, non-normalised residual state.  %
This evidence confirms our earlier stated hypothesis: The time representation in ResNet-based HVAEs encodes information which recent HVAEs heavily rely on during learning.

\subsection{Sampling instabilities in HVAEs}
\label{sec:Sampling instabilities in HVAEs}

\begin{wraptable}{R}{.3\textwidth}  %
\caption{
NLL of HVAEs with and without normalisation of the residual state $Z_i$.
}
\centering
\begin{tabular}{ccc} \toprule  %
Residual state & NLL \\ 
\midrule 
\textbf{MNIST}  \\  
Normalised (\blackcross) & $\leq 464.68$  \\  %
Non-normalised & $\leq 81.69$ \\  %
\midrule 
\textbf{CIFAR10}  \\    
Normalised (\blackcross) & $\leq 6.80$  \\  %
Non-normalised & $\leq 2.93$  \\  %
\midrule 
\textbf{ImageNet}  \\   
Normalised & $\leq 6.76$ \\  %
Non-normalised & $\leq 3.68$ \\  %
\bottomrule
\end{tabular}
\label{tab:state_norm}
\end{wraptable}

High fidelity unconditional samples of faces, e.g. from models trained on CelebA, cover the front pages of state-of-the-art HVAE papers~\cite{Child2020VeryImages,Vahdat2020NVAE:Autoencoder}.
Here, we question whether face datasets are an appropriate benchmark for HVAEs.
In Theorem \ref{thm:discrete-VDVAE}, we identified the aforementioned state-of-the-art HVAEs as flow from a point mass, hypothesising instabilities during sampling.
And indeed, when sampling from our trained \texttt{VDVAE$^*$} with state-of-the-art configurations, we observe high fidelity and diversity samples on MNIST and CelebA, but unrecognisable, yet diverse samples on CIFAR10, ImageNet32 and ImageNet64, in spite of state-of-the-art test set NLLs (see Fig. \ref{fig:uncond_samples_instability} and Appendix \ref{app:add_exp_details_results_Sampling instabilities in HVAEs}).
We argue that MNIST and CelebA, i.e. numbers and faces, have a more uni-modal nature, and are in this sense easier to learn for a discretised  multi-resolution process flowing to a point mass, which is uni-modal, than the other ``in-the-wild'', multi-modal datasets.
Trying to approximate the latter with the, in this case unsuitable, HVAE model leads to the sampling instabilities observed.

\begin{figure}[t]
    \centering
    \includegraphics[width=1.0\linewidth]{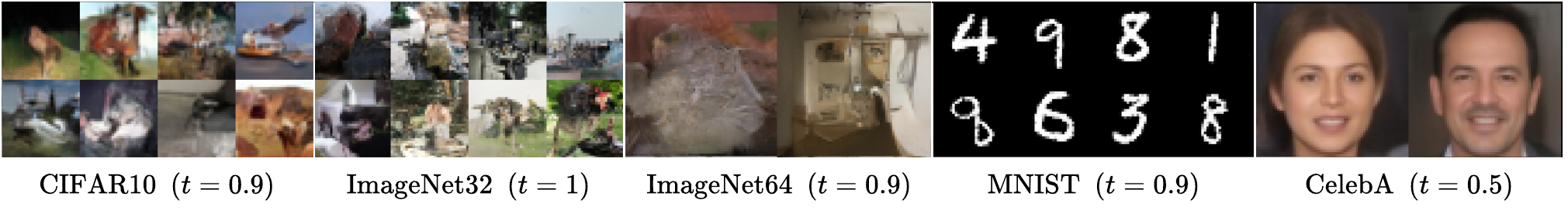}
    \caption{Unconditional samples (not cherry-picked) of \texttt{VDVAE$^*$}. 
    While samples on MNIST and CelebA demonstrate high fidelity and diversity, samples on CIFAR10, ImageNet32 and ImageNet64 are diverse, but are unrecognisable, demonstrating the instabilities identified by Theorem \ref{thm:id-diff}.
    Temperatures $t$ are tuned for maximum fidelity.
    }
    \label{fig:uncond_samples_instability}
\end{figure}

\subsection{Ablation studies}
\label{sec:Ablation studies}

We conducted several ablation studies which support our experimental results and further probe our multi-resolution framework for HVAEs. 
In this section we note key findings---a detailed account of all ablations can be found in Appendix \ref{app:add_exp_details_results_Ablation_studies}.
In particular, we find that the number of latent variables, which correlates with stochastic depth, does not explain the performance observed in \S\ref{sec:``More from less'': Parameter efficiency in HVAEs}, supporting our claims.
We further show that Fourier features do not provide a performance gain in HVAEs, in contrast to state-of-the-art diffusion models, where they can significantly improve performance \cite{Kingma2021VariationalModels}. 
This is consistent with our framework's finding that a U-Net architecture with pooling is already forced to learn a Haar wavelet basis representation of the data, hence introducing another basis does not add value.
We also demonstrate that residual cells are crucial for the performance of HVAEs as they are able to approximate the dynamics of a diffusion process and impose an SDE structure into the model, empirically compare a multi-resolution bridge to a single-resolution model, and investigate synchronous vs. asynchronous processing in time between the forward and backward pass. %

\section{Related work}
\label{sec:Related_work}

\textbf{U-Nets. }
A U-Net~\cite{ronneberger2015u} is an autoencoding architecture with multiple resolutions where skip connections enable information to pass between matched layers on opposite sides of the autoencoder's bottleneck.
These connections also smooth out the network's loss landscape~\cite{visualising_loss_landscape}.
In the literature, U-Nets tend to be convolutional, and a wide range of different approaches have been used for up-sampling and down-sampling between resolutions, with many using average pooling for the down-sampling operation \cite{DDPM,DDIM,song2020score,diffusion_models_beat_gans,Kingma2021VariationalModels}.
In this work, we focus on U-Nets as operators on measures interleaved by average pooling as the down-sampling operation (and a corresponding inclusion operation for up-sampling), and we formally characterise U-Nets in Section \ref{sec:Multi-Resolution Framework: Definitions and Intuition} and Appendix \ref{app:U-Net Model}.
Prior to our work, the dimensionality-reducing bottleneck structure of U-Nets was widely acknowledged as being useful, however it was unclear what regularising properties a U-Net imposes. 
We provided these in~\S\ref{sec:multi-res}.

\textbf{HVAEs. }The evolution of HVAEs can be seen as a quest for a parameterisation with more expressiveness than single-latent-layer VAEs \cite{kingma2013auto}, while achieving stable training dynamics that avoid common issues such as posterior collapse~\cite{LVAE,bowman2015generating} or exploding gradients.
Early HVAEs such as LVAE condition each latent variable directly on only the previous one by taking samples forward \cite{LVAE, burda2015importance}. 
Such VAEs suffer from stability issues even for very small stochastic depths.
\textit{Nouveau VAEs (NVAE)}~\cite{Vahdat2020NVAE:Autoencoder} and \textit{Very Deep VAEs (VDVAE)}~\cite{Child2020VeryImages} combine the improvements of several earlier HVAE models (see Appendix \ref{sec:Background} for details), while scaling up to larger stochastic depths. 
Both use ResNet-based backbones, sharing parameters between the generative and recognition parts of the model.  %
VDVAE is the considerably simpler approach, in particular avoiding common tricks such as a warm-up deterministic autoencoder training phase or data-specific initialisation.
VDVAE achieves a stochastic depth of up to 78, improving performance with more ResNet blocks.  %
Worth noting is that while LVAE and NVAE use convolutions with appropriately chosen stride to jump between resolutions, VDVAE use average pooling.  %
In all HVAEs to date, a theoretical underpinning which explains architectural choices, for instance the choice of residual cell, is missing, and we provided this in Section \S\ref{sec:Example: HVAEs are Sum Representation Diffusion Discretisations}.  %

\section{Conclusion}
\label{sec:Conclusion}

In this work, we introduced a multi-resolution framework for U-Nets.
We provided theoretical results which uncover the regularisation property of the U-Nets bottleneck architecture with average pooling as implicitly learning a Haar wavelet representation of the data.
We applied our framework to HVAEs, identifying them as multi-resolution diffusion processes flowing to a point mass.  
We characterised their backward cell as a type of two-step forward Euler discretisations, providing an alternative to score-matching to approximate a continuous-time diffusion process \cite{song2020score,de2021diffusion}, and observed parameter redundancies and instabilities.  %
We verified the latter theoretical insights in both small- and large-scale experiments, and in doing so trained the deepest ever HVAEs.   %
We explained these results by showing that HVAEs learn a representation of time and performed extensive ablation studies.

An important limitation is that the proven regularisation property of U-Nets is limited to using average pooling as the down-sampling operation. 
Another limitation is that we only applied our framework to HVAEs, though it is possible to apply it to other model classes.
It could also be argued that the lack of exhaustive hyperparameter optimisation performed is a limitation of the work as it may be possible to obtain improved results.
We demonstrate, however, that simply adding weight-sharing to the hyperparameter settings given in the original VDVAE paper~\cite{Child2020VeryImages} leads to state-of-the-art performance with improved parameter efficiency, and hence view it as a strength of our results.

\newpage

\begin{ack}
Fabian Falck acknowledges the receipt of studentship awards from the Health Data Research UK-The Alan Turing Institute Wellcome PhD Programme in Health Data Science (Grant Ref: 218529/Z/19/Z), and the Enrichment Scheme of The Alan Turing Institute under the EPSRC Grant EP/N510129/1.
Chris Williams acknowledges support from the Defence Science and Technology (DST) Group and from a ESPRC DTP Studentship. 
Dominic Danks is supported by a Doctoral Studentship from The Alan Turing Institute under the EPSRC Grant EP/N510129/1.
Christopher Yau is funded by a UKRI Turing AI Fellowship (Ref: EP/V023233/1).
Chris Holmes acknowledges support from the Medical Research Council Programme Leaders award MC\_UP\_A390\_1107, The Alan Turing Institute, Health Data Research, U.K., and the U.K. Engineering and Physical Sciences Research Council through the Bayes4Health programme grant.
Arnaud Doucet acknowledges support of the UK Defence Science and Technology Laboratory (Dstl) and EPSRC grant EP/R013616/1. This is part of the collaboration between US DOD, UK MOD and UK EPSRC under the Multidisciplinary University Research Initiative. 
Arnaud Doucet also acknowledges support from the EPSRC grant EP/R034710/1.
Matthew Willetts is grateful for the support of UCL Computer Science and The Alan Turing Institute.

The authors report no competing interests.

The three compute clusters used in this work were provided by the Alan Turing Institute, the Oxford Biomedical Research Computing (BMRC) facility, and the Baskerville Tier 2 HPC service (https://www.baskerville.ac.uk/) which we detail in the following.
First, this research was supported in part through computational resources provided by The Alan Turing Institute under EPSRC grant EP/N510129/1 and with the help of a generous gift from Microsoft Corporation.
Second, we used the Oxford BMRC facility, a joint development between the Wellcome Centre for Human Genetics and the Big Data Institute supported by Health Data Research UK and the NIHR Oxford Biomedical Research Centre. 
The views expressed are those of the author(s) and not necessarily those of the NHS, the NIHR or the Department of Health.
Third, Baskerville was funded by the EPSRC and UKRI through the World Class Labs scheme (EP/T022221/1) and the Digital Research Infrastructure programme (EP/W032244/1) and is operated by Advanced Research Computing at the University of Birmingham.

We thank Tomas Lazauskas, Jim Madge and Oscar Giles from the Alan Turing Institute’s Research Engineering team for their help and support.
We thank Adam Huffman, Jonathan Diprose, Geoffrey Ferrari and Colin Freeman from the Biomedical Research Computing team at the University of Oxford for their help and support.
We thank Haoting Zhang (University of Cambridge) for valuable comments on the implementation; Huiyu Wang (Johns Hopkins University) for a useful discussion on gradient checkpointing; and Ruining Li and Hanwen Zhu (University of Oxford) for kindly proofreading the manuscript.

\end{ack}

\bibliography{1_bib.bib}  %

\setcounter{equation}{0}
\setcounter{figure}{0}
\renewcommand\theequation{\thesection.\arabic{equation}}
\renewcommand\thefigure{\thesection.\arabic{figure}}
\setcounter{table}{0}
\renewcommand{\thetable}{\thesection.\arabic{table}}

\newpage
\appendix
\begin{appendices}

\section{Framework Details and Technical Proofs} %
\label{app:Proofs}
Here we provide proofs for the theorems in the main paper and additional theoretical results supporting these.

\subsection{Definitions and Notations}
\label{app:Definitions and Notations}

The following provides an index of commonly used notation throughout this manuscript for reference. 

The \textit{function space} of interest in this work is $L^2(\Xb)$, the space of square integrable functions, where $\Xb$ is a compact subset of $\R^{m}$ for some integer m, for instance, $\Xb = [0, 1]$. 
This set of functions is defined as 
\begin{align}
    L^2(\Xb)
    =
    \{ f: \Xb \rightarrow \R \, | \,
    \|f\|_2 < \infty, \,
    f \text{ Borel measurable}
    \}.
\end{align}
$L^2(\Xb)$ forms a vector space with the standard operations.

We denote $V_{-j} \subset L^2(\Xb)$ as a finite-dimensional approximation space.
With the nesting property, $V_{-j+1} \subset V_{-j} $, the space $U_{-j+1}$ is the orthogonal complement of $V_{-j+1}$ within $V_{-j}$, i.e. $V_{-j} = U_{-j+1} \oplus V_{-j+1}$.

The \textit{integration} shorthand notations used are as follows.
For an integrable function $t \mapsto f(t)$, we use 
\begin{align}
    f(t)dt := \int_{0}^t f(s) ds.
\end{align}
The function $f$ may be multi-dimensional in which case we mean the multi-dimensional integral in whichever basis is being used.
For \textit{stochastic} integrals, we only analyse dynamics within the truncation $V_{-J}$ of $L^2(\Xb)$.
In this case, $W_t$ refers to a Brownian motion on the same amount of dimensions as $V_{-J}$ in the \emph{standard}, or `pixel', basis of $V_{-J}$. 
The shorthand 
\begin{align}
    g(W_t) dW_t := \int_{0}^t g(W_s) dW_s,
\end{align}
is used for the standard It\^o integral. 
Last, for a stochastic process $X_t$ on $V_{-J}$ we mean the standard convention
\begin{align}
    \int_{0}^{t} dX_s = X_t - X_0.
\end{align}

For \textit{measures}, we use $\Db$ to prefix a set for which we consider the space of probability measures over: for instance, $\Db(\Xb)$ denotes the space of probability measures over $\Xb$.
We often refer to measures over functions (i.e. images): 
recall that $V_{-J}$ is an $L^2$-function space and we take $\Db(V_{-J})$ to be probability measures over this space. 

When referenced in Definition \ref{def:multi-res-bridge}, the distance metric between two measures $\nu_1$ and $\nu_2$ which yields the topology of \textit{weak continuity} is the \textit{Monge--Kantorovich} metric \cite{la2015monge,rachev1991probability}
\begin{align}
    d_{\Pb} (\nu_1, \nu_2) = \sup_{f \in \text{Lip}_1(\Xb)} \int f d(\nu_1 - \nu_2),
    \label{eq:weak_continuity}
\end{align}
where
\begin{align}
    \text{Lip}_1(\Xb) = \{ f: \Xb \rightarrow \R \hspace{1pt} \big| \hspace{1pt} \hspace{2pt} |f(x) - f(y) | \leq d(x,y), \forall x, y \in \Xb \}.  
\end{align}

Further, we use the Wasserstein-2 metric which in comparison to the weak convergence above has additional moment assumptions. 
It is given by
\begin{align}
    \mathcal{W}_2(\nu_1, \nu_2) 
    =
    \left(
    \inf_{\gamma \in \Gamma(\nu_1, \nu_2) }
    \Eb \|X_1-X_2 \|_2^2
    \right)^{1/2},
\end{align}
where $(X_1,X_2) \sim \gamma$ and $\Gamma(\nu_1, \nu_2)$ is the space of measures on $\Db(V_{-J} \times V_{-J}$) with marginals $\nu_1$ and $\nu_2$.

\subsection{Dimension Reduction Conjugacy}
\label{app:Dimension Reduction Conjucacy}

Assume momentarily the one dimensional case where $\Xb=[0,1]$. 
Let $V_{-j}$ be an \textit{approximation space} contained in $L^2(\Xb)$ (see Definition~\ref{def:multi-res-approx-space}) pertaining to image pixel values
\begin{align}\label{eq:Haar-1d-Vj}
    V_{-j} 
    = 
    \{
    f \in L^2([0,1]) \ | \ f_{[2^{-j} \cdot k, 2^{-j} \cdot (k+1) )} = c_k , \, k \in \{0, \dots, 2^{
    j}-1 \}, \, c_k \in \R 
    \}.
\end{align}
For a function $f \in V_{-j}$, there are several ways to express $f$ in different bases. 
Consider the \textit{standard (or `pixel') basis} for a fixed $V_{-j}$ given via 
\begin{align}
    e_{j,k} = \1_{[2^{-j} \cdot k, 2^{-j} \cdot(k+1)]}.
    \label{eq:standard_basis}
\end{align}
Clearly, the family $\bE_j \coloneqq \{e_{j,k} \}_{k=0}^{2^j-1}$ is an orthogonal basis of $V_{-j}$, hence full rank with dimension $2^{j}$.
Functions in $V_{-j}$ may be expressed as 
\begin{align}
    f = \sum_{k=0}^{2^j-1} c_k \cdot e_{j,k},
\end{align}
for $c_k \in \R$.

First, let us recall the average \emph{pooling} operation in these bases $\bE_{j}$ and $\bE_{j-1}$ of $V_{-j}$ and $V_{-j+1}$, where $\text{pool}_{-j,-j+1}:V_{-j} \rightarrow V_{-j+1}$. 
Its operation is given by
\begin{align}
    \text{pool}_{-j,-j+1}(f) = \text{pool}_{-j,-j+1} \left( \sum_{k=0}^{2^j-1} c_k \cdot e_{j,k} \right) = \sum_{i=0}^{2^{j-1}-1} \tilde{c}_i \cdot e_{j-1,i},
\end{align}
where for $i \in \{0,\dots, 2^{j-1}-1 \}$ we have the coefficient relation
\begin{align}
    \tilde{c}_i = \frac{c_{2i}+c_{2i+1}}{2} = \frac{1}{2^{-j} }\int_{[2^{-j}\cdot(2i),2^{-j}\cdot(2i+1))} f(x)dx.
\end{align}
Average pooling and its imposed basis representation are commonly used in U-Net architectures~\cite{ronneberger2015u}, for instance in state-of-the-art diffusion models \cite{DDPM} and HVAEs \cite{Child2020VeryImages}.

Note that across approximation spaces of two resolutions $V_{-j}$ and $V_{-j+1}$, the standard bases $\bE_{j}$ and $\bE_{j-1}$ share no basis elements. 
As basis elements change at each resolution, it is difficult to analyse $V_{-j}$ embedded in $V_{-j+1}$.
What we seek is a basis for all $V_{-j}$ such that any basis element in this set at resolution $j$ is also a basis element in $V_{-J}$, the approximation space of highest resolution $J$ we consider. 
This is where wavelets serve their purpose:
We consider a \textit{multi-resolution (or `wavelet') basis} of $V_{-J}$ \cite{mallat1989multiresolution}. 
For the purpose of our theoretical results below, we are here focusing on a \textit{Haar wavelet} basis \cite{haar1909theorie} which we introduce in the following, but note that our framework straightforwardly generalises to other wavelet bases.
Begin with $\phi_{1} = \1_{[0,1)}$ as $L^2$-basis element for $V_{-1}$, the space of constant functions on $[0,1)$.
For $V_{-2}$ we have the space of $L^2$ functions which are constant on $[0,1/2)$ and $[1/2,1)$, which we receive by adding the basis element $\psi = \sqrt{2} (\1_{[0,1/2)} - \1_{[1/2,1)})$. 
Here $\phi_{1}$ is known as the father wavelet, and $\psi$ as the mother wavelet. 
To make a basis for general $V_{-j}$ we localise these two wavelets with scaling and translation, i.e
\begin{align}
    \psi_{i,k} = 2^{-i/2} \cdot \psi (2^{i} (\sds - k)) \quad \text{ where } i \in \{0, j\}, k \in \{0, 2^{-i+1}\}.
    \label{eq:scale_trans}
\end{align}
It is straight-forward to check that $\bPsi_j \coloneqq \{\psi_{i,k} \}_{i = 0,k=0}^{j,2^{i-1}}$ is an orthonormal basis of $V_{-j}$ on $[0,1]$.
Further, the truncated basis $\bPsi_{j-1}$, which is a basis for $V_{-j+1}$, is contained in the basis $\bPsi_{j}$.
This is in contrast to $\bE_{j-1}$ which has basis elements distinct from the elements in the basis $\bE_{j}$ on a higher resolution. 

The collections $\bE_j$ and $\bPsi_j$ both constitute full-rank bases for $V_{-j}$.
They further have the same dimension and so there is a linear isomorphism $\pi_j:V_{-j} \rightarrow V_{-j}$ for change of basis, i.e.
\begin{align}
    \pi_j(e_{j,i}) = \psi_{j,i}.
\end{align}
This can be normalised to be an isometry.
We now analyse the pooling operation in our basis $\bPsi_j$, restating Theorem \ref{thm:conj} from the main text and providing a proof.

\textbf{Theorem \ref{thm:conj}. }
Given $V_{-j}$ as in Definition \ref{def:multi-res-approx-space}, let $x \in V_{-j}$ be represented in the standard basis $\bE_j$ and Haar basis $\bPsi_j$.
Let $\pi_j : \bE_j \mapsto \bPsi_j$ be the change of basis map illustrated in Fig. \ref{fig:change_basis_map}, then we have the conjugacy $\pi_{j-1} \circ \text{pool}_{-j,-j+1} = \proj_{V_{-j+1}} \circ \pi_{j}$. 

\begin{proof}
Define the conjugate pooling map in the wavelet basis, $\text{pool}^*_{j,j+1} : V_{-j} \rightarrow V_{-j+1}$ computed on the bases $\bPsi_j$ and $\bPsi_{j-1}$,
\begin{align}
    \text{pool}^*_{-j,-j+1} \coloneqq \pi_{j-1} \circ \text{pool}_{-j,-j+1} \circ \pi_{j}^{-1} .
\end{align}
\[\begin{tikzcd}
	{(V_{-j},\bE_j)} && {(V_{-j+1},\bE_{j-1})} \\
	{(V_{-j},\bPsi_j)} && {(V_{-j+1},\bPsi_{j-1})}
	\arrow["{\text{pool}_{-j,-j+1}}", from=1-1, to=1-3]
	\arrow["{\text{pool}_{-j,-j+1}^*}"', dashed, from=2-1, to=2-3]
	\arrow["{\pi_{j-1}}"', from=1-3, to=2-3]
	\arrow["{\pi_j^{-1}}"', from=2-1, to=1-1]
\end{tikzcd}\]
Due to the scaling and translation construction in Eq. \eqref{eq:scale_trans} and because the pooling operation is local, we need only consider the case for $\text{pool}_{-2,-1}$.
This is because one can view pooling between the higher-resolution spaces as multiple localised pooling operations between $V_{-2}$ and $V_{-1}$.
Now note that $\text{pool}_{-2,-1}$ maps $V_{-2}$ to $V_{-1}$.
Further,
\begin{align}
    \int_{\Xb} \psi(x) dx = 0,
\end{align}
where $\psi = \sqrt{2} (\1_{[0,1/2)} - \1_{[1/2,1)})$ is the mother wavelet.
For $v \in V_{-2}$ let $v$ have the wavelet representation $v = \tilde{c}_2 \psi+ \tilde{c}_1 \phi_1$, where $\phi_{1} = \1_{[0,1)}$ is the father wavelet.
To pool we compute the average of the two coefficients (`pixel values')
\begin{align}
    \text{pool}_{-2,-1}(v) = \int_{\Xb} v(x) dx = \int_{\Xb} \tilde{c}_2 \psi(x) + \tilde{c}_1 \phi_1(x) dx = \tilde{c}_1.
\end{align}
Thus average pooling here corresponds to truncation of the wavelet basis for $V_{-2}$ to the wavelet basis for $V_{-1}$. 
As this basis is orthonormal over $L^2(\Xb)$, truncation corresponds to $L^2$ projection, i.e. $\text{pool}^*_{-j,-j+1} = \proj_{V_{-j+1}}$, as claimed.
\end{proof}

Theorem \ref{thm:conj} shows that the pooling operation is conjugate to projection in the Haar wavelet approximation space, and computed by truncation in the Haar wavelet basis. 
The only quantity we needed for our basis over the $V_{-j}$ was the vanishing moment quantity 
\begin{align}
     \int_{\Xb} \psi(x) dx = 0.
\end{align}
To extend this property to higher dimensions, such as the two dimensions of gray-scale images, we use the tensor product of $[0,1]$, and further, the tensor product of basis functions. 
This property is preserved, and hence the associated average pooling operation is preserved on the tensor product wavelet basis, too. 
To further extend it to color images, one may consider the cartesian product of several L2 spaces.

\subsection{Average pooling Truncation Error}
In this section we prove Theorem \ref{thm:truncation}, which quantifies the regularisation imposed by an average pooling bottleneck trained by minimising the reconstruction error. 
The proof structure is as follows:
First we give an intuition for autoencoders with an average pooling bottleneck, then derive the relevant assumptions for Theorem \ref{thm:truncation}.
We next prove our result under strong assumptions.
Last, we weaken our assumptions so that our theorem is relevant to HVAE architectures. 

Suppose we train an autoencoder on $V_{-j}$ without dimension reduction, calling the parameterised forward (or encoder/bottom-up) and backward (or decoder/top-down) passes $F_{j,\phi}, \, B_{j,\theta}:V_{-j} \mapsto V_{-j}$ respectively.
We can optimise $F_{j,\phi}$ and $B_{j,\theta}$ w.r.t. $\phi$ and $\theta$ to find a perfect reconstruction, i.e. $x = B_{j,\theta}F_{j,\phi}x$ for all $x$ in our data as there is no bottleneck (no dimensionality reduction): 
$B_{j,\theta}$ need only be a left inverse of $F_{j,\phi}$, as in
\begin{align}
    B_{j,\theta} F_{j,\phi} = I.
    \label{eq:bfident}
\end{align}
Importantly, we can choose $F_{j,\phi}$ and $B_{j,\theta}$ satisfying \ref{eq:bfident} \textit{independent} of our data.
For instance, they could both be the identity operator and achieve perfect reconstruction, but contain no information about the generative characteristics of our data.
Compare this to a \textit{bottleneck} with average pooling, i.e. an autoencoder with dimension reduction.
Here, we consider the dimension reduction from $V_{-j}$ to $V_{-j+1}$, where we split $V_{-j} = V_{-j+1} \oplus U_{-j+1}$.
As we have seen in Theorem \ref{thm:conj}, through average pooling, we keep information in $V_{-j+1}$, and discard the information in $U_{-j+1}$.
For simplicity, let $\embd_{V_{-j}}$ be the inclusion of the projection $\proj_{V_{-j+1}}$.
Now to achieve perfect reconstruction 
\begin{align}
    x = (B_{j,\theta} \circ \embd_{V_{-j}} \circ \proj_{V_{-j+1}} \circ F_{j,\phi} ) x,
\end{align}
we require $(\proj_{U_{-j+1}} F_{j,\phi}) x = 0$.
Simply put, the encoder $F_{j,\phi}$ should make sure that the discarded information in the bottleneck is nullified. 

We may marry this observation with a simple U-Net structure (without skip connection) with $L^2$-reconstruction and average pooling dimension reduction. 
Let $V_{-j}$ be one of our multi-resolution approximation spaces and $\Db(V_{-j})$ be the space of probability measures over $V_{-j}$.
Recall in a multi-resolution basis we have $V_{-j} = V_{-j+1} \oplus U_{-j+1}$ where $U_{-j+1}$ is the $-j+1$ orthogonal compliment within $V_{-j}$.
For any $v \in V_{-j}$ we may write $v = \proj_{V_{-j+1}} v \oplus \proj_{U_{-j+1}}v $ and analyse the truncation error in $V_{-j+1}$, i.e. the discarded information, via
\begin{align}
    \| v - \embd_{V_{-j}} \circ \proj_{V_{-j+1}} v \|_2^2 = \| \proj_{U_{-j+1}}v \|_2^2. 
\end{align}
If we normalise this value to  
\begin{align}
    \frac{\| v - \embd_{V_{-j}} \circ \proj_{V_{-j+1}} v \|_2^2}{\|v\|_2^2}
    =
    \frac{\| \proj_{U_{-j+1}}v \|_2^2}{\|v\|_2^2} \in [0,1],
\end{align}
then this is zero when $v$ is non-zero only within $V_{-j+1}$ and zero everywhere within $U_{-j+1}$.
Suppose now that we have a measure $\nu_{j} \in \Db(V_{-j})$, we could quantify \emph{how much} of the norm of a sample from $\nu_{j}$ comes from the $U_{-j+1}$ components by computing 
\begin{align}
   \Eb_{v \sim \nu_{j}} \frac{\| \proj_{U_{-j+1}}v \|_2^2}{\|v\|_2^2} =  \int \frac{\| \proj_{U_{-j+1}}v \|_2^2}{\|v\|_2^2} d \nu_j (v) \in [0,1].
\end{align}
This value forms a convex sum with its complement projection to $\proj_{V_{-j+1}}$, demonstrating the splitting of mass across $V_{-j+1}$ and $U_{-j+1}$, as we show in Lemma \ref{lemma:convex_proj}.
\begin{lemma}
\label{lemma:convex_proj}
Let $\nu_{j} \in \Db(V_{-j})$ be atom-less at $0$, then
\begin{align}
    \Eb_{v \sim \nu_{j}} \frac{\| \proj_{V_{-j+1}}v \|_2^2}{\|v\|_2^2}
    +
    \Eb_{v \sim \nu_{j}} \frac{\| \proj_{U_{-j+1}}v \|_2^2}{\|v\|_2^2}
    =
    1.
\label{eq:convex_proj}
\end{align}
\end{lemma}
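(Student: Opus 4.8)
The plan is to prove Lemma \ref{lemma:convex_proj} by exploiting the orthogonal decomposition $V_{-j} = V_{-j+1} \oplus U_{-j+1}$ directly under the integral sign. The key observation is that this is fundamentally a pointwise identity that survives integration against $\nu_j$, so the work is almost entirely linear-algebraic rather than measure-theoretic, with the atom-less hypothesis at $0$ serving only to guarantee the integrands are well-defined $\nu_j$-almost everywhere.

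First I would fix a generic $v \in V_{-j}$ with $v \neq 0$ and write $v = \proj_{V_{-j+1}} v + \proj_{U_{-j+1}} v$. Since $V_{-j+1}$ and $U_{-j+1}$ are orthogonal complements within $V_{-j}$, the Pythagorean theorem gives the pointwise relation
\begin{align}
    \norm{v}_2^2 = \norm{\proj_{V_{-j+1}} v}_2^2 + \norm{\proj_{U_{-j+1}} v}_2^2.
\end{align}
Dividing through by $\norm{v}_2^2$, which is strictly positive for $v \neq 0$, yields
\begin{align}
    \frac{\norm{\proj_{V_{-j+1}} v}_2^2}{\norm{v}_2^2} + \frac{\norm{\proj_{U_{-j+1}} v}_2^2}{\norm{v}_2^2} = 1
\end{align}
as a pointwise identity valid for every nonzero $v$.

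Next I would integrate this identity against $\nu_j$. The atom-less-at-$0$ assumption ensures that $\nu_j(\{0\}) = 0$, so the set where the ratios are undefined (namely $\{v = 0\}$) has measure zero and can be discarded; the integrands are therefore defined $\nu_j$-almost everywhere and are bounded in $[0,1]$, so each integral exists and is finite. By linearity of the Lebesgue integral applied to the pointwise identity, I obtain
\begin{align}
    \Eb_{v \sim \nu_{j}} \frac{\norm{\proj_{V_{-j+1}}v}_2^2}{\norm{v}_2^2} + \Eb_{v \sim \nu_{j}} \frac{\norm{\proj_{U_{-j+1}}v}_2^2}{\norm{v}_2^2} = \Eb_{v \sim \nu_j} 1 = 1,
\end{align}
which is exactly Eq. \eqref{eq:convex_proj}.

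The only genuine subtlety — and thus the step I would watch most carefully — is integrability: without the atom-less hypothesis, the ratios are ill-defined on an atom at the origin, and one must confirm that removing the null set $\{0\}$ does not change the value of either integral. Since each integrand lies in $[0,1]$ off this null set, dominated convergence (or simply the fact that modifying an integrand on a $\nu_j$-null set leaves the integral unchanged) resolves this immediately, so no delicate analysis is required. The heart of the argument is the Pythagorean splitting, and the rest is bookkeeping.
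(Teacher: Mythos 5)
Your proposal is correct and follows essentially the same route as the paper's own proof: the pointwise Pythagorean identity from the orthogonality of $V_{-j+1}$ and $U_{-j+1}$, division by $\|v\|_2^2$ off the null set $\{0\}$, and integration against $\nu_j$ using the boundedness of the integrands. If anything, your write-up is slightly more explicit about why the atom-less hypothesis suffices.
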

\begin{proof}
For any $v \in V_{-j}$ we have $\|v\|_2^2 = \| \proj_{V_{-j+1}}v \|_2^2 + \| \proj_{U_{-j+1}}v \|_2^2$ due to orthogonality of $V_{-j+1}$ and $U_{-j+1}$.
As both $\| \proj_{V_{-j+1}}v \|_2^2$ and  $\| \proj_{U_{-j+1}}v \|_2^2$ are projections, they are bounded by $\|v\|_2^2$ giving that the integrands in Eq. \eqref{eq:convex_proj} are bounded by one, and so for all $v \neq 0$ (no point mass at 0) the expectation is bounded.
\end{proof}

From the splitting behaviour of masses in the $L^2$-norm observed in Lemma \ref{lemma:convex_proj} we see that
\begin{enumerate}
    \item if $\Eb_{v \sim \nu_j} {\| \proj_{U_{-j+1}}v \|_2^2}/{\|v\|_2^2}$ is large, then, on average, samples from $\nu_j$ have most of their size in the $U_{-j+1}$ subspace; or,
    \item if $\Eb_{v \sim \nu_j} {\| \proj_{U_{-j+1}}v \|_2^2}/{\|v\|_2^2}$ is small, then, on average, samples from $\nu_j$ have most of their size in the $V_{-j+1}$ subspace.
\end{enumerate}

In the latter case, $\| \proj_{U_{-j+1}}v \|_2^2 \approx 0$, i.e. $\embd_{V_{-j}} \circ \proj_{V_{-j+1}} v \approx v$.
We get the heuristic $\embd_{V_{-j}} \circ \proj_{V_{-j+1}} \approx I$ on the measure $\nu_j$, yielding a perfect reconstruction.

Let $\embd_{V_{-j}} \circ \proj_{V_{-j+1}}, I: V_{-j} \rightarrow V_{-j}$, then this heuristic performs the operator approximation
\begin{align}
    \Eb_{v \sim \nu_j}\|(\embd_{V_{-j}} \circ \proj_{V_{-j+1}} - I) v\|_2^2,
\end{align}
quantifying `how close' these operators are on $\nu_j$.
For many measures, this (near) equivalence between operators will not hold.
But what if instead, we had an operator $D: V_{-j} \rightarrow V_{-j}$ such that the push-forward of $\nu_j$ through this operator had this quality.
Practically, this push-forward operator will be parameterised by neural networks, for instance later in the context of U-Nets.
For simplicity, we will initially consider the case where $D$ is linear on $V_{-j}$, then we consider when $D$ is Lipschitz.
\begin{lemma}\label{lemma:linear-Dj}
Given $V_{-j}$ with the $L^2$-orthogonal decomposition $V_{-j} = V_{-j+1} \oplus U_{-j+1}$, let $D_{-j} : V_{-j} \rightarrow V_{-j}$ be an invertible linear operator and define $F_{j}: V_{-j} \rightarrow V_{-j+1}$ and $B_{j}: V_{-j+1} \rightarrow V_{-j}$ through 
\begin{align}
    F_{j} = \proj_{V_{-j+1}} \circ D_{j}, 
    &&
    B_{j} = D_{j}^{-1} \circ \embd_{V_{-j}}.  
\end{align}
Then $B_{j}F_{j} \equiv I$ on $V_{-j}$, or otherwise, we have the truncation bound
\begin{align}
    \frac{\norm{\proj_{U_{-j+1} } D_{j} v}_2^2}{\norm{D_{j} }_2^2}
    \leq
     \norm{(I - B_j F_j)v}_2^2.
\end{align}
\end{lemma}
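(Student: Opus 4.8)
The plan is to reduce the entire statement to a single application of operator-norm submultiplicativity, after simplifying the composite operator $B_j F_j$. First I would unpack the definitions. Since $\embd_{V_{-j}}$ is merely the inclusion of $V_{-j+1}$ back into $V_{-j}$, the composite $\embd_{V_{-j}} \circ \proj_{V_{-j+1}}$ is nothing but the orthogonal projection onto $V_{-j+1}$ regarded as an endomorphism of $V_{-j}$, which I will write as $P \coloneqq \proj_{V_{-j+1}}$. Hence
\begin{align}
    B_j F_j = D_j^{-1} \circ \embd_{V_{-j}} \circ \proj_{V_{-j+1}} \circ D_j = D_j^{-1} \, P \, D_j,
\end{align}
an operator on all of $V_{-j}$.

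Next I would compute the residual operator $I - B_j F_j$. Writing $I = D_j^{-1} D_j$ and using the orthogonal decomposition $V_{-j} = V_{-j+1} \oplus U_{-j+1}$, which gives the complementarity identity $I - P = \proj_{U_{-j+1}}$, I obtain
\begin{align}
    (I - B_j F_j) v = D_j^{-1}(I - P) D_j v = D_j^{-1} \, \proj_{U_{-j+1}} D_j v .
\end{align}
This is the key algebraic step: the truncation error of the bottleneck is exactly the inverse encoder applied to the $U_{-j+1}$-component of the encoded signal. If $U_{-j+1} = \{0\}$ (no dimension reduction), then $\proj_{U_{-j+1}} = 0$ and $B_j F_j \equiv I$, which is the degenerate equality case; otherwise the right-hand side is generically nonzero and the stated bound is meaningful. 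Since $D_j$ is invertible, note that $B_jF_j$ can never equal $I$ exactly once $U_{-j+1}\neq\{0\}$, so the bound is the substantive conclusion.

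Finally I would set $u \coloneqq \proj_{U_{-j+1}} D_j v$ so that the claim becomes $\norm{D_j^{-1} u}_2^2 \geq \norm{u}_2^2 / \norm{D_j}_2^2$, where $\norm{D_j}_2$ denotes the operator norm (Lipschitz constant) of $D_j$. This follows immediately from submultiplicativity applied to $u = D_j(D_j^{-1} u)$:
\begin{align}
    \norm{u}_2 = \norm{D_j (D_j^{-1} u)}_2 \leq \norm{D_j}_2 \, \norm{D_j^{-1} u}_2,
\end{align}
which, after rearranging and squaring, yields $\norm{(I - B_j F_j) v}_2^2 = \norm{D_j^{-1} u}_2^2 \geq \norm{\proj_{U_{-j+1}} D_j v}_2^2 / \norm{D_j}_2^2$, as claimed. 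I do not anticipate a genuine obstacle here; the only points demanding care are the bookkeeping that collapses $\embd_{V_{-j}}\circ\proj_{V_{-j+1}}$ to a projection and the factoring of $I - B_jF_j$ through $D_j^{-1}$, together with being explicit that $\norm{D_j}_2$ is the operator norm so that the single submultiplicativity estimate closes the argument.
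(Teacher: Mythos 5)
Your proposal is correct and follows essentially the same route as the paper: both arguments reduce to the identity $D_j(I-B_jF_j)=\proj_{U_{-j+1}}\circ D_j$ (you write it as $I-B_jF_j=D_j^{-1}\proj_{U_{-j+1}}D_j$) followed by a single application of operator-norm submultiplicativity for $D_j$. The only difference is cosmetic ordering of the two steps, so there is nothing substantive to add.
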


\begin{proof}
Consider the operator $ D_j(I-B_j F_j) : V_{-j} \rightarrow V_{-j}$ which is linear and obeys the multiplicative bound $\norm{ D_j(I-B_{j} F_j)} \leq \norm{D_j} \norm{I - B_j F_j}$.
This implies for any $v \in V_{-j}$,
\begin{align}
    \frac{\norm{D_j (I -B_j F_j) v}_2^2}{\norm{D_j}_2^2} \leq  \norm{(I  - B_j F_j)v}_2^2.
\end{align}
The numerator is equal to 
\begin{align}
    \norm{D_j (I -B_j F_j) v}_2^2 =  \norm{(D_j - \embd_{V_{-j}} \circ \proj_{V_{-j+1}} \circ D_j )  v }_2^2.
\end{align}
As we have the orthogonal decomposition $V_{-j} = V_{-j+1} \oplus U_{-j+1}$, we know 
\begin{align}
    I  &= \proj_{V_{-j+1}} \oplus \proj_{U_{-j+1}} 
    \\
    &= \embd_{V_{-j}} \circ \proj_{V_{-j+1}} +  \embd_{V_{-j}} \circ \proj_{U_{-j+1}}, 
\end{align}
and as $\norm{\embd_{V_{-j}}}_2 = 1$, we get
\begin{align}
    \norm{(I - \embd_{V_{-j}} \circ \proj_{V_{-j+1}} \circ D_{j} )  v }_2^2 = \norm{\proj_{U_{-j+1}} \circ D_j   v }_2^2.
\end{align}
So now as $ \norm{D_j (I -B_j F_j) v}_2^2 = \norm{\proj_{U_{-j+1}} \circ D_j   v }_2^2$, we may use $\norm{D_j (I -B_j F_j) v}_2^2 \leq \norm{D_j}^2 \norm{I -B_j F_j v}_2^2$ to get the desired result.
\end{proof}

The quantity ${\norm{\proj_{U_{-j+1} } F_{j} v}_2^2}/{\norm{F_{j} }_2^2}$ is analogous to the in Lemma \ref{lemma:convex_proj} discussed quantity ${\| \proj_{U_{-j+1}}v \|_2^2}/{\|v\|_2^2}$, but we now have a `free parameter', the operator $D_j$.

Next, suppose $D_j$ is trainable with parameters $\theta$.
We do so by minimising the reconstruction cost 
\begin{align}
    \Eb_{v \sim \nu_j} \norm{(I  - B_j F_j)v}_2^2,
\end{align}
which upper-bounds our `closeness metric' in Lemma \ref{lemma:linear-Dj}.  %

In the linear case ($D_j$ is linear), to ensure that $D_{j,\theta}$ is invertible we may parameterise it by an (unnormalised) LU-decomposition of the identity 
\begin{align}
    I = D_{j,\theta}^{-1}D_{j,\theta} = L_{j,\theta} U_{j,\theta},
\end{align}
where the diagonal entries of $L_{j,\theta}$ and $U_{j,\theta}$ are necessarily inverses of one-another.  %
This is a natural parameterisation when considering a U-Net with dimensionality reduction.
Building from Lemma \ref{lemma:linear-Dj}, we can now consider the stacked U-Net (without skip connections), i.e. a U-Net with multiple downsampling/upsampling and forward/backward operators stacked on top of each other, in the linear setting. 
In Proposition \ref{prop:linear-U}, we show that this $LU$-parameterisation forces the pivots of $U_{j,\theta}$ to tend toward zero. 

\begin{prop}\label{prop:linear-U}
Let $\{V_{-j} \}_{j=0}^J$ be a multi-resolution hierarchy of $V_{-J}$ with the orthogonal decompositions $V_{-j} = V_{-j+1} \oplus U_{-j+1}$ and $F_{j,\phi}, \, B_{j,\theta} : V_{-j} \rightarrow V_{-j}$ be bounded linear operators such that $B_{j,\theta} F_{j,\phi} = I$.
Define $\bm{F}_{j,\phi}: V_{-j} \rightarrow V_{-j+1}$ and $\bm{B}_{j,\theta}: V_{-j+1} \rightarrow V_{-j}$ by
\begin{align}
    \bm{F}_{j,\phi} \coloneqq \proj_{V_{-j+1}} \circ F_{j,\phi},
    &&
    \bm{B}_{j,\theta} \coloneqq B_{j,\theta} \circ \embd_{V_{-j} },
\end{align}
with compositions 
\begin{align}
    \bm{F}_{j_1|j_2,\phi} \coloneqq \bm{F}_{j_1,\phi} \circ  \cdots  \circ  \bm{F}_{j_2,\phi},
    &&
    \bm{B}_{j_1|j_2,\phi} \coloneqq \bm{B}_{j_1,\phi} \circ  \cdots  \circ  \bm{B}_{j_2,\phi}.
\end{align}
Then

\begin{align}
    \sum_{j=1}^{J}
    \frac{\norm{\proj_{U_{-j+1} } F_{j} v}_2^2}{\norm{F_{j} }_2^2}
    \leq 
    \norm{ ({I} - \bm{B}_{1|J,\theta}\bm{F}_{1|J,\phi})v}_2^2 .
\end{align}

\end{prop}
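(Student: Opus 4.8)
The plan is to prove the bound by induction on the depth $J$ of the hierarchy, peeling off one resolution at a time. The base case $J=1$ is exactly Lemma~\ref{lemma:linear-Dj} with $D_1 = F_{1,\phi}$: there $\bm{F}_{1,\phi} = \proj_{V_0}\circ F_{1,\phi}$ and $\bm{B}_{1,\theta} = B_{1,\theta}\circ\embd_{V_{-1}}$, so $\bm{B}_{1|1,\theta}\bm{F}_{1|1,\phi}$ is precisely the single-level reconstruction and the Lemma gives $\norm{\proj_{U_0}F_{1,\phi}v}_2^2/\norm{F_{1,\phi}}_2^2 \le \norm{(I - \bm{B}_{1|1,\theta}\bm{F}_{1|1,\phi})v}_2^2$. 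For the inductive step I abbreviate the composite reconstruction on $V_{-k}$ by $\mathcal{R}_k \coloneqq \bm{B}_{1|k,\theta}\bm{F}_{1|k,\phi}$, noting that since each $F_{j,\phi},B_{j,\theta}$ is a square operator on the finite-dimensional $V_{-j}$, the hypothesis $B_{j,\theta}F_{j,\phi}=I$ also gives $F_{j,\phi}B_{j,\theta}=I$.

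The first key step is a telescoping identity obtained by factoring $\mathcal{R}_J$ through the outermost resolution. Because the backward pass re-embeds exactly the resolution the forward pass projected to, one has $\mathcal{R}_J = \bm{B}_{J,\theta}\,\mathcal{R}_{J-1}\,\bm{F}_{J,\phi}$, and using $\bm{B}_{J,\theta}\bm{F}_{J,\phi} = B_{J,\theta}\embd_{V_{-J}}\proj_{V_{-J+1}}F_{J,\phi}$ I would derive
\begin{align}
    I - \mathcal{R}_J = \bigl(I - \bm{B}_{J,\theta}\bm{F}_{J,\phi}\bigr) + \bm{B}_{J,\theta}\bigl(I - \mathcal{R}_{J-1}\bigr)\bm{F}_{J,\phi}.
\end{align}
The point of writing the error this way is that the first summand is a pure single-resolution truncation error, handled by the mechanism of Lemma~\ref{lemma:linear-Dj}, while the second summand carries the recursion on the coarser hierarchy $\{V_{-j}\}_{j=0}^{J-1}$ of $V_{-J+1}$.

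The second key step converts this into an exact orthogonal decomposition. Applying $F_{J,\phi}$ to both sides and using $F_{J,\phi}(I-\bm{B}_{J,\theta}\bm{F}_{J,\phi}) = (I - \embd_{V_{-J}}\proj_{V_{-J+1}})F_{J,\phi} = \proj_{U_{-J+1}}F_{J,\phi}$ (as in the proof of Lemma~\ref{lemma:linear-Dj}), together with $F_{J,\phi}\bm{B}_{J,\theta} = \embd_{V_{-J}}$, I obtain
\begin{align}
    F_{J,\phi}(I - \mathcal{R}_J)v = \proj_{U_{-J+1}}F_{J,\phi}v + \embd_{V_{-J}}(I - \mathcal{R}_{J-1})\bm{F}_{J,\phi}v.
\end{align}
The first term lies in $U_{-J+1}$ and the second in $V_{-J+1}$, which are orthogonal, so Pythagoras together with the isometry $\norm{\embd_{V_{-J}}(\,\cdot\,)}_2 = \norm{\,\cdot\,}_2$ yields $\norm{F_{J,\phi}(I-\mathcal{R}_J)v}_2^2 = \norm{\proj_{U_{-J+1}}F_{J,\phi}v}_2^2 + \norm{(I-\mathcal{R}_{J-1})\bm{F}_{J,\phi}v}_2^2$. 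I would then bound the left-hand side above by $\norm{F_{J,\phi}}_2^2\,\norm{(I-\mathcal{R}_J)v}_2^2$ via the operator-norm inequality, apply the inductive hypothesis to $\bm{F}_{J,\phi}v \in V_{-J+1}$ for the second term, and rearrange to recover the claimed sum.

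The main obstacle I anticipate is twofold. First, establishing the telescoping identity cleanly and verifying that, after applying $F_{J,\phi}$, the two pieces genuinely land in the orthogonal subspaces $U_{-J+1}$ and $V_{-J+1}$: this is where $\embd$ being an isometry and the orthogonality of $V_{-J} = V_{-J+1}\oplus U_{-J+1}$ are essential, and it is the crux that turns a composite reconstruction error into an exact sum of per-resolution truncation terms. Second, tracking how the normalising operator-norm factors accumulate through the induction, so that each term is divided by the correct constant; the careful, measure-level version in Theorem~\ref{thm:truncation} uses the composite Lipschitz constant $\norm{\bm{F}_{j|J,\phi}}_2$ rather than the single-level $\norm{F_{j,\phi}}_2$, and reconciling the two is the delicate bookkeeping that the induction must respect.
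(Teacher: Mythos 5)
Your strategy---induction on $J$ via the telescoping identity $I-\mathcal{R}_J=(I-\bm{B}_{J,\theta}\bm{F}_{J,\phi})+\bm{B}_{J,\theta}(I-\mathcal{R}_{J-1})\bm{F}_{J,\phi}$, followed by applying $F_{J,\phi}$ and using the orthogonality of $U_{-J+1}$ and $V_{-J+1}$ to obtain the exact Pythagorean identity $\norm{F_{J,\phi}(I-\mathcal{R}_J)v}_2^2=\norm{\proj_{U_{-J+1}}F_{J,\phi}v}_2^2+\norm{(I-\mathcal{R}_{J-1})\bm{F}_{J,\phi}v}_2^2$---is sound, and is in effect a rigorous recursive rendering of the paper's one-shot ``block triangular operator'' argument: the paper peels off all levels simultaneously in the wavelet basis, you peel them off one at a time. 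Your appeal to $F_{j,\phi}B_{j,\theta}=I$ is legitimate because each $V_{-j}$ is finite-dimensional, and the base case is indeed Lemma~\ref{lemma:linear-Dj}.

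The gap is in the final step, ``rearrange to recover the claimed sum,'' and it is not mere bookkeeping. Dividing your identity by $\norm{F_{J,\phi}}_2^2$ and inserting the inductive hypothesis gives $\norm{(I-\mathcal{R}_J)v}_2^2\geq\norm{\proj_{U_{-J+1}}F_{J,\phi}v}_2^2/\norm{F_{J,\phi}}_2^2+\norm{F_{J,\phi}}_2^{-2}\sum_{j<J}\norm{\proj_{U_{-j+1}}F_{j,\phi}w_j}_2^2/\norm{F_{j,\phi}}_2^2$, where $w_j$ is the intermediate state: the factor $\norm{F_{J,\phi}}_2^{-2}$ attaches to \emph{every} inner term, so the denominators accumulate to $\prod_{i=j}^{J}\norm{F_{i,\phi}}_2^2$ rather than the single $\norm{F_{j,\phi}}_2^2$ of the statement. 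These extra factors cannot be dropped unless every $\norm{F_{i,\phi}}_2\leq 1$, and the inequality with single-level denominators is in fact false in general: take $J=2$ with $\dim V_0=\dim U_0=1$, let $F_{1,\phi}=B_{1,\theta}=I$ on $V_{-1}$, let $F_{2,\phi}$ scale the $U_0$ coordinate by $M>1$ and fix all others (so $B_{2,\theta}=F_{2,\phi}^{-1}$ and $\norm{F_{2,\phi}}_2=M$), and let $v$ be a unit vector in $U_0$. Then the state reaching level $1$ is $Mv$, the $j=1$ term of the left-hand side is $M^2/\norm{F_{1,\phi}}_2^2=M^2$, yet $\mathcal{R}_2v=0$ and the right-hand side is $\norm{v}_2^2=1$. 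What your induction actually proves is the correct statement with accumulated denominators---consistent with the composite Lipschitz constants $\norm{\bm{F}_{j|J,\phi}}_2$ appearing in Theorem~\ref{thm:truncation}---whereas the single-level normalisation asserted in Proposition~\ref{prop:linear-U}, and the paper's own justification of it (the claim that each $\bm{F}_{j|J}$ has operator norm bounded by $\norm{F_j}_2$), is not recoverable by your argument because it is not true.
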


\begin{proof}
The operator $\mathbf{F}_{1|J}$ is linear, and decomposes into a block operator form with pivots $\mathbf{F}_{j|J}$ for each $j \in \{1,\dots, J\}$.
Each $\mathbf{F}_{j|J}$ is $L^2$-operator norm bounded by $\|F_{j}\|_2$, so if 
\begin{align}
    \lambda_{1|J} \coloneqq \text{diag} ( \|F_{1}\|_2, \dots , \|F_{J}\|_2 ),
\end{align}
then $ \|\lambda_{1|J}^{-1} \mathbf{F}_{1|J} \|_2 \leq 1$.
Last, as the spaces $\{U_{-j}\}_{j=0}^{J}$ are orthogonal and $\mathbf{F}_{1|J}$ has triangular form:
\begin{align}
    \|\lambda_{1|J}^{-1}(F_{1|J}-\mathbf{F}_{1|J})v\|_2^2 = \sum_{j=1}^{J}
    \frac{\norm{\proj_{U_{-j+1} } F_{j} v}_2^2}{\norm{F_{j} }_2^2},
\end{align}
and $\|\lambda_{1|J}^{-1}(F_{1|J}-\mathbf{F}_{1|J})v\|_2^2 \leq \|(I-\mathbf{B}_{1|J}\mathbf{F}_{1|J})v\|_2^2$.
\end{proof}

Here in the linear case, a U-Net's encoder is a triangular matrix where the basis vectors are the Haar wavelets. 
Proposition \ref{prop:linear-U} states that the pivots of this matrix are minimised.
Adversely, this diminishes the rank of the autoencoder and pushes our original underdetermined problem to a singular one. 
In other words, the U-Net is in this case demanding to approximate the identity (via an $LU$-like-decomposition), a linear operator, with an operator of diminishing rank.

 \begin{prop}\label{prop:measure-trun}
Let $\Db(\Xb)$ be the space of probability measures over $\Xb$, and assume for $\overline{F}_j,\overline{B}_j : \Db(\Xb) \rightarrow \Db(\Xb)$ that these are inverses of one-another and $\overline{F}_j$ is Lipschitz, that is
\begin{align}
    \overline{F}_j\overline{B}_j = I, && \mathcal{W}_2(\overline{F}_j\nu_1, \overline{F}_j\nu_2 ) \leq \|\overline{F}_j\|_2 W_2(\nu_1, \nu_2 ).
\end{align} 
Then for any $\nu \in \Db(\Xb)$ with bounded second moment, 
\begin{align}
    \Eb_{X_j \sim \overline{F}_j \nu} \frac{\| \text{proj}_{U_{-j} } X_j \|_2^2}{\|\overline{F}_j\|_2^2}
    \leq \mathcal{W}_2( \nu, \overline{B}_j \circ P_{V_{-j}} \circ \overline{F}_j \nu ).
\end{align}
\end{prop}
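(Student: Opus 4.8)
The plan is to lift the linear argument of Lemma \ref{lemma:linear-Dj} to the level of measures, combining the Wasserstein-2 Lipschitz property of $\overline{F}_j$ with a sharp orthogonality lower bound on the cost of transporting a measure onto its projection. Throughout I would abbreviate $\mu \coloneqq \overline{F}_j \nu$ (so that $X_j \sim \mu$) and write $\tilde{\nu} \coloneqq \overline{B}_j \circ P_{V_{-j}} \circ \overline{F}_j \nu$ for the reconstruction measure appearing on the right-hand side. Since $\nu$ has bounded second moment and $\overline{F}_j$ is Lipschitz (and $P_{V_{-j}}$, being an orthogonal projection, is a contraction), both $\mu$ and $\tilde{\nu}$ have bounded second moments, so every Wasserstein-2 quantity below is finite.

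First I would push $\nu$ and $\tilde{\nu}$ through $\overline{F}_j$ and use the inverse relation $\overline{F}_j \overline{B}_j = I$. Applying $\overline{F}_j$ to $\tilde{\nu} = \overline{B}_j (P_{V_{-j}} \mu)$ gives $\overline{F}_j \tilde{\nu} = (\overline{F}_j \overline{B}_j)(P_{V_{-j}} \mu) = P_{V_{-j}} \mu$, whereas $\overline{F}_j \nu = \mu$ by definition. The Lipschitz hypothesis then yields
\begin{align}
    \mathcal{W}_2(\mu, P_{V_{-j}} \mu) = \mathcal{W}_2(\overline{F}_j \nu, \overline{F}_j \tilde{\nu}) \leq \|\overline{F}_j\|_2 \, \mathcal{W}_2(\nu, \tilde{\nu}).
\end{align}

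The crux of the argument is to lower-bound $\mathcal{W}_2(\mu, P_{V_{-j}} \mu)$ by the residual $U_{-j}$-mass of $\mu$. The pushforward $P_{V_{-j}} \mu$ is supported on $V_{-j}$, so every coupling $(X,Y)$ of $\mu$ and $P_{V_{-j}} \mu$ has $Y \in V_{-j}$, i.e. $\proj_{U_{-j}} Y = 0$. Orthogonality of $V_{-j}$ and $U_{-j}$ then gives, pointwise,
\begin{align}
    \|X - Y\|_2^2 = \|\proj_{V_{-j}} X - Y\|_2^2 + \|\proj_{U_{-j}} X\|_2^2 \geq \|\proj_{U_{-j}} X\|_2^2 .
\end{align}
Taking expectations and the infimum over all couplings yields $\mathcal{W}_2^2(\mu, P_{V_{-j}} \mu) \geq \Eb_{X_j \sim \mu} \|\proj_{U_{-j}} X_j\|_2^2$; equality in fact holds, attained by the projection coupling $Y = \proj_{V_{-j}} X$, which is the measure-theoretic analogue of the vanishing-pivot computation in Lemma \ref{lemma:linear-Dj}. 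Chaining this with the squared Lipschitz bound and dividing by $\|\overline{F}_j\|_2^2$ delivers
\begin{align}
    \Eb_{X_j \sim \overline{F}_j \nu} \frac{\|\proj_{U_{-j}} X_j\|_2^2}{\|\overline{F}_j\|_2^2} \leq \mathcal{W}_2^2(\nu, \overline{B}_j \circ P_{V_{-j}} \circ \overline{F}_j \nu),
\end{align}
which is the asserted inequality (with the squared Wasserstein term consistent with Theorem \ref{thm:truncation}; the statement as printed appears to omit the square).

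The main obstacle is the orthogonality lower bound: recognizing that any transport of $\mu$ onto a measure supported in $V_{-j}$ must pay at least the entire $U_{-j}$-component of $\mu$, so that the discarded-subspace mass is genuinely controlled by the reconstruction cost. Once this is in hand, the Lipschitz step and the cancellation $\overline{F}_j \overline{B}_j = I$ are routine, and the moment bounds needed to keep $\mathcal{W}_2$ finite are immediate from the contraction properties of the maps involved.
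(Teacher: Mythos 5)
Your proposal is correct and follows essentially the same route as the paper's proof: apply $\overline{F}_j\overline{B}_j = I$ and the Wasserstein-2 Lipschitz bound to reduce $\mathcal{W}_2(\nu, \overline{B}_j P_{V_{-j}}\overline{F}_j\nu)$ to $\mathcal{W}_2(\overline{F}_j\nu, P_{V_{-j}}\overline{F}_j\nu)$, then use orthogonality of $V_{-j}$ and $U_{-j}$ to identify the latter squared quantity with $\Eb_{X_j\sim\overline{F}_j\nu}\|\proj_{U_{-j}}X_j\|_2^2$. Your observation that the right-hand side should carry a square (consistent with Theorem \ref{thm:truncation}) matches an inconsistency present in the paper's own statement and final line.
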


\begin{proof}
First as $\overline{F}_j\overline{B}_j = I$ we know that 
\begin{align}
 \mathcal{W}_2( \overline{F}_j \nu ,P_{V_{-j}} \overline{F}_j \nu)
 =
\mathcal{W}_2(\overline{F}_j\overline{B}_j \overline{F}_j \nu , \overline{F}_j\overline{B}_j  P_{V_{-j}} \overline{F}_j \nu).
\end{align}
But for any $X \in V_{-j}$ we have the orthogonal decomposition 
\begin{align}
    X = \text{proj}_{V_{-j} } X \oplus \text{proj}_{U_{-j} } X,
\end{align}
which respects the $L^2$-norm by
\begin{align}
    \|X\|_2^2 = \|\text{proj}_{V_{-j} } X \|_2^2 + \|\text{proj}_{U_{-j} } X\|^2,
\end{align}
and in particular,
\begin{align}
    \| X - \text{proj}_{V_{-j} } X \|_2^2 = \|\text{proj}_{U_{-j} } X\|^2.
\end{align}
This grants
\begin{align}
    (W_2(\overline{F}_j\nu, P_{V_{-j}}\overline{F}_j\nu))^2 
    &=  \inf_{\gamma \in \Gamma(\overline{F}_j\nu, P_{V_{-j}}\overline{F}_j\nu) } \Eb \|X-Y \|_2^2
    \\
    &=  \inf_{\gamma \in \Gamma(\overline{F}_j\nu, P_{V_{-j}}\overline{F}_j\nu) } \Eb \|\text{proj}_{V_{-j}}X-\text{proj}_{V_{-j}}Y \|_2^2 + \|\text{proj}_{U_{-j}}X \|_2^2 
    \\
    &=  \inf_{\gamma \in \Gamma(\overline{F}_j\nu, P_{V_{-j}}\overline{F}_j\nu ) } \Eb  \|\text{proj}_{U_{-j}}X \|_2^2
    \\
    &= (\mathcal{W}_2(\text{proj}_{U_{-j}}\overline{F}_j \nu, \delta_{\{0\} }))^2.
\end{align}
Now the Lipschitz of $\overline{F}_j$ yields
\begin{align}
    \frac{\mathcal{W}_2(\overline{F}_j\overline{B}_j \overline{F}_j \nu , \overline{F}_j\overline{B}_j  P_{V_{-j}} \overline{F}_j \nu)}{\|\overline{F}_{j}\|_2} 
    \leq 
    \mathcal{W}_2(\overline{B}_j \overline{F}_j \nu , \overline{B}_j  P_{V_{-j}} \overline{F}_j \nu)
    =
    \mathcal{W}_2(\nu , \overline{B}_j  P_{V_{-j}} \overline{F}_j \nu).
\end{align}
Squaring and substituting grants 
\begin{align}
    \frac{(\mathcal{W}_2(\text{proj}_{U_{-j}}\overline{F}_j \nu, \delta_{\{0\} }))^2}{\|\overline{F}_{j}\|_2^2}
    \leq
    \mathcal{W}_2(\nu , \overline{B}_j  P_{V_{-j}} \overline{F}_j \nu).
\end{align}
\end{proof}
 
To work on multiple resolution spaces, we need to define what the triangular operator over our space of measures is.
For a cylinder set $B$ on $V_{-J} = V_0 \oplus \bigoplus_{j=0}^J U_{-j}$ we can assume it has the form $\bigotimes_{j} B_{j}$ where $B_j$ is a cylinder on $U_{j}$.
Break $\nu_J$ into the multi-resolution sub-spaces by defining projection onto $\Db(U_{-j})$ through 
\begin{align}
    \proj_{U_{-j}}(\nu_J) (B_j) \coloneqq \nu_J(B_j \otimes U_{-j}^{\perp}),
\end{align}
where $B_j$ is a cylinder for $U_{-j}$.
This projection of measures is respected by evaluation in that
\begin{align}
     \Eb_{X_j \sim \proj_{U_{-j}}\nu_{J} } X_j  =  \int v_j d \proj_{U_{-j}}\nu_{J}(v_j) = \int \proj_{U_{-j}}v d \nu_{J}(v) = \Eb_{X_j \sim \nu_{J} } \proj_{U_{-j}} X .
\end{align}
As $\|X\|_2^2 = \sum_{j} \proj_{U_{-j}} \| \proj_{U_{-j}} X \|_2^2$ due to the orthogonality of the spaces, then
\begin{align}
    \Eb_{X_j \sim \proj_{U_{-j}}\nu_{J} } \|X_j\|_2^2
    =
    \sum_{j}
    \Eb_{X_j \sim \proj_{U_{-j}} \nu_{J} }  \|X_j\|_2^2
    = 
    \sum_{j}
    \Eb_{X \sim \nu_{J} } \|\proj_{U_{-j}} X\|_2^2.
\end{align}

Define the extension, with a convenient abuse of notation, of $\proj_{V_{-j+1}}$ on $\Db(V_{-J})$ to be
\begin{align}
    \proj_{V_{-j+1}} (\nu_{J}) \coloneqq \proj_{V_{-j+1}}(\nu_{J}) \otimes \proj_{V_{-j+1}^{\perp}}(\nu_{J}).
\end{align}
If $F_{-j}: \Db(V_{-j}) \rightarrow \Db(V_{-j})$ are linear operators for $j \in \{0,\dots,J\}$, extend each $F_{j}: \Db(V_{-j}) \rightarrow \Db(V_{-j})$ to $\Db(V_{-j}) \times \Db(V_{-j}^{\perp})$ through 
\begin{align}
    \overline{F}_{j} \coloneqq F_{j} \oplus I.
\end{align}
For a measure $\nu_{J} \in \Db(V_{-J})$ we can split it into $\Db(V_{-j}) \times \Db(V_{-j}^{\perp})$ via
\begin{align}
    \proj_{V_{-j}} \nu_J \times \proj_{V_{-j}^{\perp}} \nu_J,
\end{align}
which also remains a measure in $\Db(V_{-J})$ as $\Db(V_{-j}) \times \Db(V_{-j}^{\perp}) \subset \Db(V_{-J})$.
Now the operator $\overline{F}_{j}$ acts on the product measure $\nu_{j} \otimes \nu_{j}^{\perp}$ by
\begin{align}
    \overline{F}_{j} ( \nu_{j} \otimes \nu_{j}^{\perp} ) = F_{j} \, \nu_{j} \otimes I \,  \nu_{j}^{\perp}.
\end{align}
Now we may define the map $\bm{F}_j: \Db(V_{-J}) \rightarrow \Db(V_{-j}) \times \Db(V_{-j}^{\perp})$ through
\begin{align}
    \bm{F}_j \coloneqq \overline{F}_{j}  \proj_{V_{-j}},
\end{align}
and its compositions by 
\begin{align}
    \bm{F}_{j_1|j_2} = \bm{F}_{j_1} \circ \cdots \circ \bm{F}_{j_2},
\end{align}
which too is an operator on $\Db(V_{-J})$.

Further if we have a measure $\nu_j$ on $V_{-j}$ we can form the embedding map 
\begin{align}
    \embd_{j}\nu_{j} = \nu_{j} \otimes \bigotimes_{i=j}^{J} \delta_{\{ 0\} },
\end{align}
which we extend to $\Db(V_{-J})$ by a convenient abuse of notation 
\begin{align}
    \proj_j \nu_{J} = \proj_j(\nu_{J}) \otimes \bigotimes_{i=j}^{J} \delta_{\{ 0\} }.
\end{align}
Let $B_{-j} : \Db(V_{-j}) \rightarrow \Db(V_{-j})$ be the linear operator which is the inverse of $F_{-j}$.
Now if we extend $B_{j}: \Db(V_{-j}) \rightarrow \Db(V_{-j})$ to $\Db(V_{-j}) \times \Db(V_{-j}^{\perp})$ like before through
\begin{align}
    \overline{B}_{j} \coloneqq B_{j} \oplus {I},
\end{align}
so the map $ \overline{B}_{j} \embd_j $ is well defined on $\Db(V_{-J})$.
Now analogously define $\bm{B}_j$ and its compositions by 
\begin{align}
    \bm{B}_j \coloneqq \overline{B}_{j} \embd_{V_{-j}}
    &&
    \bm{B}_{j_1|j_2} = \bm{B}_{j_2} \circ \cdots \circ \bm{B}_{j_1}.
\end{align}

In an analogous way, the operator $\bm{F}_{j_1|j_2}$ is `upper triangular' and $\bm{B}_{j_1|j_2}$ is `lower triangular'. 
In this way, we are again seeking a lower/upper ($LU$-) decomposition of the identity on $\Db(V_{-J})$.
Now we may prove Theorem \ref{thm:truncation}.

\textbf{Theorem \ref{thm:truncation}. }
Let $\{V_{-j}\}_{j=0}^J$ be a multi-resolution hierarchy of $V_{-J}$ where $V_{-j} = V_{-j+1} \oplus U_{-j+1}$, and further, let $F_{j,\phi}, \, B_{j,\theta} : \Db(V_{-j}) \mapsto \Db(V_{-j})$ be such that $B_{j,\theta}F_{j,\phi}=I$ with parameters $\phi$ and $\theta$.
Define $\bm{F}_{j_1|j_2,\phi} \coloneqq \bm{F}_{j_1,\phi} \circ  \cdots  \circ  \bm{F}_{j_2,\phi}$ by $\bm{F}_{j,\phi}: \Db(V_{-j}) \mapsto \Db(V_{-j+1})$ where $\bm{F}_{j,\phi} \coloneqq \proj_{V_{-j+1}} \circ F_{j,\phi}$, and analogously define $\bm{B}_{j_1|j_2,\theta}$ with
$\bm{B}_{j,\theta} \coloneqq B_{j,\theta} \circ \embd_{V_{-j} }$. 
Then, the sequence $\{ \bm{B}_{1|j,\theta} (\bm{F}_{1|J,\phi} \nu_{J}) \}_{j=0}^{J}$ forms a discrete multi-resolution bridge between $\bm{F}_{1|J,\phi} \nu_{J}$ and $\bm{B}_{1|J,\theta} \bm{F}_{1|J,\phi} \nu_{J}$ at times $\{t_j\}_{j=1}^{J}$, and
\vspace{-0.5em}
\begin{align}\label{eq:truncation_bound}
    \sum_{j=0}^{J}
    \Eb_{X_{t_J} \sim \nu_{J} }{\norm{\text{\emph{proj}}_{U_{-j+1} }X_{t_j} }_2^2}/{\norm{\bm{F}_{j|J,\phi}}_2^2} 
    \leq 
    (\mathcal{W}_2( \bm{B}_{1|J,\theta} \bm{F}_{1|J,\phi} \nu_{J} , \nu_{J}))^2,
\end{align}
\vspace{-0.5em}
where $\mathcal{W}_2$ is the Wasserstein-$2$ metric and $\norm{\bm{F}_{j|J,\phi}}_2$ is the Lipschitz constant of $\bm{F}_{j|J,\phi}$.

 \begin{proof}
 All we must show is that successively chaining the projections from Proposition \ref{prop:measure-trun} decomposes like in Proposition \ref{prop:linear-U}. 
For $X_1,X_2 \sim \nu$, $\mathcal{W}_2(F_{j} F_{j+1} \nu , P_{-j+2}F_{j-1} P_{-j+1}F_{j} \nu )$ consider $f_{j},f_{j-1}$ as realised paths for our kernel and write $ \|f_{j-1} f_{j} X_1 - \proj_{V_{-j+2}} f_{j-1} \proj_{V_{-j+1}} f_{j} X_2 \|_2^2$
\begin{align*}
    &=
    \|\proj_{V_{-j+1}}(f_{j} f_{j+1} X_1 - \proj_{V_{-j+2}} f_{j-1} \proj_{V_{-j+1}} f_{j} X_2 )\|_2^2
    \\
    &\quad +
    \|\proj_{U_{-j+1}}(f_{j} f_{j+1} X_1 - \proj_{V_{-j+2}} f_{j-1} \proj_{V_{-j+1}} f_{j} X_2 )\|_2^2
\end{align*}
due to the triangular form and the orthogonality of the multi-resolution basis.
Let $\nu_{-j+1} = \proj_{V_{-j+1}} \nu_{-j}$, then as $\proj_{V_{-j+1}}$ commutes with any term equivalent to the identity operator on $V_{-j+1}$, the first term becomes
\begin{align}
    \| f_{j-1} X_{1, t_{j+1}} - \proj_{V_{-j+2}}f_{j-1} X_{2, t_{j+1}} )\|_2^2,
\end{align}
where $X_{1, t_{j+1}},X_{2, t_{j+1}} \sim \nu_{-j+1}$.
When an optimal coupling is made, this term becomes $\|\proj_{U_{-j+2}}X_{1, t_{j+1}}\|_2^2$.
The second term has $\proj_{U_{-j+1}} \proj_{V_{-j+2}}$ nullified, and again commutes where appropriate making this 
\begin{align}
    \|\proj_{U_{-j+1}}X_{1, t_{j+1}}\|_2^2.
\end{align}
We may again use the triangular form to utilise the identify
\begin{align}
    \|\proj_{U_{-j+1}} F_{j} \|_2^2 \leq \|F_{j}\|_2^2,
\end{align}
to define 
\begin{align}
    \lambda_{j|J} \coloneqq \text{diag}( %
    \|\proj_{U_{-j+1}} F_{j|J}\|_2^2 , \dots,  \|\proj_{U_{-J+1}} F_{J|J}\|_2^2 )
\end{align}
so that
\begin{align}
    \mathcal{W}_2( \lambda^{-1}_{j|J} (F_{j|J} \nu_1) , \lambda^{-1}_{j|J} (F_{j|J} \nu_2) ) \leq  \mathcal{W}_2( \nu_1 , \nu_2).
\end{align}
Piecing the decomposition and scaling together, we yield 
\begin{align}
    \Eb_{\nu_{-j+2}}
    \|\proj_{U_{-j+2}}X_{1, t_{j+1}}\|_2^2/\|F_{j-2|j}\|_2^2
    +
    \Eb_{\nu_{-j+1}}
    \|\proj_{U_{-j+1}}X_{1, t_{j+1}}\|_2^2/\|F_{j-2|j}\|_2^2
    \\
    \leq
    (\mathcal{W}_2 (\nu, \mathbf{B}_{j-2|j} \mathbf{F}_{j-2|j} ))^2.
\end{align}
Iterating over $j$ in the fashion given yields the result. 
Last, measures within
\begin{align}
    \mathcal{U}_{\bm{BF} } \coloneqq \{ \nu_J \, | \, \bm{F}_{j|J} \gamma_J = \overline{F}_{j|J}  \otimes \bigotimes_{i=j}^J \delta_{ \{ 0 \} } \},
\end{align}
are invariant under $\bm{B}_{J|1} \bm{F}_{1|J}$, further, $\bm{B}_{J|1} \bm{F}_{1|J}$ projects onto this set.
To see this, take any measure $\nu_J \in \Db(V_{-J})$ and apply $\bm{F}_{j|J}$.
The information in $V_{-j}^{\perp}$ split by $\bm{P}_{j}$ is replaced by $\delta_{ \{ 0 \} }$ in the backward pass. 
Thus $\bm{B}_{J|1} \bm{F}_{1|J} \bm{B}_{J|1} \bm{F}_{1|J} = \bm{B}_{J|1} \bm{F}_{1|J}$.
 \end{proof}

\subsection{U-Nets in $V_{-J}$}
\label{app:U-Nets in V_-J}

Here we show how U-Nets can be seen as only computing the non-truncated components of a multi-resolution diffusion bridge on $V_{-J}$ --- the computations are performed in $V_{-j}$ for $j< J$ at various layers.
This amounts to showing the embedding presented in Theorem \ref{thm:id-diff}.

\textbf{Theorem \ref{thm:id-diff}. }
Let $B_j: [t_j, t_{j+1}) \times \Db(V_{-j}) \mapsto \Db(V_{-j})$ be a linear operator (such as a diffusion transition kernel, see Appendix \ref{app:Proofs}) for $j<J$ with coefficients $\mu^{(j)}, \sigma^{(j)} :[t_j, t_{j+1}) \times V_{-j} \mapsto V_{-j}$, and define the natural extensions within $V_{-J}$ in bold, i.e. $\bm{B}_j \coloneqq B_{j} \oplus \bm{I}_{V_{-j}^{\perp} }$.
Then the operator $\bm{B}: [0,T] \times \Db(V_{-J}) \mapsto \Db(V_{-J})$ and the coefficients $\bm{\mu}, \bm{\sigma} :[0,T] \times V_{-J} \mapsto V_{-J}$ given by

{\centering
  $ \displaystyle
    \begin{aligned} 
    \bm{B} \coloneqq \sum_{j = 0}^{J} \1_{[t_j, t_{j+1})} \cdot \bm{B}_j,
    &&
    \bm{\mu} \coloneqq \sum_{j = 0}^{J} \1_{[t_j, t_{j+1})} \cdot \bm{\mu}^{(j)},
    &&
    \bm{\sigma} \coloneqq \sum_{j = 0}^{J} \1_{[t_j, t_{j+1})} \cdot \bm{\sigma}^{(j)},
\end{aligned}
  $ 
\par}
induce a multi-resolution bridge of measures from the dynamics for $t \in [0,T]$ and on the standard basis as $dX_t = \bm{\mu}_t(X_t) dt + \bm{\sigma}_t(X_t) dW_t$ (see Appendix \ref{app:U-Nets in V_-J} for the details of this integration) for $X_t \in V_{-J}$, i.e. a (backward) multi-resolution diffusion process. 

\begin{proof}
At time $t=0$ we have that $\text{supp} \nu_0 \subset V_0 = \{0\}$, so $\Db(V_0) = \delta_{\{0\}}$.
For the $s$ in the first time interval $[t_0, t_1)$ it must be the case $\nu_{s} = \delta_{\{0\}}$, so $\mu^{(j)}_s, \sigma^{(j)}_s = 0$ and $B_0(s) \equiv I$.
The extension is thus $\bm{B}_0(s) \equiv I$ on $V_{-J}$.
At $t = t_1$, the operator $\bm{B}_1 \equiv I$ on $V_{1}^{\perp}$ grants $\bm{\mu}_s, \bm{\sigma}_s =0 $ here,
On $V_{1}$, $\bm{B}_1$, it is an operator with domain in $V_{1}$, granting $\text{supp} \nu_{t_1} \subset V_1$.
For $s$ within the interval $(t_1,t_2)$ we maintain $\text{supp} \nu_{s} \subset V_1$, and by induction we can continue this for any $s \in [t_{j},t_{j+1}) \subsetneq [0,1]$.
Let $E_{j}$ be a basis of $V_{-j}$, then as $\bm{\mu}_s, \bm{\sigma}_s = 0$ on $V_{-j}^{\perp}$ the diffusion SDE on $[t_{j},t_{j+1}) \times V_{-j}$ given in the basis $E_{j}$ by 
\begin{align}\label{eq:XjSDE}
    dX_t^{(j)} = {\mu}_t^{(j)}(X_t) dt + {\sigma}_t^{(j)}(X_t) dW_t
\end{align}
embeds into an SDE on $V_{-J}$ with basis $E_{J}$ by 
\begin{align}\label{eq:XSDE}
    dX_t = \bm{\mu}_t(X_t) dt + \bm{\sigma}_t(X_t) dW_t,
\end{align}
which maintains $X_t \in V_{-j}$ as $\bm{\sigma}_t \equiv 0$ on the complement.
\end{proof}

In practice, we will compute the sample paths made from Equation \ref{eq:XjSDE}, but we can in theory think of this as Equation \ref{eq:XSDE}.
The U-Net sequential truncation of spaces, then sequential inclusion of these spaces is what forms the multi-dimensional bridge with our sampling models.

\subsection{Forward Euler Diffusion Approximations}
\label{app:Forward Euler Diffusion Approximations}

Here we show that the backward cell structure of state-of-the-art HVAEs approximates an SDE evolution within each resolution. 

\textbf{Theorem \ref{thm:vdvae_sde}. }
Let $t_{J} \coloneqq T \in (0,1)$ and consider (the $p_{\theta}$ backward pass) $\bm{B}_{\theta,1|J}: \Db(V_{-J}) \mapsto \Db(V_{0})$ given in multi-resolution Markov process in the standard basis:
\begin{align}
d Z_t = (\bmu_{1,t} (Z_t) + \bmu_{2,t} (Z_t) )dt + \bsig_t(Z_t) dW_t,  \label{eq:coupled_sdes}  
\end{align}
where  $\proj_{U_{-j+1}}Z_{t_j} = 0,  \ \|Z_t \|_2 > \|Z_s \|_2$ with $0\leq s<t \leq T$ and for a measure $\nu_{J} \in \Db(V_{-J})$ we have $ Z_0 \sim \bm{F}_{\phi,J|1}\nu_{J}$.
Then, VDVAEs approximates this process, and its residual cells are a type of two-step forward Euler discretisation of this Stochastic Differential Equation (SDE). %

\begin{proof}
The evolution 
\begin{align}
    d Z_t =  (\bmu_{1,t} (Z_t) + \bmu_{2,t} (Z_t) )dt + \bsig_t(Z_t) dW_t,
\end{align}
subject to $Z_0 = 0,  \|Z_t \|_2 > \|Z_s \|_2$ and $X_T, \, Z_0 \sim \bm{F}_{\phi,J|1}\nu_{J}$.
By Theorem \ref{thm:truncation} we know $\bm{F}_{\phi,J|1}\nu_{J}$ enforces the form  
\begin{align}
    \proj_{V_1} \overline{F}_{\phi,J|1}\nu_{J} \otimes \bigotimes_{j=1}^{J-1} \delta_{\{0 \} }
\end{align}
when $\phi$ is trained with a reconstruction loss. 
By Theorem \ref{thm:discrete-VDVAE}, the full cost used imposes $\proj_{V_1} \overline{F}_{\phi,J|1}\nu_{J} = \delta_{\{0 \} }$, further, VDVAE initialises $Z_0 = \delta_{ \{ 0 \} }$.
This enforces $Z_0 = 0$ as $Z_0 \sim \delta_{\{0\}}$.
For the backward SDE, consider the splitting 
\begin{align}
    d Z_t^{(1)} =\bmu_{1,t} (Z_t^{(1)})dt + \bsig_t(Z_t^{(1)}) dW_t,
    &&
    d Z_t^{(2)} = \bmu_{2,t} (Z_t^{(2)}) dt,
\end{align}
where $dZ_t = d Z_t^{(1)} + d Z_t^{(2)}$ when $Z_t = Z_t^{(1)} = Z_t^{(2)}$.
For the split SDE make the forward-Euler discretisation 
\begin{align}
     Z_{i+1}^{(1)} = Z_{i}^{(1)} + \int_{i}^{i+1} \bmu_{1,t} (Z_t^{(1)}) dt + \int_{i}^{i+1} \bsig_t( Z_{i}^{(1)}) dW_t \approx Z_{i}^{(1)} + \bmu_{1,i} (Z_i^{(1)})+  \bsig_i( Z_{i}^{(1)}) (W_1).
\end{align}
Now the second deterministic component can also be approximated with a forward-Euler discretisation 
\begin{align}
     Z_{i+1}^{(2)}  = Z_{i}^{(2)}  + \int_{i}^{i+1} \bmu_{2,t} (Z_t^{(2)}) dt \approx Z_{i}^{(2)} +  \bmu_{2,i}  (Z_{i}^{(2)}).
\end{align}
As $Z_0 = 0$, we need only show the update at a time $i$, so assume we have $Z_i$. 
First we update in the SDE step, so make the assignment and update
\begin{align}
    Z_{i}^{(1)} \leftarrow Z_{i}, && Z_{i+1}^{(1)} =  Z_{i}^{(1)} + \bmu_{i,1} (Z_i^{(1)}) + \bsig_i(Z_i^{(1)}) \Delta W_1.
\end{align}
Now assign $Z_i^{(2)} \leftarrow Z_{i+1}^{(1)}$ so we may update in the mean direction with 
\begin{align}
    Z_{i+1}^{(2)} = Z_{i}^{(2)} + \bmu_{i,2} (Z_i^{(2)}) ,
\end{align}
with the total update $Z_{i+1} \leftarrow Z_{i+1}^{(2)}$.
This gives the cell update for NVAE in Figure \ref{fig:hvae_cells_all}.
To help enforce the growth $\|Z_t \|_2 > \|Z_s \|_2$, VDVAE splits $Z_{i}^{(1)} = Z_{i} + Z_{i,+}$ where $Z_{i,+}$ increases the norm of the latent process $Z_t$.
This connection and the associated update are illustrated in Figure \ref{fig:hvae_cells_all} [left].
Note here that if no residual connection through the cell was used (just the re-parameterisation trick in a VAE), then we degenerate to a standard Markovian diffusion process and yield the Euler-Maruyama VAE structure in Figure \ref{fig:hvae_cells_all} [right].

\begin{remark}
To simplify the stepping notation in the HVAE backward cells (Figures \ref{fig:hvae_cell_vdvae} and \ref{fig:hvae_cells_all}), we use $Z_{i}^{(1)} = Z_{i} + \bmu_{1,i} (Z_i)+  \bsig_i( Z_{i}) (W_1)$ and $Z_{i}^{(2)} = Z_{i}^{(1)} + \bmu_{i,2} (Z_i^{(1)}) $ so that the index $i$ refers to all computations of the $i^{th}$ backward cell. 
\end{remark}

\end{proof}

\subsection{Time-homogenuous model}
\label{app:Time-homogenuous model}

Recall VDVAE has the continuous time analogue
\begin{align}
d Z_t =  (\bmu_{1,t} (Z_t) + \bmu_{2,t} (Z_t) )dt + \bsig_t(Z_t) dW_t,
\end{align}
where  $Z_0 = 0,   \ \|Z_t \|_2 > \|Z_s \|_2$ with $0\leq s<t \leq T$ and for a measure $\nu_{J} \in \Db(V_{-J})$.
Due to Theorem \ref{thm:discrete-VDVAE}, we know that the initial condition of VDVAE's U-Net is the point mass $\delta_{\{0\}}$.
As the backwards pass flows from zero to positive valued functions, this direction is increasing and the equation is stiff with few layers.
The distance progression from zero is our proxy for time, and we can use its `position' to measure this.
Thus, the coefficients $ \bmu_{t,1},\bmu_{t,2}, \bsig_t$ need not have a time dependence as this is already encoded in the norm of the $Z_t$ processes.
Thus, the time-homogeneous model postulated in the main text is: 
\begin{align}
    d Z_t = (\bmu_{1} (Z_t) + \bmu_{2} (Z_t)) dt + \bsig(Z_t) dW_t,  \label{eq:time_homog_1}
    \\
    Z_0 = 0,  \  \|Z_t \|_2 > \|Z_s \|_2  \label{eq:time_homog_2}.
\end{align}
In practice, the loss of time dependence in the components corresponds to weight sharing the parameters across time, as explored in the experimental section. 
Weight sharing, or a time-homogeneous model, is common for score based diffusion models \cite{sohl2015deep,ho2020denoising}, and due to our identification we are able to utilise this for HVAEs.

\subsection{HVAE Sampling}
Here we use our framework to comment on the sampling distribution imposed by the U-Net within VDVAE. 

\textbf{Theorem \ref{thm:discrete-VDVAE}. }
Consider the SDE in Eq.~\eqref{eq:coupled_sdes}, trained through the ELBO in Eq. \ref{eq:hvae_elbo}.
Let $\tilde{\nu}_{J}$ denote the data measure and $\nu_{0}= \delta_{\{0\}}$ be the initial multi-resolution bridge measure imposed by VDVAEs.
If $q_{\phi,j}$ and $p_{\theta,j}$ are the densities of $B_{\phi,1|j}\bm{F}_{J|1}\tilde{\nu}_{J}$ and  $B_{\theta,1|j}\nu_{0}$ respectively, then a VDVAE optimises the boundary condition $\min_{\theta,\phi} KL(q_{\phi,0,1} || q_{\phi,0}p_{\theta,1} )$, where a double index indicates the joint distribution.
\begin{proof}
We need to only show two things.
First, due to Theorems \ref{thm:truncation} and \ref{thm:vdvae_sde}, we know that the architecture imposes 
\begin{align}
    \proj_{V_1} \overline{F}_{\phi,J|1}\nu_{J} \otimes \bigotimes_{j=1}^{J-1} \delta_{\{0 \} },
\end{align}
so we must analyse how $\proj_{V_1} \overline{F}_{\phi,J|1}\nu_{J}$ is trained.
Second, we use Theorem \ref{thm:vdvae_sde} to view the discretised version of the continuous problem, and identify the error in the two-step forward Euler splitting. 

On the first point, VDVAE uses an ELBO reconstruction with a KL divergence between the backwards pass of the data $\overline{B}_{\phi,J|1}\overline{F}_{\phi,J|1}\tilde{\nu}_{J}$ (the `$q_{\phi}$-distribution'), and the backwards pass of the model imposed by the U-Net $\overline{B}_{\phi,J|1}\nu_{0}$ (the `$p_{\theta}$ distribution').
As $Z_0$ is zero initialised, we know $\nu_{0} = \delta_{\{0\}}$.
We need to show the cost function used imposes this initialisation  on $\overline{B}_{\phi,1|0}\overline{F}_{\phi,J|0}\tilde{\nu}_{J}$.
Let $X_T \sim \overline{F}_{\phi,J|0}\tilde{\nu}_{J}$, call the distribution of this $q_{\phi,0}$.
We also use $Z_1 \sim q_{\phi,1}$ for a sample from $\overline{B}_{\phi,1|0}\overline{F}_{\phi,J|0}\tilde{\nu}_{J}$ and 
$Z_1 \sim p_{\theta,1}$ for a sample from $\overline{B}_{\phi,1|0} \nu_0$.
For a realisation $x$ of $X_T$, VDVAE computes 
\begin{align}
    KL(q_{\phi,1|0} (\sds | X_T = x) || p_{\theta,1|0}(\sds | Z_0 = 0) ) = KL(q_{\phi,1|0} (\sds | X_T = x) || p_{\theta,1}(\sds) ),
\end{align}
which in training is weighted by each datum, so the total cost in this term is 
\begin{align}
    \int KL(q_{\phi,1|0} (\sds | X_T = x) || p_{\theta,1}(\sds) ) q_{\phi,0}(X_T = x) dx.
\end{align}
But this is equal to,
\begin{align}
    &\int \int \log
    \left(
    \frac{q_{\phi,1|0}(Z_1 = z | X_T = x)}{p_{\theta,1}(Z_1 = z_1)}
    \right)
    q_{\phi,1|0}(Z_1 = z | X_T = x) 
    q_{\phi,0}(X_T = x)
    dz
    dx
    \\
    &=
     \int \int \log
    \left(
    \frac{q_{\phi,0,1}(Z_1 = z , X_T = x)}{p_{\theta,1}(Z_1 = z_1)q_{\phi,0}(X_T = x)}
    \right)
    q_{\phi,0,1}(Z_1 = z , X_T = x) 
    dz
    dx
    \\
    &= KL(
    q_{\phi,0,1}(Z_1  , X_T )
    ||
    p_{\theta,1}(Z_1 )q_{\phi,0}(X_T ))
    =
    KL(
    q_{\phi,0,1}
    ||
    p_{\theta, 1}q_{\phi,0}).
\end{align}
The distribution of $p_{\theta,1}$ is Gaussian as a one time step diffusion evolution from the initial point mass $\nu_{0} = \delta_{\{0\}}$. 

\end{proof}
Theorem \ref{thm:id-diff} states that the choice of the initial latent variable in VDVAE imposes a boundary condition on the continuous SDE formulation.
Further, this boundary condition is enforced into the final output $X_T$ of the encoder within VDVAE.

\begin{landscape}
\vspace{1cm}
\begin{figure}[h!]
    \centering
    \includegraphics[width=1\linewidth]{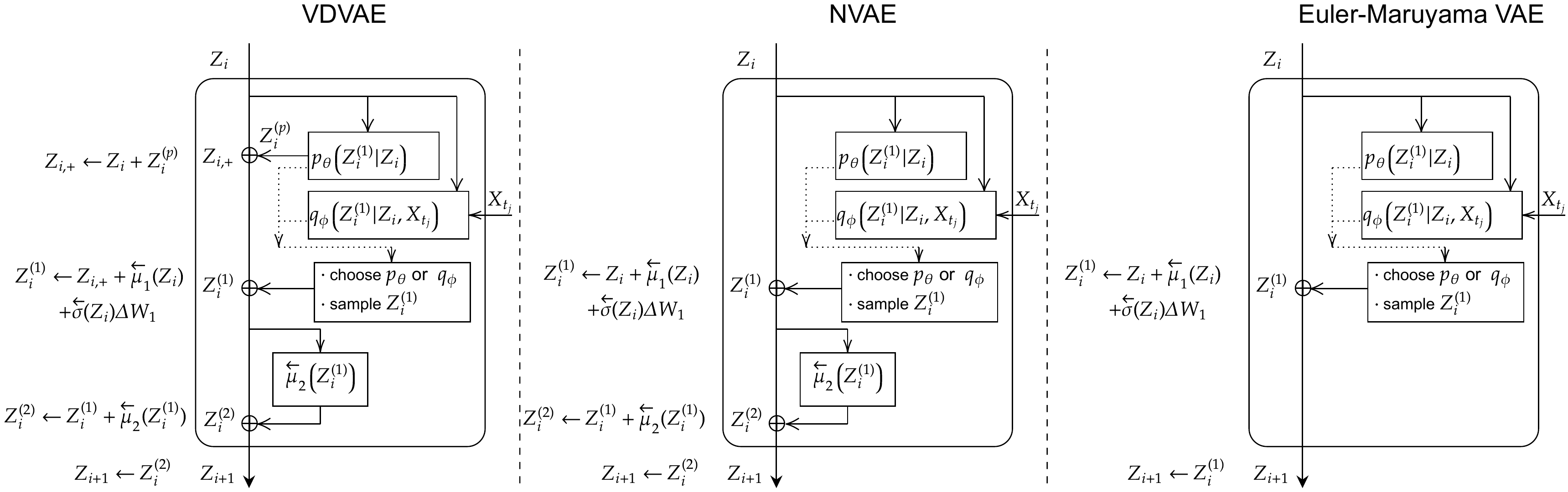}  
    \caption{
    HVAE top-down cells are resembling two-step forward Euler discretisations of a continuous-time diffusion process in Eq. \ref{eq:coupled_sdes}.
    We here provide the residual cell structures of [left] VDVAE \cite{Child2020VeryImages}, [middle] NVAE \cite{Vahdat2020NVAE:Autoencoder} and [right] a Euler-Maruyama VAE.
    Either $q_\phi$ (conditional) or $p_\theta$ (unconditional) are used in the sampling step (indicated by the dotted lines) during training and generation, respectively.
    $X_{t_j}$ is an input from the in effect non-stochastic bottom-up pass, $Y_i$ is the input from the previous, and $Y_{i+1}$ the output to the next residual cell.
    $\oplus$ indicates element-wise addition.
    }
    \label{fig:hvae_cells_all}
\end{figure}
\end{landscape}

\newpage

\newpage

\section{Background}
\label{sec:Background}

\ff{
}

\subsection{Multi-Resolution Hierarchy and thought experiment}
\label{app:Thought experiment on multiresolution analysis}

Let $\Xb \subset \R^m$ be compact and $L^2(\Xb)$ be the space of square-integrable functions over this set. 
We are interested in decomposing $L^2(\Xb)$ across multiple resolutions.

\textbf{Definition \ref{def:multi-res-approx-space} (abbreviated).}
A \textit{multi-resolution hierarchy} is one of the form 
\begin{align}
    \cdots \subset V_1 \subset V_0 \subset V_{-1} \subset \cdots \label{assumption:nesting}\\
    \overline{\bigcup_{j \in \Z} V_{-j}} = L^2(\R^m) \\
    \bigcap_{j \in \Z} V_{-j} = \{0 \} \\
    f (\cdot) \in V_{-j} \iff f(2^j \, \cdot \, ) \in V_0 \\
    f (\cdot) \in V_0 \iff f( \, \cdot \, -n ) \in V_0\text{, for }n \in \Z. 
\end{align}

Each $V_{-j}$ is a finite truncation of $L^2(\Xb)$.
What we are interested in is to consider a function $f \in L^2(\Xb)$ and finding a finite dimensional approximation in $V_{-J}$, say, for $J>0$. 
Further, for gray-scale images, $\Xb = [0,1]^2$, the space of pixel-represented images.
To simplify notation, we just consider $\Xb = [0,1]$ for the examples below, but we can extend this to gray-scale images, and to colour images with a Cartesian product.

The `pixel' multi-resolution hierarchy is given by the collection of sub-spaces
\begin{align}\label{eq:Haar-1d-Vj}
    V_{-j} 
    = 
    \{
    f \in L^2([0,1]) \ | \ f|_{[2^{-j} \cdot k, 2^{-j} \cdot (k+1) )} = c_k , \, k \in \{0, \dots, 2^{j}-1 \}, \, c_k \in \R 
    \}.
\end{align}
It can be readily checked that these sub-spaces obey the assumptions of Definition \ref{def:multi-res-approx-space}.
An example image projected into such sub-spaces, obtained from a discrete Haar wavelet transform, is illustrated in Fig. \ref{fig:thought_experiment}.
We call it the pixel space of functions as elements of this set are piece-wise constant on dyadically split intervals of resolution $2^{j}$, i.e a pixelated image. 
For each $V_{-j}$ there is an obvious basis of size $2^{j}$ where we store the coefficients $(c_0, c_1, \dots, c_{2^j-1}) \in \R^{2^j}$.
The set of basis vectors for it is the \textit{standard basis} $\{e_i \}_{i=0}^{2^j-1}$ which are $0$ for all co-ordinates except for the $i^{th}+1$ entry which is $1$.
This basis is not natural to the multi-resolution structure of $V_{-j}$. 
This is because all the basis functions change when we project down to $V_{-j+1}$.
We want to use the multi-resolution structure to create a basis which naturally relates $V_{-j}$, $V_{-j+1}$, and any other sub-space. 
To do this consider $V_{-j} \cap V_{-j+1}^{\perp} \subset V_{-j}$. 
Define this orthogonal compliment to be $U_{-j+1} \coloneqq V_{-j+1}^{\perp}$, then see $V_{-j} = V_{-j+1} \oplus U_{-j+1}$. 
Doing this recursively finds $V_{-j} = V_0 \oplus \bigoplus_{i=0}^{-j+1} U_i$, and taking the limit 
\begin{align}
    L^2(\Xb) = \bigoplus_{i=0}^{-\infty} U_i \oplus V_0.
\end{align}
Each of the sub-spaces $\{U_{-j}\}_{j=0}^{\infty}$ are mutually orthogonal as each $V_{-j} \perp U_{-j}$.
Now suppose we had a basis set $\bm{\Psi}_j$ for each $U_{-j}$ and $\bm{\Phi}_0$ for $V_0$.
As these spaces are orthogonal, so are the basis sets to each other, too.
We can make a basis for $V_{-j}$ with $\text{span}(\bm{\Phi}_0, \bm{\Psi}_0, \cdots, \bm{\Psi}_{-j+1})$.
For the above examples, $V_0$ needs only a single basis function $\phi_{0,k} = \1_{[k,k+1)}$, further if $\psi = \sqrt{2} (\1_{[0,1/2)} - \1_{[1/2,1)})$, then given the functions $x \mapsto \psi_{j,k}(x) \coloneqq 2^{j/2} \cdot \psi (2^{-j} (x - k))$ we have $\{\psi_{j,k}\}$ is a basis for $V_{-j}$.

\clearpage

\begin{figure}[ph!]
    \centering
    \begin{tabular}{cc}
    \includegraphics[width=.3\linewidth]{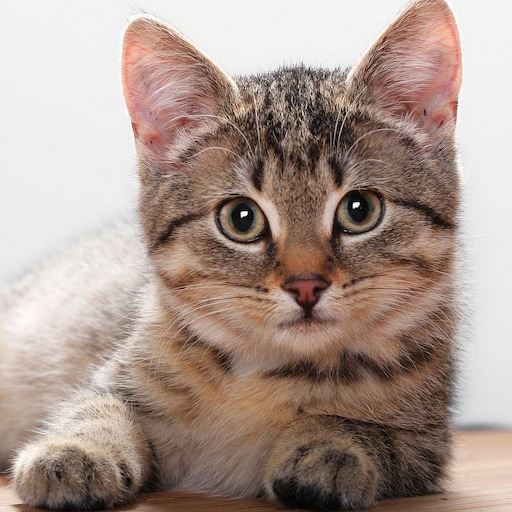} & 
    \includegraphics[width=.3\linewidth]{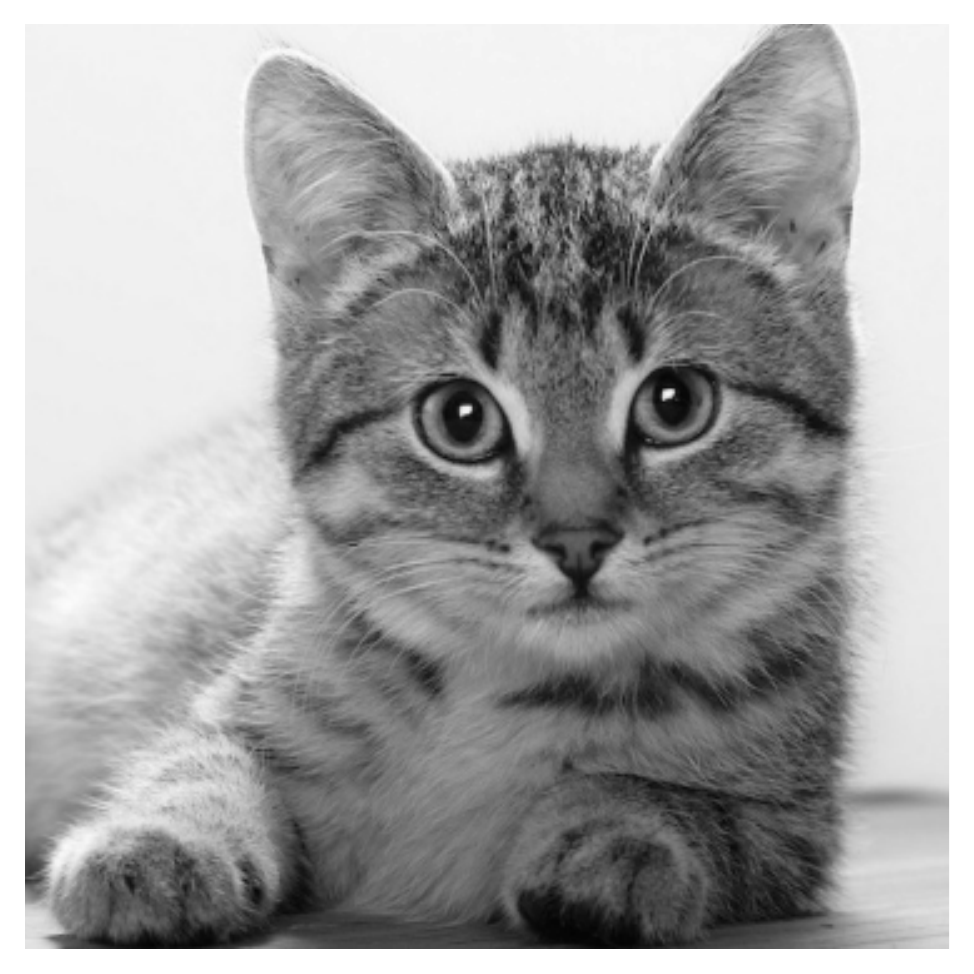} \\
    original $(512 \times 512)$ & gray-scaled original $(512 \times 512)$ \\
    \end{tabular}
    \begin{tabular}{ccc}
    \includegraphics[width=.3\linewidth]{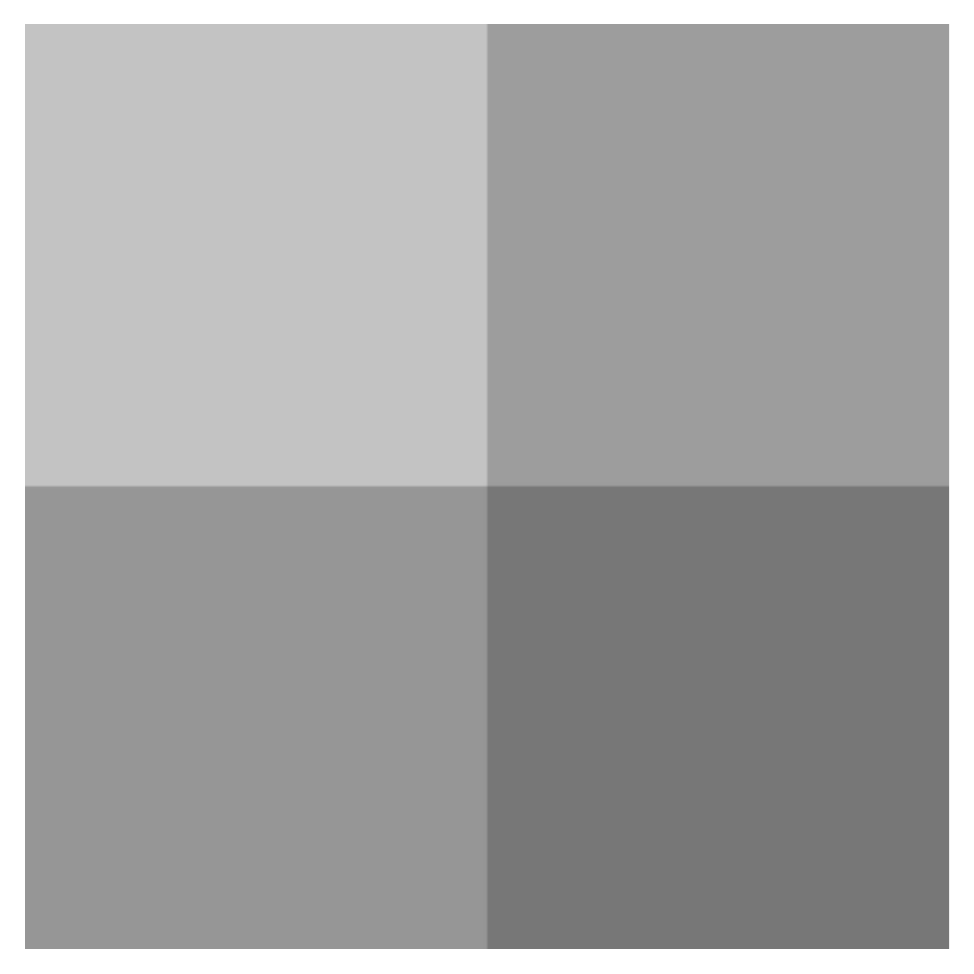} &
    \includegraphics[width=.3\linewidth]{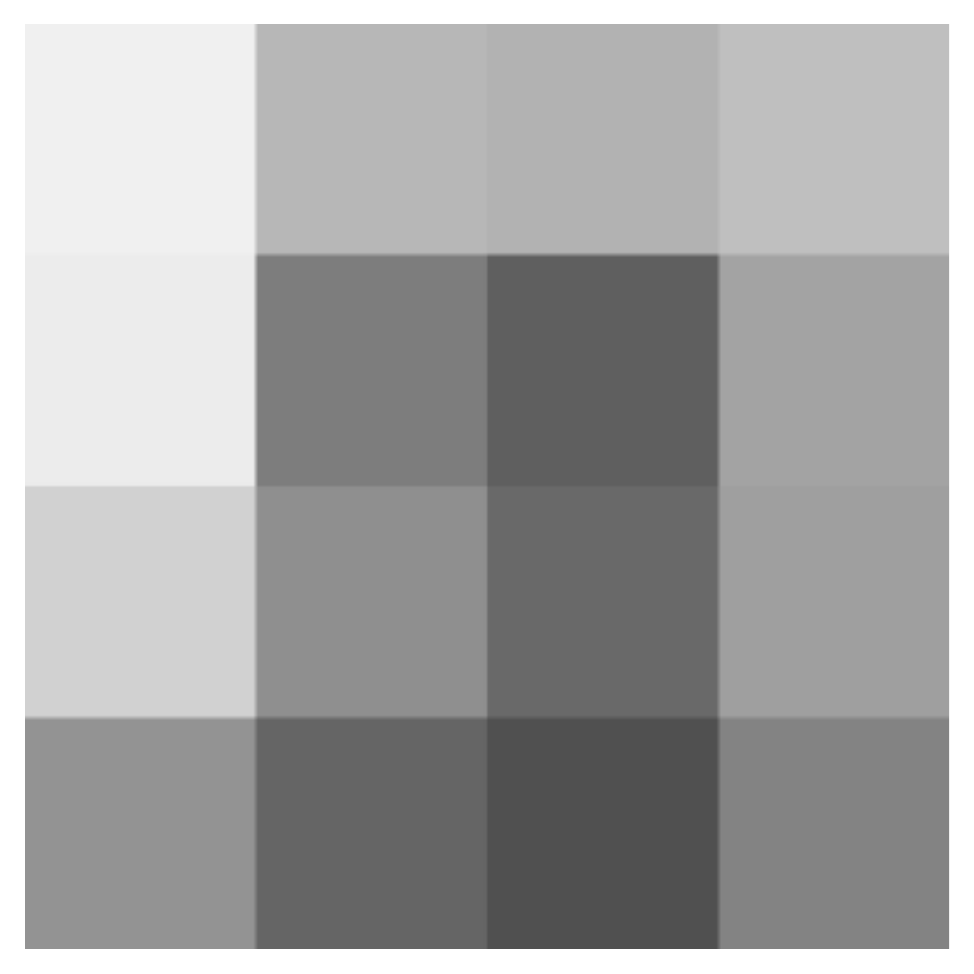} & \includegraphics[width=.3\linewidth]{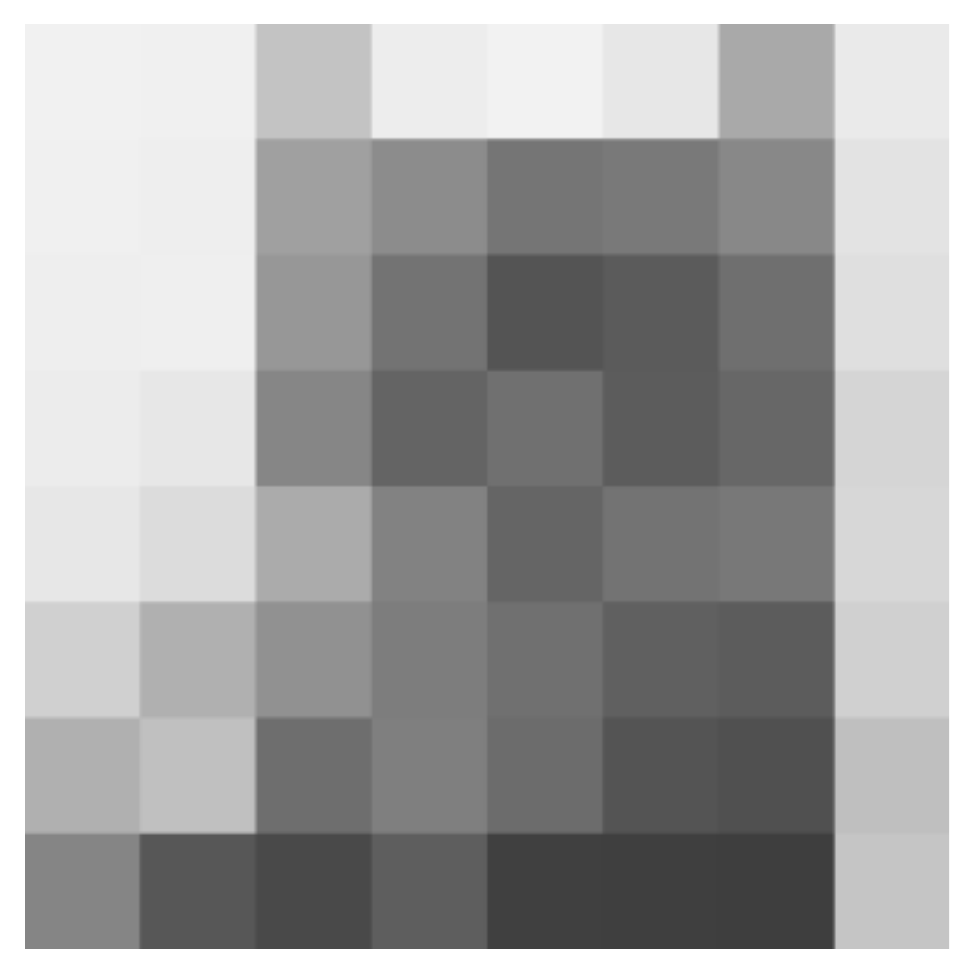} \\
    $V_{-1} \text{ } (2 \times 2)$ & $V_{-2} \text{ } (4 \times 4)$ & $V_{-3} \text{ } (8 \times 8)$  \\
    \includegraphics[width=.3\linewidth]{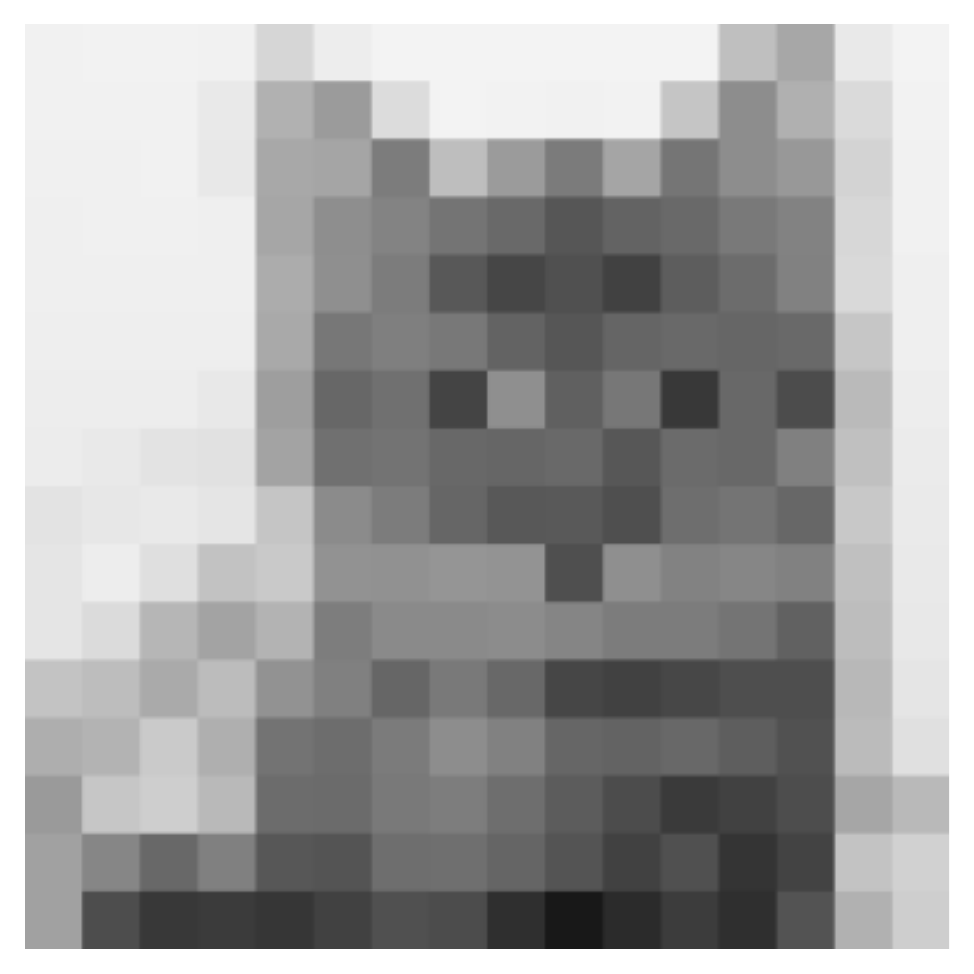} & \includegraphics[width=.3\linewidth]{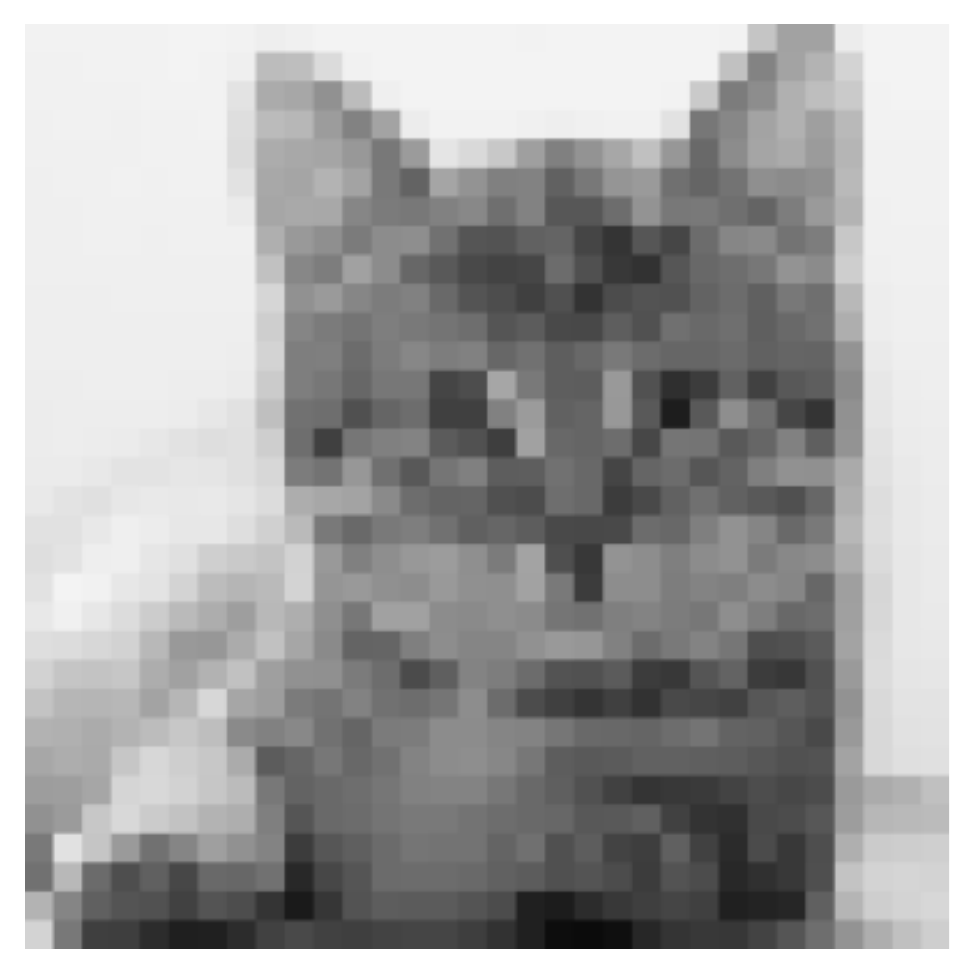} & 
    \includegraphics[width=.3\linewidth]{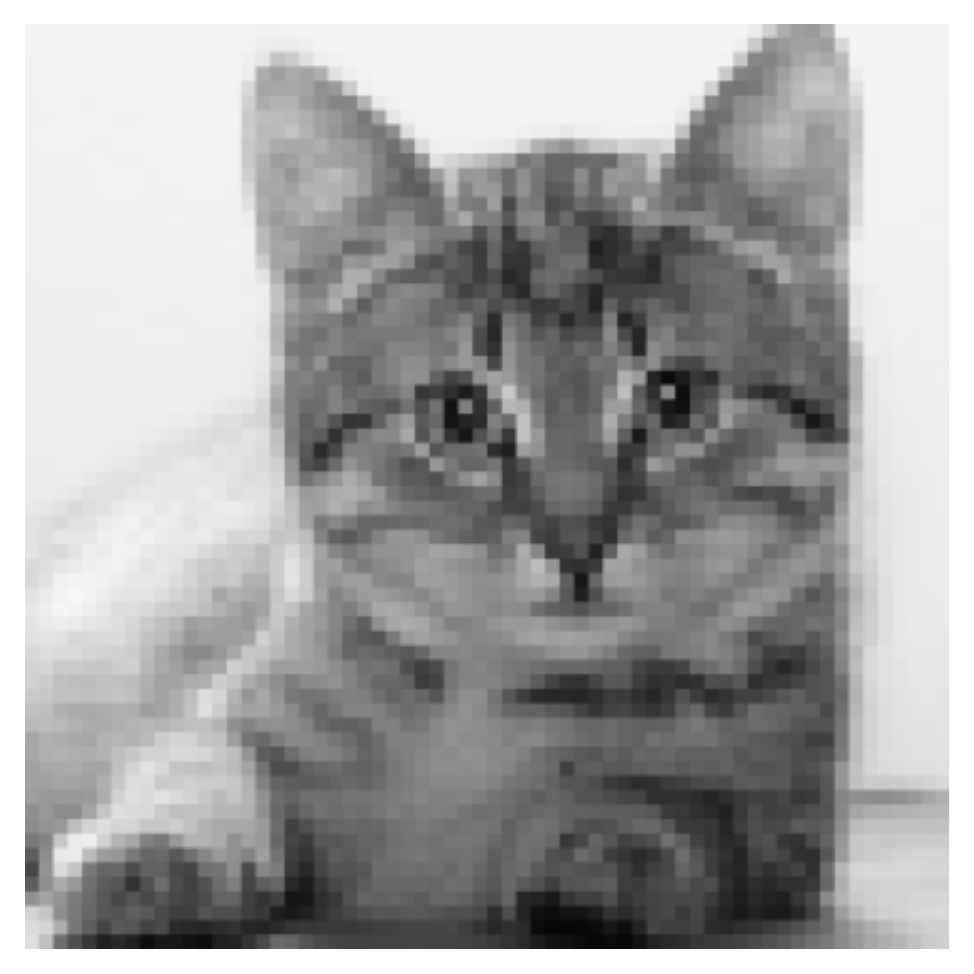}  \\
    $V_{-4} \text{ } (16 \times 16)$ & $V_{-5} \text{ } (32 \times 32)$ & $V_{-6} \text{ } (64 \times 64)$ \\
     \includegraphics[width=.3\linewidth]{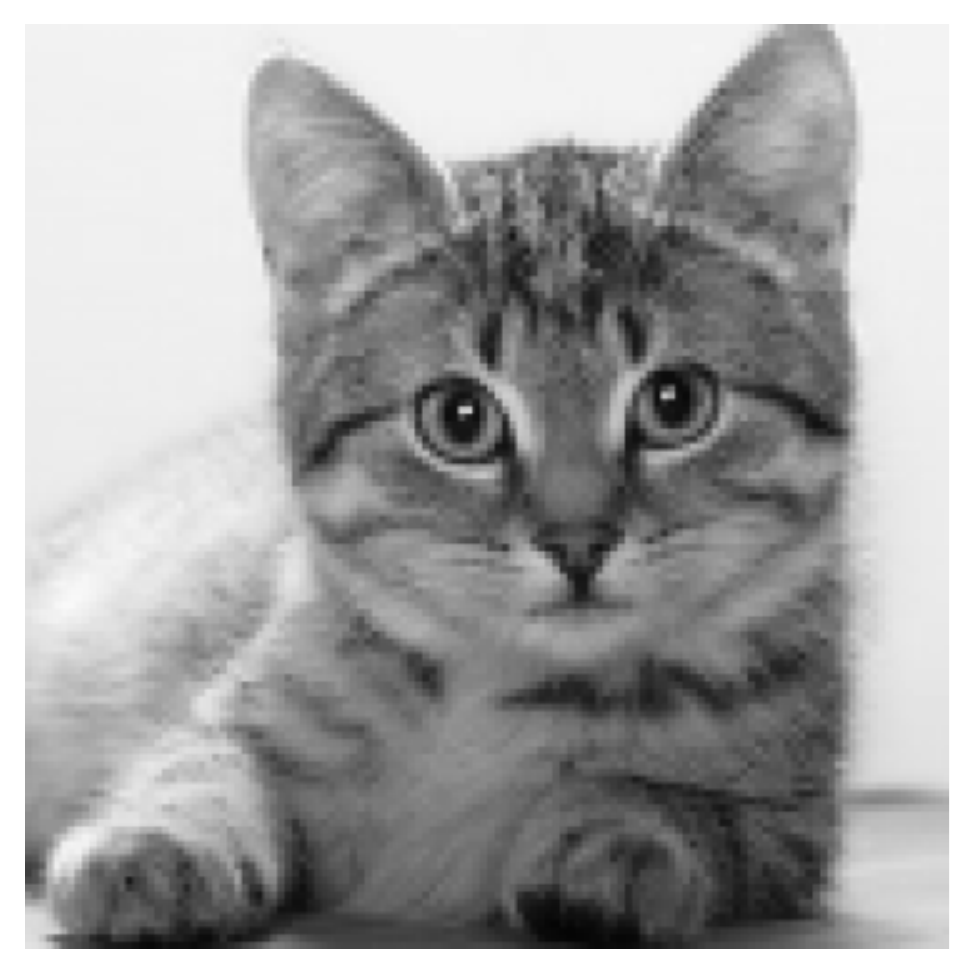} & 
     \includegraphics[width=.3\linewidth]{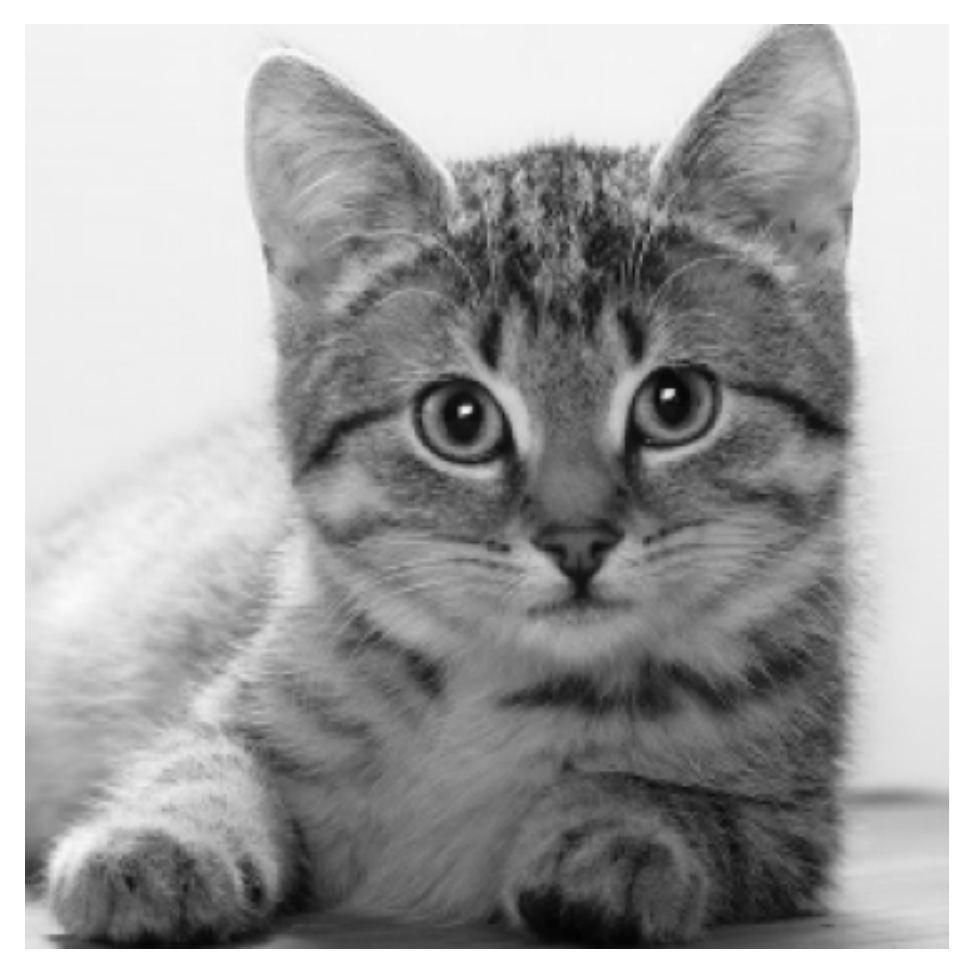} & 
     \includegraphics[width=.3\linewidth]{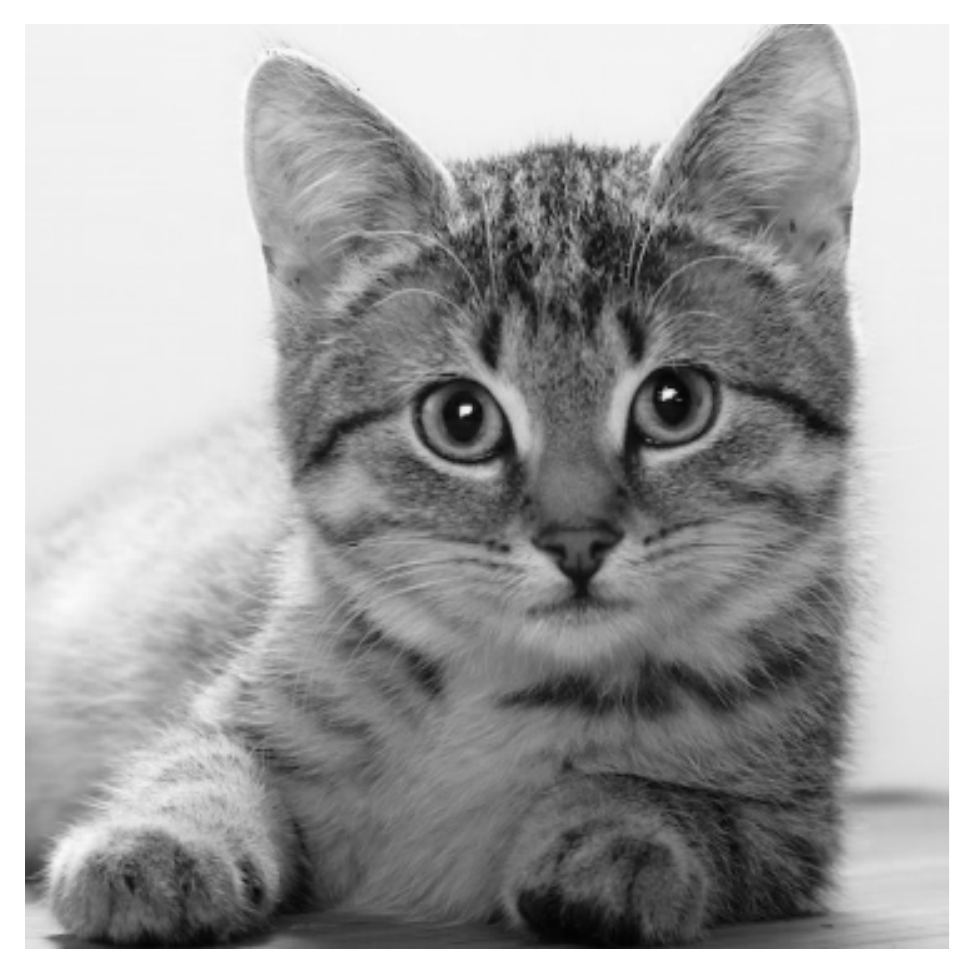} \\
     $V_{-7} \text{ } (128 \times 128)$ & $V_{-8} \text{ } (256 \times 256)$ & $V_{-9} \text{ } (512 \times 512)$ \\
    \end{tabular}
    \caption{
    The thought experiment discussed in \S \ref{sec:multi-res}.
    The original colour image [top-left], its gray-scale version [top-right], and its
    Haar wavelet projections to the approximation spaces $V_{-j}$ for $j \in \{ 1, \dots, 9 \}$. 
    }
    \label{fig:thought_experiment}
\end{figure}

\newpage
\subsection{U-Net}
\label{app:U-Net Model}

\begin{wrapfigure}[36]{r}{.45\textwidth}
\vspace{-2em}
\[\begin{tikzcd}
	\textcolor{rgb,255:red,58;green,51;blue,255}{V_{-J}} & {V_{-J}} && {V_{-J}} \\
	\textcolor{rgb,255:red,58;green,51;blue,255}{\text{out}} & {\text{out}} && {\text{in}} \\
	\textcolor{rgb,255:red,58;green,51;blue,255}{V_{-j}} & {V_{-j}} && {V_{-j}} \\
	\\
	\textcolor{rgb,255:red,58;green,51;blue,255}{V_{-j}} & {V_{-j}} && {V_{-j}} \\
	\textcolor{rgb,255:red,58;green,51;blue,255}{V_{-j+1}} & {V_{-j+1}} && {V_{-j+1}} \\
	\textcolor{rgb,255:red,58;green,51;blue,255}{\text{out}} & {\text{out}} && {\text{in}} \\
	\textcolor{rgb,255:red,58;green,51;blue,255}{V_0} & {V_0} && {V_0}
	\arrow["{P_{-j+1}}", from=5-4, to=6-4]
	\arrow["{E_{-j}}", from=6-2, to=5-2]
	\arrow["{B_{j,\theta}}", shorten <=10pt, shorten >=10pt, from=5-2, to=3-2]
	\arrow[draw={rgb,255:red,128;green,128;blue,128}, from=2-4, to=3-4]
	\arrow[draw={rgb,255:red,128;green,128;blue,128}, from=3-2, to=2-2]
	\arrow[draw={rgb,255:red,128;green,128;blue,128}, from=6-4, to=7-4]
	\arrow[draw={rgb,255:red,128;green,128;blue,128}, from=7-2, to=6-2]
	\arrow[dotted, no head, from=7-4, to=8-4]
	\arrow["{\text{Bend / Bottleneck}}"', from=8-4, to=8-2]
	\arrow[dotted, no head, from=8-2, to=7-2]
	\arrow[dotted, no head, from=2-4, to=1-4]
	\arrow["{B_{j,\phi}}", color={rgb,255:red,58;green,51;blue,255}, shorten <=10pt, shorten >=10pt, from=5-1, to=3-1]
	\arrow[draw={rgb,255:red,58;green,51;blue,255}, from=6-1, to=5-1]
	\arrow[draw={rgb,255:red,58;green,51;blue,255}, from=7-1, to=6-1]
	\arrow[draw={rgb,255:red,58;green,51;blue,255}, dotted, no head, from=8-1, to=7-1]
	\arrow[draw={rgb,255:red,58;green,51;blue,255}, from=3-1, to=2-1]
	\arrow[draw={rgb,255:red,58;green,51;blue,255}, dotted, no head, from=2-1, to=1-1]
	\arrow[dotted, no head, from=2-2, to=1-2]
	\arrow["{F_{j,\theta}}", shorten <=10pt, shorten >=10pt, from=3-4, to=5-4]
	\arrow["{\text{Skip connection}}"', dashed, from=5-4, to=5-2]
\end{tikzcd}\]
\caption{
The repeated structure in a U-Net, where $V_{-j+1}$ is a lower dimensional latent space compared to $V_{-j}$.
$f_{j,\theta}, b_{j,\theta}$ are in practice typically parameterised by neural networks (e.g. convolutional neural networks); $P_{-j+1}$ is a dimension reduction operation (e.g. average pooling) to a lower-dimensional latent space; and, $E_{-j}$ is a dimension embedding operation (e.g. deterministic interpolation) to a higher-dimensional latent space. 
This structure is repeated to achieve a desired dimension of the latent space at the U-Net bottleneck.
}  %
\label{fig:U-Net}
\end{wrapfigure}
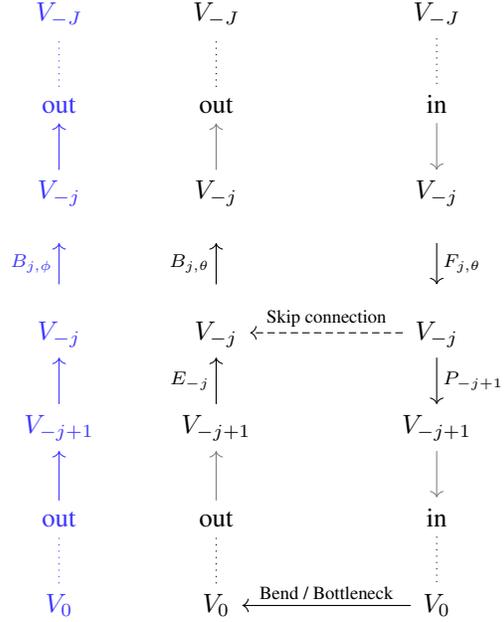

In practice, a U-Net \cite{ronneberger2015u} is a neural network structure which consists of a forward pass (encoder) and backward pass (decoder), wherein layers in the forward pass have some form of dimension reduction, and layers in the backward pass have some form of dimension embedding. 
Furthermore, there are `skip connection' between corresponding layers on the same level of the forward and backward pass.

We now formalise this notion, referring to an illustration of a U-Net in Fig. \ref{fig:U-Net}.
In black, we label the latent spaces to be $V_{-j}$ for all $j$, where the original data is in $V_{-J}$ and the U-Net `bend' (bottleneck) occurs at $V_0$.
We use $f_{j,\theta}$ to be the forward component, or encoder, of the U-Net, and similarly $b_{j,\theta}$ as the backward component or decoder, operating on the latent space $V_{-j}$.
$P_{-j+1}$ refers to the dimension reduction operation between latent space $V_{-j}$ and $V_{-j+1}$, and $E_{-j}$ refers to its corresponding dimension embedding operation between latent spaces $V_{-j+1}$ and $V_{-j}$.
A standard dimension reduction operation in practice is to take $P_{-j+1}$ as average pooling, reducing the resolution of an image. 
Similarly, the embedding step may be some form of deterministic interpolation of the image to a higher resolution. 
We note that the skip connection in Fig. \ref{fig:U-Net} occur before the dimension reduction step, in this sense, lossless information is fed from the image of $f_{j,\theta}$ into the domain of $b_{j,\theta}$.

In blue, we show another backward process $b_{j,\phi}$ that is often present in U-Net architectures for generative models. 
This second backward process is used for unconditional sampling. 
In the context of HVAEs, we may refer to it as the (hierarchical) prior (and likelihood model).
It is trained to match its counterpart in black, without any information from the forward process. 
In HVAEs, this is enforced by a KL-divergence between distributions on the latent spaces $V_{-j}$.
The goal of either backward process is as follows:
\begin{enumerate}
    \item $b_{j,\theta}$ must be able to reconstruct the data from $f_{j,\theta}$, and in this sense it is reasonable to require $b_{j,\theta} f_{j,\theta} = I$;
    \item $b_{j,\phi}$ must connect the data to a known sampling distribution.
\end{enumerate}
The second backward process can be absent when the backward process $b_{j,\theta}$ is imposed to be the inverse of $f_{j,\theta}$, such as in Normalising Flow based models, or reversible score-based diffusion models.
In this case the invertibility is assured, and the boundary condition that the encoder connects to a sampling distribution must be enforced. 
For the purposes of our study, we will assume that in the absence of dimension reduction, the decoder is constrained to be an inverse of the encoder. 
This is a reasonable assumption: for instance, in HVAEs near perfect data reconstructions are readily achieved. 

For variational autoencoders, the encoder and decoder are not necessarily deterministic and involve resampling. 
To encapsulate this, we will work with the data as a measure and have $F_{\theta, j}$ and $B_{\theta, j}$ as the corresponding kernels imposed by $f_{j,\theta}$ and $b_{j,\theta}$, respectively. 

With all of these considerations in mind, for the purposes of our framework we provide a definition of an idealised U-Net which is an approximate encapsulation of all models using a U-Net architecture.

\begin{definition}(Idealised U-Net for generative modelling) \\
For each $j \in \{0, \dots ,J \}$, let $F_{j, \theta}, B_{j, \theta} : \Db(V_{-j}) \mapsto \Db(V_{-j})$ such that $B_{j, \theta}F_{j, \theta} \equiv I_{V_{-j}}$. 
A U-Net with (average pooling) dimension reduction $P_{-j+1}$ and dimension embedding $E_{-j}$ is the operator $\mathbf{U} : \Db(V_{-J}) \mapsto \Db(V_{-J})$ given by 
\begin{align}
    \mathbf{U} \coloneqq B_{J, \theta} E_{-J} \circ \cdots \circ  B_{1, \theta} E_{-1} \circ  P_{0} F_{1, \theta} \circ \cdots  \circ P_{-J+1}  F_{J, \theta},
    &&
    B_{j}F_{j} \equiv I.
\end{align}
\end{definition}

\begin{remark}
The condition $B_{j, \theta}F_{j, \theta} \equiv I_{V_{-j}}$ in our idealised U-Net (for unconditional sampling here) is either imposed directly (reversible flow based model), or approximated via skip connections. 
For instance, in our HVAE case, we have both a U-Net without skip connections (the $p$ distribution) and a U-Net with skip connections (the $q$ distribution).
The U-Net related to the $q$ distribution learns how to reconstruct our data from the reconstruction term in the ELBO cost function. 
The U-Net related to the $p$ distribution learns to mimic the $q$ distribution via the KL term in the ELBO of the HVAE, whose decoder is trained to invert its encoder --- $B_{j, \phi}F_{j, \phi} \equiv I_{V_{-j}}$ --- but the $p$ U-Net lacks skip connections. 
Thus, in the HVAE context, we are analysing U-Nets which must simultaneously reconstruct our data and lose their reliance on their skip connections due to the condition that the $q$ U-Net must be approximately equal to the $p$ U-Net.
\\

\end{remark}

\subsection{Hierarchical VAEs}
\label{app:Hierarchical VAEs}

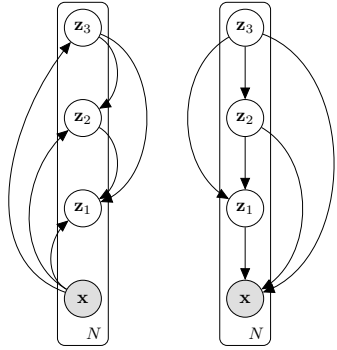
\begin{figure}[h!]
  \centering
    \centering
    \scalebox{.7}{\beginpgfgraphicnamed{recognition_model}
\begin{tikzpicture}

  \node[obs] (x) {$\mathbf{x}$}; %
  \node[latent, above=1 of x] (z1) {$\mathbf{z}_1$}; %
  \node[latent, above=1 of z1] (z2) {$\mathbf{z}_2$}; %
  \node[latent, above=1 of z2] (z3) {$\mathbf{z}_3$}; %

  \draw [->] (x) to [out=150,in=220] (z1);
  \draw [->] (x) to [out=150,in=220] (z2);  %
  \draw [->] (x) to [out=160,in=230] (z3);  %
  
  \draw [->] (z3) to [out=330,in=30] (z2);
  \draw [->] (z2) to [out=330,in=30] (z1);
  \draw [->] (z3) to [out=340,in=20] (z1);

  \plate[] {} {%
    (z1)(z2)(z3)(x)
  } {$N$} ;

\end{tikzpicture}
\endpgfgraphicnamed}
    \scalebox{.7}{\beginpgfgraphicnamed{recognition_model}
\begin{tikzpicture}

  \node[obs] (x) {$\mathbf{x}$}; %
  \node[latent, above=1 of x] (z1) {$\mathbf{z}_1$}; %
  \node[latent, above=1 of z1] (z2) {$\mathbf{z}_2$}; %
  \node[latent, above=1 of z2] (z3) {$\mathbf{z}_3$}; %

  \edge[] {z3} {z2}; %
  \edge[] {z2} {z1}; %
  \edge[] {z1} {x}; %
  
  \draw [->] (z3) to [out=210,in=150] (z1);
  \draw [->] (z3) to [out=340,in=20] (x);
  \draw [->] (z2) to [out=330,in=30] (x);

  \plate[] {} {%
     (z1)(z2)(z3)(x)
   } {$N$} ;

\end{tikzpicture}
\endpgfgraphicnamed}
  \caption{
  Conditioning structure in state-of-the-art HVAE models (VDVAE\cite{Child2020VeryImages} / NVAE\cite{Vahdat2020NVAE:Autoencoder}) with $L=3$. %
  [Left] Amortised variational posterior $q_{\theta}(\vv{z} \mid \v{x})$. [Right] Generative model $p_{\phi}(\v{x},\vv{z})$.
  }
  \label{fig:graph_model}
\end{figure}

A \textit{hierarchical Variational Autoencoder (HVAE)}~\footnote{We closely follow the introduction of hierarchical VAEs in \cite[\S 2.2]{Child2020VeryImages}.} is a VAE \cite{kingma2013auto} where latent variables are separated into $L$ groups $\vv{z} = (\v{z}_1, \v{z}_2, \dots, \v{z}_L)$ which conditionally depend on each other.  %
$L$ is often referred to as stochastic depth.
For convenience, we refer to the observed variable $\v{x}$ as $\v{z}_0$, so $\v{x} \equiv \v{z}_0$.
In HVAEs, latent variables typically follow a `bow tie', U-Net \cite{ronneberger2015u} type architecture with an information bottleneck \cite{Tishby2000TheMethod}, so $\dim(\v{z}_{l+1}) \leq \dim(\v{z}_l)$ for all $l=0, \ldots, L-1$.
Latent variables live on multiple \textit{resolutions}, either decreasing steadily \cite{LVAE, Maale2019BIVA:Modeling} or step-wise every few stochastic layers \cite{Vahdat2020NVAE:Autoencoder,Child2020VeryImages} in dimension.
We consider this multi-resolution property an important characteristic of HVAEs. %
It distinguishes HVAEs from other deep generative models, in particular vanilla diffusion models where latent and data variables are of equal dimension \cite{DDPM}.  %

As in a plain VAE with only a single group of latent variables, an HVAE has a likelihood $p_\phi(\v{x}|\vv{z})$, a prior $p_\phi(\vv{z})$ and an approximate posterior $q_\theta(\vv{z}|\v{x})$. 
To train the HVAE, one optimises the ELBO w.r.t. parameters $\phi$ and $\theta$ via stochastic gradient descent using the reparametrization trick
\begin{equation}
    \log p(\mathcal{D}) \geq \ELBO(\mathcal{D};\theta,\phi) = \expect_{\v x \sim \mathcal{D}}   \underbrace{ [ \expect_{\vv{z} \sim q_\theta(\vv{z}|\v{x})} \left[ \log p_\phi(\v{x}|\vv{z}) \right]}_{\text{Reconstruction loss}} - \underbrace{\KL[q_\theta(\vv{z}|\v{x})||p_\phi(\vv{z})] ] }_{\text{Prior loss} } .  %
    \label{eq:hvae_elbo}
\end{equation}

Numerous conditioning structures of the latent variables in HVAEs exist, and we  review them in \S\ref{sec:Related_work}. 
In this work, we follow \cite{Child2020VeryImages,Vahdat2020NVAE:Autoencoder, Kingma2016ImprovedFlow}: the latent variables in the prior and approximate posterior are estimated in the same order, from $\v{z}_L$ to $\v{z}_1$, conditioning `on all previous latent variables', i.e.

\noindent\begin{minipage}{.5\linewidth}
\begin{align}
    p_\phi(\vv{z}) = p_\phi(\v{z}_L) \prod_{l=1}^{L-1} p_\phi(\v{z}_l|\v{z}_{>l}) \label{eq:p_lvae_factorisation} %
\end{align} 
\end{minipage}
\begin{minipage}{.5\linewidth}
\begin{align}
    q_\theta(\vv{z}|\v{x}) = q_\theta(\v{z}_L|\v{x}) \prod_{l=1}^{L-1} q_\theta(\v{z}_l|\v{z}_{>l}, \v{x}) \label{eq:q_lvae_factorisation} %
\end{align}
\end{minipage}
We visualise the graphical model of this HVAE in Fig.~\ref{fig:graph_model}.
Recent HVAEs \cite{Child2020VeryImages, Vahdat2020NVAE:Autoencoder} capture this dependence on all previous latent variables $\v{z}_{>l}$ in their residual state as shown in \S\ref{sec:Example: HVAEs are Sum Representation Diffusion Discretisations}, imposing this conditional structure.
This implies a 1st-order Markov chain conditional on the previous residual state, not the previous $\v z_l$.
Such 1st-order Markov processes have shown great success empirically, such as in LSTMs \cite{hochreiter1997long}.
Further, note that in all previous work on HVAEs, the neural networks estimating the inference and generative distributions of the $l$-th stochastic layer are \textit{not} sharing parameters with those estimating other stochastic layers.  %

Intuitively, HVAEs' conditional structure together with a U-Net architecture imposes an inductive bias on the model to learn a \textit{hierarchy} of latent variables where each level corresponds to a different degree of abstraction.
In this work, we characterise this intuition via the regularisation property of U-Nets in \S\ref{sec:The regularisation property imposed by U-Net architectures with average pooling}.  

The distributions over the latent variables in both the inference and generative model are Gaussian with mean $\v \mu$ and a diagonal covariance matrix $\bm{\Sigma}$, i.e. for all $l=1, \dots, L$,
\begin{align}
    q_\theta(\v{z}_l|\v{z}_{>l}, \v{x}) &\sim \mathcal{N}(\v \mu_{l, \theta}, \bm{\Sigma}_{l, \theta}), \\
    p_\phi(\v{z}_l|\v{z}_{>l}) &\sim \mathcal{N}(\v \mu_{l, \theta}, \bm{\Sigma}_{l, \theta}), 
\end{align}
where mean and variances are estimated by neural networks with parameters $\phi$ and $\theta$ corresponding to stochastic layer $l$.
Note that $p_\phi(\v{z}_L|\v{z}_{>l}) = p_\phi(\v{z}_L)$, where the top-down block estimating $p_\phi(\v{z}_L)$ receives the zero-vector as input, and $q_\theta(\v{z}_L|\v{z}_{>L}, \v{x}) = q_\theta(\v{z}_L | \v{x})$, meaning that we infer without conditioning on other latent groups at the $L$-th step.
Further, VDVAE chooses $p_\phi(\v{x}|\vv{z})$ to be a discretized Mixture-of-Logistics likelihood.  %

\subsection{Sampling of Time Steps in HVAEs}
\textbf{Monte Carlo sampling of time steps in ELBO of HVAEs. }

We here provide one additional theoretical result.
We show that the ELBO of an HVAE can be written as an expected value over uniformly distributed time steps.

Previous work \cite{Kingma2021VariationalModels} \cite{ho2020denoising} (Eq. (13), respectively) showed that the diffusion loss term $\mathcal{L}_T(\v{x})$ in the ELBO of discrete-time diffusion models can be written as  %
\begin{align}
    \mathcal{L}_{T}(\v{x})=\frac{T}{2} \mathbb{E}_{\boldsymbol{\epsilon} \sim \mathcal{N}(0, \v{I}), i \sim U\{1, T\}}\left[(\operatorname{SNR}(s)-\operatorname{SNR}(t))\left\|\v{x}-\hat{\v{x}}_{\theta}\left(\v{z}_{t} ; t\right)\right\|_{2}^{2}\right]
    \label{eq:mc_diff}
\end{align}
which allows maximizing the variational lower-bound via a Monte Carlo estimator of Eq.~\ref{eq:mc_diff}, sampling time steps.

Inspired by this result for diffusion models, we provide a similar form of the ELBO for an HVAE with factorisation as in Eqs.~\eqref{eq:p_lvae_factorisation}-\eqref{eq:q_lvae_factorisation} (and the graphical model in Fig.~\ref{fig:graph_model}).
An HVAE's ELBO can be written as
\begin{align*}
    \log p(\v{x}) 
    &\geq \expect_{\vv{z} \sim q(\vv{z}|\v{x})} \left[ \log p(\v{x}|\vv{z}) \right] - L \expect_{l \sim \text{Unif}(1, L)} \left[ \expect_{\vv z \sim q(\vv z | \v x)} \log \frac{q(\v z_l | \v z_{>l}, \v x)}{p(\v z_l | \v z_{>l})}  \right].
\end{align*}
\begin{proof}
\begin{align*}
    \log p(\v{x}) 
    &\geq \expect_{\vv{z} \sim q(\vv{z}|\v{x})} \left[ \log p(\v{x}|\vv{z}) \right] - \KL\left[ q(\vv{z}|\v{x}) || p(\vv{z}) \right] \\
    &= \expect_{\vv{z} \sim q(\vv{z}|\v{x})} \left[ \log p(\v{x}|\vv{z}) \right] - \int \intd \vv z q(\vv{z}|\v{x}) \log \left[ \frac{\prod_{l=1}^{L} q(\v z_l | \v z_{>l}, \v x)}{\prod_{l=1}^{L} p(\v z_l | \v z_{>l})} \right] \\
    &= \expect_{\vv{z} \sim q(\vv{z}|\v{x})} \left[ \log p(\v{x}|\vv{z}) \right] - \sum_{l=1}^{L} \int \intd \vv z q(\vv{z}|\v{x}) \log \left[ \frac{q(\v z_l | \v z_{>l}, \v x)}{p(\v z_l | \v z_{>l})} \right] \\
    &= \expect_{\vv{z} \sim q(\vv{z}|\v{x})} \left[ \log p(\v{x}|\vv{z}) \right] - L \expect_{l \sim \text{Unif}(1, L)} \left[ \expect_{\vv z \sim q(\vv z | \v x)} \log \frac{q(\v z_l | \v z_{>l}, \v x)}{p(\v z_l | \v z_{>l})}  \right].
\end{align*}
\end{proof}

This allows reducing the computational and memory costs of the KL-terms in the loss and depends on how many Monte Carlo samples are drawn. 
However, in contrast to diffusion models, all intermediate stochastic layers (up to the top-most and bottom-most layer chosen when sampling time steps in the recognition and generative model, respectively) still need to be computed as each latent variable's distribution depends on all previous ones.

\newpage
\section{Code, computational resources, existing assets used}
\label{app:Code, computational resources, existing assets used}

\textbf{Code. }
We provide our PyTorch code base at \href{https://github.com/FabianFalck/unet-vdvae}{https://github.com/FabianFalck/unet-vdvae}. 
Our implementation is based on, modifies and extends the \href{https://github.com/openai/vdvae}{official implementation of VDVAE} \cite{Child2020VeryImages}.
Below, we highlight key contributions: 
\begin{itemize}
\item We implemented weight-sharing of individual ResNet blocks for a certain number of repetitions.
\item We implemented the datasets and the preprocessing of MNIST and CelebA, which were previously not used with VDVAE.
\item We implemented the option of synchronous and asynchronous processing in time (see Appendix \ref{app:Synchronous vs. asynchronous processing in time}).
\item We implemented Fourier features with hyperparameters choosing their frequencies following VDM \cite{Kingma2021VariationalModels}. 
One can concatenate them at three different locations as options.
\item We simplified the multi-GPU implementation.
\item We implemented an option to convert the VDVAE cell into a non-residual cell (see Appendix \ref{app:On the importance of a stochastic differential equation structure in HVAEs}).
\item We implemented logging of various metrics and plots with weight\&biases.
\item We implemented gradient checkpointing \cite{chen2016training} as an option in the decoder of VDVAE where the bulk of the computation occurs. 
We provide two implementations of gradient checkpointing, one based on the official PyTorch implementation which is unfortunately slow when using multiple GPUs, and a prototype for a custom implementation based on \href{https://github.com/csrhddlam/pytorch-checkpoint}{https://github.com/csrhddlam/pytorch-checkpoint}.
\end{itemize}

The \texttt{README.md} contains instructions on installation, downloading the required datasets, the setup of weights\&biases, and how to reproduce our main results.

\textbf{Computational resources. }
For the majority of time during this project, we used two compute clusters: 
The first cluster is a Microsoft Azure server with two Nvidia Tesla K80 graphic cards with 11GB of GPU memory each, which we had exclusive access to.
The second cluster is an internal cluster with 12 Nvidia GeForce GTX 1080 graphic cards and 10GB of GPU memory each, shared with a large number of users.
In the late stages of the project, in particular to perform runs on ImageNet32, ImageNet64 and CelebA, we used a large-scale compute cluster with A100 graphic cards with 40GB of GPU memory each.
We refer to the acknowledgements section for further details.

In the following, we provide a rough estimate of the total compute required to reproduce our main experiments. 
Compute time until convergence scales with the depth of the HVAEs. 
For the shallower HVAEs in our small-scale experiments in \S \ref{app:add_exp_details_results_``More from less'': Parameter efficiency in HVAEs}, training times range from several days to a week. 
For our larger-scale experiments on MNIST and CIFAR10, training times range between 1 to 3 weeks.
For our deepest runs on ImageNet32 and CelebA, training times range between 2.5 to 4 weeks.

For orientation, in Table \ref{tab:depth_time}, we provide an estimate of the training times of our large-scale runs in Table \ref{tab:sota_quant_comp}.
We note that these runs have been computed on different hardware, i.e. the training times are only to some degree comparable, yet give an indication.

\begin{table}[h!]
\caption{
A large-scale study of parameter efficiency in HVAEs.
For all our runs in Table \ref{tab:sota_quant_comp}, we report their stochastic depth and estimated training time.
}
\centering
\begin{tabular}{cccc} \toprule  
 & Method &  Depth & Training time \\ \midrule   %
\textbf{MNIST}  ($28 \times 28$) \\
& \texttt{WS-VDVAE} (ours) & 57 & $\approx 5$ days \\   %
& \texttt{VDVAE$^*$} (ours) & 43 & $\approx 5$ days  \\   %
\midrule
\textbf{CIFAR10} ($32 \times 32$) \\
& \texttt{WS-VDVAE}  (ours) & 268 & $\approx 18$ days   \\  %
& \texttt{WS-VDVAE}  (ours) & 105 & $\approx 13$ days \\  %
& \texttt{VDVAE$^*$} (ours) & 43 & $\approx 9$ days \\  %
\midrule
\textbf{ImageNet} ($32 \times 32$) \\
& \texttt{WS-VDVAE}  (ours) & 169 & $\approx 20$ days  \\  %
& \texttt{WS-VDVAE}  (ours) & 235 & $\approx 24$ days  \\  %
& \texttt{VDVAE$^*$} (ours) & 78 & $\approx 16$ days \\  %
\midrule
\textbf{CelebA} ($64 \times 64$) \\ 
& \texttt{WS-VDVAE}  (ours) & 125 & $\approx 27$ days   \\  %
 & \texttt{VDVAE$^*$} (ours) & 75 & $\approx 21$ days    \\  %
\bottomrule 
\end{tabular}
\label{tab:depth_time}
\end{table}

\textbf{Existing assets used. }
In the experiments, our work directly builds on top of the \href{https://github.com/openai/vdvae}{official implementation of VDVAE} \cite{Child2020VeryImages} (MIT License).
We use the datasets reported in Appendix \ref{app:Datasets}.
In our implementation, we make use of the following existing assets and list them together with their licenses: 
PyTorch \cite{pytorch}, highlighting the torchvision package for image benchmark datasets, and the gradient checkpointing implementation (custom license), 
Numpy \cite{harris2020array} (BSD 3-Clause License)
Weights\&Biases~\cite{wandb} (MIT License), 
Apex \cite{apex} (BSD 3-Clause ``New'' or ``Revised'' License), 
Pickle~\cite{pickle} (license not available), 
Matplotlib \cite{matplotlib} (PSF License), 
ImageIO \cite{imageio} (BSD 2-Clause ``Simplified'' License), 
MPI4Py \cite{dalcin2021mpi4py} (BSD 2-Clause ``Simplified'' License), 
Scikit-learn~\cite{scikit-learn} (BSD 3-Clause License), 
and Pillow \cite{umesh2012image} (custom license).

\section{Datasets}
\label{app:Datasets}

In our experiments, we make use of the following datasets: 
MNIST \cite{lecun2010mnist}, CIFAR10 \cite{krizhevsky2009learning}, ImageNet32 \cite{deng2009imagenet,chrabaszcz2017downsampled}, ImageNet64 \cite{deng2009imagenet,chrabaszcz2017downsampled}, and CelebA \cite{liu2015faceattributes}. 
We briefly discuss these datasets, focussing on their preprocessing, data splits, data consent and commenting on potential personally identifiable information or offensive content in the data.
We refer to the training set as images used during optimisation, the validation set as images used to guide training (e.g. to compute evaluation metrics during training) but not used for optimisation directly, and the test set as images not looked at during training and only to compute performance of completed runs.
For all datasets, we fix the training-validation-test split over different runs, and we scale images to be approximately centred and having a standard deviation of one based on statistics computed on the respective training set.  
If not stated otherwise, we use a modified version of the implementation of these datasets in \cite{Child2020VeryImages}.

\textbf{MNIST. }
The MNIST dataset \cite{lecun2010mnist} contains gray-scale handwritten images of 10 digit classes (`0' to `9') with resolution $28 \times 28$.
It contains $60,000$ training and $10,000$ test images, respectively.
From the training images, we use $55,000$ images as the training set and $5000$ images as the validation set. 
We use all $10,000$ test images as the testing set.
We build on top of the implementation provided in NVAE \cite{Vahdat2020NVAE:Autoencoder} (\href{https://github.com/NVlabs/NVAE/blob/master/datasets.py}{\url{https://github.com/NVlabs/NVAE/blob/master/datasets.py}}), which itself uses torchvision \cite{pytorch}, and dynamically binarize the images, meaning that pixel values are binary, as drawn from a Bernoulli distribution with the probabilities given by the scaled gray-scale values in $[0,1]$.
Furthermore, we pad each image with zeros so to obtain the resolution $32 \times 32$.

The dataset is highly standardised and cropped to individual digits so that offensive content or personally identifiable information can be excluded.
As the original NIST database from which MNIST was curated is no longer available, we cannot comment on whether consent was obtained from the subjects writing and providing these digits \cite{falck2021multi}.

\textbf{CIFAR10. }
The CIFAR10 dataset \cite{krizhevsky2009learning} contains coloured images from 10 classes ('airplane', 'automobile', 'bird', 'cat', 'deer', 'dog', 'frog', 'horse', 'ship', 'truck') with resolution $32 \times 32$.
It contains $50,000$ training and $10,000$ test images, respectively.
We split the training images into $45,000$ images in the training set and $5000$ images in the validation set, and use all $10,000$ test images as the test set.

CIFAR10 was constructed from the so-called 80 million tiny images dataset by Alex Krizhevsky, Vinod Nair, and Geoffrey Hinton \cite{80milliontiny}.
On the official website of the 80 million tiny images dataset, the authors state that this larger dataset was officially withdrawn by the authors on June 29th, 2020 due to offensive images being identified in it \cite{prabhu2020large}.
The authors of the 80 million tiny images dataset do not comment on whether CIFAR10, which is a subset of this dataset, likewise contains these offensive images or is unaffected.
\cite{krizhevsky2009learning} states that the images in the 80 million tiny images dataset were retrieved by searching the web for specific nouns. 
The authors provide no information to which degree consent was obtained from the people who own these images.

\textbf{ImageNet32. }
The ImageNet32 dataset, a downsampled version of the ImageNet database \cite{deng2009imagenet,chrabaszcz2017downsampled}, contains $1,281,167$ training and $50,000$ test images from $10,000$ classes with resolution $32 \times 32$.
From the training images, $5,000$ images as the validation set and the remaining $1,276,167$ as the training set, and further use all $50,000$ test images as the test set.

ImageNet is a human curated collection of images downloaded from the web via search engines.
While ImageNet used Amazon Mechanical Turk to lable the images, we were unable to find information on processes which ensured no personally identifiable or offensive content was contained in the images, which is somewhat likely given the ``in-the-wild'' nature of the dataset.
The ImageNet website states that the copyright of the images does not belong to authors of ImageNet.

\textbf{ImageNet64. }
The ImageNet64 dataset, a second downsampled version of the ImageNet database \cite{deng2009imagenet,chrabaszcz2017downsampled}, likewise contains $1,281,167$ training and $50,000$ validation images with resolution $64 \times 64$.
We use the same data splits as for ImageNet32.
Refer to the above paragraph on ImageNet32 for discussion of personally identifiable information, offensive content and consent.

\textbf{CelebA. }
The CelebA dataset \cite{liu2015faceattributes} contains $162,770$ training, $19,867$ validation and $19,962$ test images with resolution $64 \times 64$ which we directly use as our training, validation and test set, respectively.
Our implementation is a modified version of the one provided in NVAE \cite{Vahdat2020NVAE:Autoencoder} (\href{https://github.com/NVlabs/NVAE/blob/master/datasets.py}{\url{https://github.com/NVlabs/NVAE/blob/master/datasets.py}}).

CelebA images are ``obtained from the Internet''.
The authors state that these images are not the property of the authors of associated institutions \cite{celebawebsite}.
As this dataset shows the faces of humans, these images are personally identifiable.
We were unable to identify a process by which consent for using these images was obtained, or how potential offensive content was prevented.

\section{Potential negative societal impacts}
\label{app:Potential negative societal impacts}

Our work provides mainly theoretical and methodological contributions to U-Nets and HVAEs, and we hence see no direct negative societal impacts.
Since U-Nets are widely used in applications, our theoretical results and any future work derived from them may downstream improve such applications, and thus also enhance their performance in malicious uses.
In particular, U-Nets are widely used in generative modelling, and here, our work may have an effect on the quality of `deep fakes', fake datasets or other unethical uses of generative modelling.
For HVAEs, our work may inspire novel models which may lead to improved performance and stability of these models, also when used in applications with negative societal impact.

\section{Model and training details}
\label{app:Model and training details}

\textbf{On the stability of training runs. }
\label{app:Stability}
VDVAE uses several techniques to improve the training stability of HVAEs: 
First, gradient clipping \cite{mikolov2012statistical} is used, which reduces the effective learning rate of a mini-batch if the gradient norm surpasses a specific threshold. 
Second, gradient updates are skipped entirely when gradient norms surpass a second threshold, typically chosen higher than the one for gradient clipping.
Third, gradient updates are also skipped if the gradient update would cause an overflow, resulting in NaN values.

In spite of the above techniques to avoid deterioration of training in very deep HVAEs, particularly when using a lot of weight-sharing, we experienced stability problems during late stages of training. 
These were particularly an issue on CIFAR10 in the late stages of training (on average roughly after 2 weeks of computation time), and often resulted in NaN values being introduced or posterior collapse.
We did not extensively explore ways to prevent these in order to do minimal changes compared to vanilla VDVAE. 
We believe that an appropriate choice of the learning rate (e.g. with a decreasing schedule in later iterations) in combination with other changes to the hyperparameters may greatly help with these issues, but principled fixes of, for instance, the instabilities identified in Theorem \ref{thm:discrete-VDVAE} are likewise important.

\textbf{Gradient checkpointing, and other alternatives to reduce GPU memory cost. }
A practical limitation of training deep HVAEs (with or without weight-shared layers) is their GPU memory cost:
Training deeper HVAEs means storing more intermediate activations in GPU memory during the forward pass, when memory consumption reaches its peak at the start of the backward pass. %
This limits the depth of the networks that can be trained on given GPU resources.
To address this issue, particularly when training models which may not even fit on used hardware, we provide a prototype of a custom\footnote{Our implementation deviates from the official PyTorch implementation of gradient checkpointing which is slow when using multiple GPUs, and is based on \href{https://github.com/csrhddlam/pytorch-checkpoint}{https://github.com/csrhddlam/pytorch-checkpoint}.} \textit{gradient checkpointing} implementation. 
Checkpointing occurs every few ResNet blocks which trades off compute for memory and can be used as an option.
In gradient checkpointing, activations are stored only at specific nodes (checkpoints) in the computation graph, saving GPU memory, and are otherwise recomputed on-demand, requiring one additional forward pass per mini-batch \cite{chen2016training}.
Training dynamics remain unaltered. 
We note that other techniques exist specifically targeted at residual networks: 
For example, \cite{huang2016deep} propose to stochastically drop out entire residual blocks at training time \footnote{
This technique is called ``stochastic depth'' as the active depth of the network varies at random. 
In this work, however, we go with our earlier definition of this term which refers to the number of stochastic layers in our network, and thus avoid using its name to prevent ambiguities.
}. 
This technique has two disadvantages: 
It changes training dynamics, and peak memory consumption varies between mini-batches, where particularly the latter is an inconvenient property for the practitioner as it may cause out-of-memory errors.

\section{Additional experimental details and results}
\label{app:Experimental details}

In this section, we provide additional experimental details and results.

\textbf{Hyperparameters and hyperparameter tuning. } 
In the following, we describe the hyperparameters chosen in our experiments.
As highlighted in the main text, we use the state-of-the-art hyperparameters of VDVAE \cite{Child2020VeryImages} wherever possible. 
This was possible for CIFAR10, ImageNet32 and ImageNet64. 
On MNIST and CelebA, VDVAE \cite{Child2020VeryImages} did not provide experimental results.
For MNIST, we took the hyperparameters of CIFAR10 as the basis and performed minimal hyperparameter tuning, mostly increasing the batch size and tuning the number and repetitions of residual blocks.
For CelebA, we used the hyperparameters of ImageNet64 with minimal hyperparameter tuning, focussing on the number and repetitions of the residual blocks.
For all datasets, the main hyperparameter we tuned was the number and repetitions (through weight-sharing) of residual blocks ceteris paribus, i.e. without searching over the space of other important hyperparameters.
As a consequence, it is likely that further hyperparameter tuning would improve performance as changing the number of repetitions changes (the architecture of) the model.

We provide three disjunct sets of hyperparameters: \textit{global} hyperparameters (Table \ref{tab:global_hyperparams}), which are applicable to all runs, \textit{data-specific} hyperparameters (Table \ref{tab:data_spec_hyperparams}), which are applicable to specific datasets, and \textit{run-specific} hyperparameters, which vary by run.
The run-specific hyperparameters will be provided in the respective subsections of \S \ref{app:Experimental details}, where applicable.

In the below tables, `factor of \# channels in conv. blocks' refers to the multiplicative factor of the number of channels in the bottleneck of a (residual) block used throughout VDVAE.
`\# channels of  $z_l$' refers to C in the shape [C, H, W] of the latent conditional distributions in the approximate posterior and prior, where height H and width W are determined by the resolution of latent $\B{z}_l$.
Likewise, `\# channels in residual state' refers to C in the shape [C, H, W] of the residual state flowing through the decoder of VDVAE.
`Decay rate $\gamma$ of evaluation model' refers to the multiplicative factor by which the latest model parameters are weighted during training to update the evaluation model.

\begin{table}[h!]
\caption{
Global hyperparameters.
}
\centering
\begin{tabular}{cccccc} 
\toprule
factor of \# channels in conv. blocks & 0.25  \\
Gradient skipping threshold & 3000 \\
Adam optimizer: Weight decay & 0.01 \\
Adam optimizer: $\beta_1$ & 0.9 \\
Adam optimizer: $\beta_2$ & 0.9 \\
\bottomrule 
\end{tabular}
\label{tab:global_hyperparams}
\end{table}

\begin{table}[h!]
\caption{
Data-specific hyperparameters.
}
\centering
\begin{tabular}{cccccc} 
\toprule
Dataset & MNIST & CIFAR10 & ImageNet32 & ImageNet64 & CelebA \\ \midrule
Learning rate & 0.0001 & 0.0002 & 0.00015 & 0.00015 & 0.00015 \\
\# iterations for learning rate warm-up & 100 & 100 & 100 & 100 & 100 \\
Batch size & 200 & 16 & 8 & 4 & 4 \\
Gradient clipping threshold & 200 & 200 & 200 & 220 & 220 \\
\midrule
\# channels of  $\B{z}_l$ & 8 & 16 & 16 & 16 & 16 \\
\# channels in residual state & 32  & 384 & 512 & 512  & 512 \\
Decay rate $\gamma$ of evaluation model & 0.9999 & 0.9999 & 0.999  & 0.999  & 0.999 \\
\bottomrule 
\end{tabular}
\label{tab:data_spec_hyperparams}
\end{table}

\subsection{``More from less'': Parameter efficiency in HVAEs}
\label{app:add_exp_details_results_``More from less'': Parameter efficiency in HVAEs}

In this experiment, we investigate the effect of repeating ResNet blocks in the bottom-up and top-down pass via weight-sharing. 
\texttt{rN} indicates that a ResNet block is repeated \texttt{N} times through weight-sharing where \texttt{r} is to be treated like an operator and \texttt{N} is a positive integer. 
In contrast, \texttt{xN}, already used in the official implementation of VDVAE, indicates \texttt{N} number of ResNet blocks without weight-sharing.

In Tables \ref{tab:small_scale_test_1} and \ref{tab:small_scale_test_2}, we provide the NLLs on the test set at convergence corresponding to the NLLs on the validation set during training which we reported in Fig. \ref{fig:small_scale_param_efficiency}.
In general, weight-sharing tends to improve NLL, and models with significantly less parameters reach or even surpass other models with more parameters. 
We refer to the main text for the intuition of this behavior.
In Table \ref{tab:small_scale_test_2} (CIFAR10), we find that the not weight-sharing runs have test NLLs noticeably deviating from the results on the validation set, yet the overall trend of more weight-sharing improving NLL tends to be observed. 
This is in line with our general observation that our HVAE models are particularly unstable on CIFAR10.

\begin{table}[h!]
\caption{
A small-scale study on parameter efficiency of HVAEs on \textit{MNIST}.
We compare models with one, two, three and four parameterised blocks per resolution ($\{x1, x2, x3, x4\}$) against models with a single parameterised block per resolution weight-shared $\{2,3,5,10,20\}$ times ($\{r2,r3,r5,r10,r20\}$).
We report NLL ($\downarrow$) measured on the test set, corresponding to the results on the validation set in Fig. \ref{fig:small_scale_param_efficiency}.
NLL performance increases with more weight-sharing repetitions and surpasses models without weight-sharing but with more parameters. 
}
\centering
\begin{tabular}{ccc} 
\toprule
Neural architecture &  \# Params & NLL ($\downarrow$)  \\    
\midrule
\texttt{r1}/\texttt{x1} & 107k & $\leq 86.87$ \\   %
\texttt{r2} & 107k & $\leq 85.25$ \\  %
\texttt{r3} & 107k & $\leq 84.92$ \\  %
\texttt{r5} & 107k & $\leq 83.92$ \\  %
\texttt{r10} & 107k & $\leq 82.67$ \\  %
\texttt{r20} & 107k & $\leq 81.84$ \\  %
\midrule
\texttt{x2} & 140k & $\leq 84.44$ \\  %
\texttt{x3} & 173k & $\leq 82.64$ \\  %
\texttt{x4} & 206k & $\leq 82.46$ \\  %
\bottomrule
\end{tabular}
\label{tab:small_scale_test_1}
\end{table}

\begin{table}[h!]
\caption{
A small-scale study on parameter efficiency of HVAEs on \textit{CIFAR10}.
We compare models with with one, two, three and four parameterised blocks per resolution ($\{x1, x2, x3, x4\}$) against models with a single parameterised block per resolution weight-shared $\{2,3,5,10,20\}$ times ($\{r2,r3,r5,r10,r20\}$).
We report NLL ($\downarrow$) measured on the test set, corresponding to the results on the validation set in Fig. \ref{fig:small_scale_param_efficiency}.
NLL performance tends to increase with more weight-sharing repetitions. 
However, in contrast to the validation set (see Fig. \ref{fig:small_scale_param_efficiency}) where this trend is evident, it is less so on the test set.%
}
\centering
\begin{tabular}{ccc} 
\toprule
Neural architecture &  \# Params & NLL ($\downarrow$)  \\    
\midrule
\texttt{r1}/\texttt{x1} & 8.7m & $\leq 4.17$ \\   %
\texttt{r2} & 8.7m & $\leq 4.93$ \\  %
\texttt{r3} & 8.7m & $\leq 4.78$ \\  %
\texttt{r5} & 8.7m & -- \\  %
\texttt{r10} & 8.7m & $\leq 4.32$ \\  %
\texttt{r20} & 8.7m & $\leq 3.54$ \\  %
\midrule
\texttt{x2} & 13.0m & $\leq 5.77$ \\  %
\texttt{x3} & 17.3m & $\leq 3.07$ \\  %
\texttt{x4} & 21.6m & $\leq 3.01$ \\  %
\bottomrule
\end{tabular}
\label{tab:small_scale_test_2}
\end{table}

In Table \ref{tab:sota_quant_comp_details}, we provide key run-specific hyperparameters for the large-scale runs corresponding to Table \ref{tab:sota_quant_comp} in the main text. 
Two points on the architecture of the encoder and the decoder are worth noting: 
First, note that the decoder typically features more parameters and a larger stochastic depth than the encoder. 
We here follow VDVAE which observed this distribution of the parameters to be beneficial.
Second, note that while we experienced a benefit of weight-sharing, there is a diminishing return of the number of times a specific cell is repeated. 
Hence, we typically repeat a single block for no more than 10-20 times, beyond which performance does not improve while computational cost increases linearly with the number of repetitions.
Exploring how to optimally exploit the benefit of weight-sharing in HVAEs would be an interesting aspect for future work.

\begin{table}[t]
\scriptsize
\caption{
A large-scale study of parameter efficiency in HVAEs.
We here provide key run-specific hyperparameters corresponding to the results reported in Table \ref{tab:sota_quant_comp} in the main text.
Note that the row order of our runs directly corresponds with Table \ref{tab:sota_quant_comp}.
$\delta$ refers to gradient clipping threshold. 
$\gamma$ refers to the gradient skipping threshold.
We use the same nomenclature for number of cells (\texttt{x}) and number of repetitions for one block (\texttt{r}) as before.
In addition, as in VDVAE's official code base, we use \texttt{d} to indicate average pooling, where the integer before \texttt{d} indicates the resolution on which we pool, and the integer after indicates the down-scaling factor.
Further, \texttt{m} indicates interpolating, where we up-scale from a source (integer after \texttt{m}) to a target resolution (integer before \texttt{m}).
}
\centering
\begin{tabular}{p{.4cm}p{1.2cm}p{.2cm}p{.2cm}p{.2cm}p{4.4cm}p{5cm}} \toprule  
Dataset & Method & Batch size & $\delta$ & $\gamma$ & Encoder Architecture & Decoder Architecture \\ \midrule   %
\multirow{2}{*}[0cm]{\rotatebox{90}{\parbox{1.1cm}{\tiny \textbf{MNIST} \newline $28\times28$}}}  %
& \texttt{WS-VDVAE} (ours) & 70 & - & 200 & \texttt{32r3,32r3,32r3,32r3,32r3,32d2, 16r3,16r3,16r3,16d2, 8x6,8d2,4x3,4d4,1x3} & \texttt{1x1,4m1,4x2,8m4, 8x5,16m8,16r3,16r3,16r3,16r3,16r3,32m16, 32r3,32r3,32r3,32r3,32r3,32r3,32r3, 32r3,32r3,32r3}   \\   %
& \texttt{VDVAE$^*$} (ours) & 70 & - & 200 & \texttt{32x11,32d2,16x6,16d2, 8x6,8d2,4x3,4d4,1x3} & \texttt{1x1,4m1,4x2,8m4,8x5,16m8,16x10,32m16,32x21} \\   %
\midrule
\multirow{3}{*}[0cm]{\rotatebox[origin=c]{90}{\parbox{2.25cm}{\tiny \textbf{CIFAR10} \newline $32 \times 32$}}} 
& \texttt{WS-VDVAE}  (ours) & 16 & 400 & 4000 & \texttt{32r12,32r12,32r12,32r12,32r12,32r12, 32d2,16r12,16r12,16r12,16r12,16d2, 8r12,8r12,8r12,8r12,8r12,8r12,8d2, 4r12,4r12,4r12,4d4,1r12,1r12,1r12} & \texttt{1r12,4m1,4r12,4r12,8m4, 8r12,8r12,8r12,8r12,8r12, 16m8,16r12,16r12,16r12,16r12,16r12, 32m16,32r12,32r12,32r12,32r12, 32r12,32r12,32r12,32r12,32r12}     \\  %
& \texttt{WS-VDVAE}  (ours) & 16 & 200 & 2500 & \texttt{32r3,32r3,32r3,32r3,32r3,32r3, 32r3,32r3,32r3,32r3,32r3,32d2, 16r3,16r3,16r3,16r3,16r3,16r3,16d2, 8x6,8d2,4x3,4d4,1x3} & \texttt{1x1,4m1,4x2,8m4,8x5, 16m8,16r3,16r3,16r3,16r3,16r3, 16r3,16r3,16r3,16r3,16r3,32m16, 32r3,32r3,32r3,32r3,32r3,32r3, 32r3,32r3,32r3,32r3,32r3,32r3, 32r3,32r3,32r3,32r3,32r3,32r3, 32r3,32r3,32r3}   \\  %
& \texttt{VDVAE$^*$} (ours) & 16 & 200 & 400 & \texttt{32x11,32d2,16x6,16d2, 8x6,8d2,4x3,4d4,1x3} & \texttt{1x1,4m1,4x2,8m4,8x5, 16m8,16x10,32m16,32x21}  \\  %
\midrule
\multirow{3}{*}[0cm]{\rotatebox{90}{\parbox{2cm}{\tiny  \textbf{ImageNet} \newline $32 \times 32$}}} 
& \texttt{WS-VDVAE}  (ours) & 8 & 200 & 5000 & \texttt{32r10,32r10,32r10,32r10,32d2, 16r10,16r10,16r10,16d2,8x8,8d2, 4x6,4d4,1x6} & \texttt{1x2,4m1,4x4,8m4,8x9,16m8, 16r10,16r10,16r10,16r10,16r10,32m16, 32r10,32r10,32r10,32r10,32r10,32r10, 32r10,32r10,32r10,32r10}    \\  %
& \texttt{WS-VDVAE}  (ours) & 8 & 200 & 5000 & \texttt{32r6,32r6,32r6,32r6,32r6, 32r6,32r6,32r6,32r6,32d2, 16r6,16r6,16r6,16r6,16r6,16d2, 8x8,8d2,4x6,4d4,1x6} & \texttt{1x2,4m1,4x4,8m4,8x9, 16m8,16r6,16r6,16r6, 16r6,16r6,16r6,16r6,16r6,16r6, 16r6,16r6,32m16,32r6,32r6, 32r6,32r6,32r6,32r6,32r6,32r6, 32r6,32r6,32r6,32r6,32r6,32r6,32r6,32r6, 32r6,32r6,32r6,32r6,32r6,32r6,32r6,32r6,32r6}  \\  %
& \texttt{VDVAE$^*$} (ours) & 8 & 200 & 300 & \texttt{32x15,32d2,16x9,16d2,8x8,8d2, 4x6,4d4,1x6} & \texttt{1x2,4m1,4x4,8m4,8x9,16m8,16x19,32m16,32x40} \\  %
\midrule
\multirow{2}{*}[0cm]{\rotatebox{90}{\parbox{1.5cm}{\tiny  \textbf{CelebA} \newline $64 \times 64$}}} 
& \texttt{WS-VDVAE}  (ours) & 4 & 220 & 3000 & \texttt{64r3,64r3,64r3,64r3,64r3,64r3,64r3, 64d2,32r3,32r3,32r3,32r3,32r3,32r3, 32r3,32r3,32r3,32r3,32r3,32d2, 16r3,16r3,16r3,16r3,16r3,16r3,16d2, 8r3,8r3,8r3,8d2,4r3,4r3,4r3,4d4, 1r3,1r3,1r3} & \texttt{1r3,1r3,4m1,4r3,4r3,4r3, 8m4,8r3,8r3,8r3,8r3, 16m8,16r3,16r3,16r3,16r3,16r3,16r3,16r3, 32m16,32r3,32r3,32r3,32r3,32r3,32r3, 32r3,32r3,32r3,32r3,32r3,32r3,32r3, 32r3,32r3,32r3,64m32,64r3,64r3,64r3,64r3, 64r3,64r3,64r3,64r3}  \\  %
 & \texttt{VDVAE$^*$} (ours) & 4 & 220 & 3000 & \texttt{64x11,64d2,32x20,32d2, 16x9,16d2,8x8,8d2,4x7,4d4,1x5} & \texttt{1x2,4m1,4x3,8m4,8x7,16m8, 16x15,32m16,32x31,64m32,64x12}   \\  %
\bottomrule 
\end{tabular}
\label{tab:sota_quant_comp_details}
\end{table}

\clearpage
\subsection{HVAEs secretly represent time and make use of it}
\label{app:add_exp_details_results_HVAEs secretly represent time and make use of it}

In this experiment, we measure the $L_2$ norm of the residual state at every ResNet block in both the forward (bottom-up/encoder) and backward (top-down/decoder) model.
Let $x_i$ be the output of ResNet block $i$ in the bottom-up model, and $y_i$ be the input of ResNet block $i$ in the top-down model for one batch.
In the following, augmenting Fig. \ref{fig:magnitude_increase_mnist} on MNIST in the main text,  we measure $\| x_i \|_2$ or $\| y_i \|_2$, respectively, over 10 batches. 
We also use this data to compute appropriate statistics (mean and standard deviation) which we plot.
We measure the state norm in the forward and backward pass for models trained on CIFAR10 and ImageNet32 in Figs. \ref{fig:magnitude_increase_cifar} and \ref{fig:magnitude_increase_imagenet32}, respectively. 
We note that the forward pass of the ImageNet32 has a slightly unorthodox, yet striking pattern in terms of state norm magnitude, presumably caused by an overparameterisation of the model.
In summary, these findings provide further evidence that the residual state norm of VDVAEs represents time.

\begin{figure}[h!]
    \centering
    \includegraphics[width=.48\linewidth]{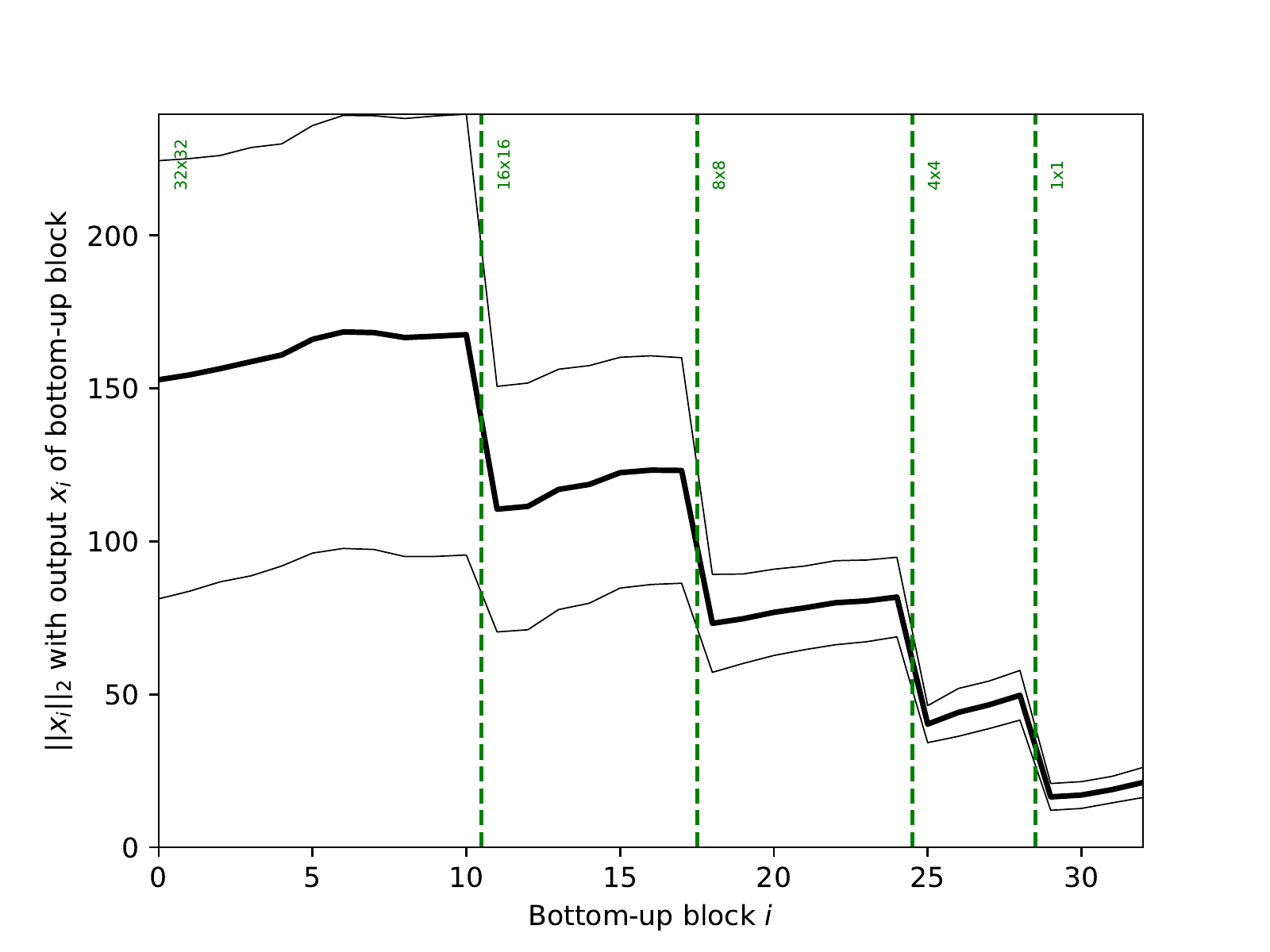}
    \includegraphics[width=.48\linewidth]{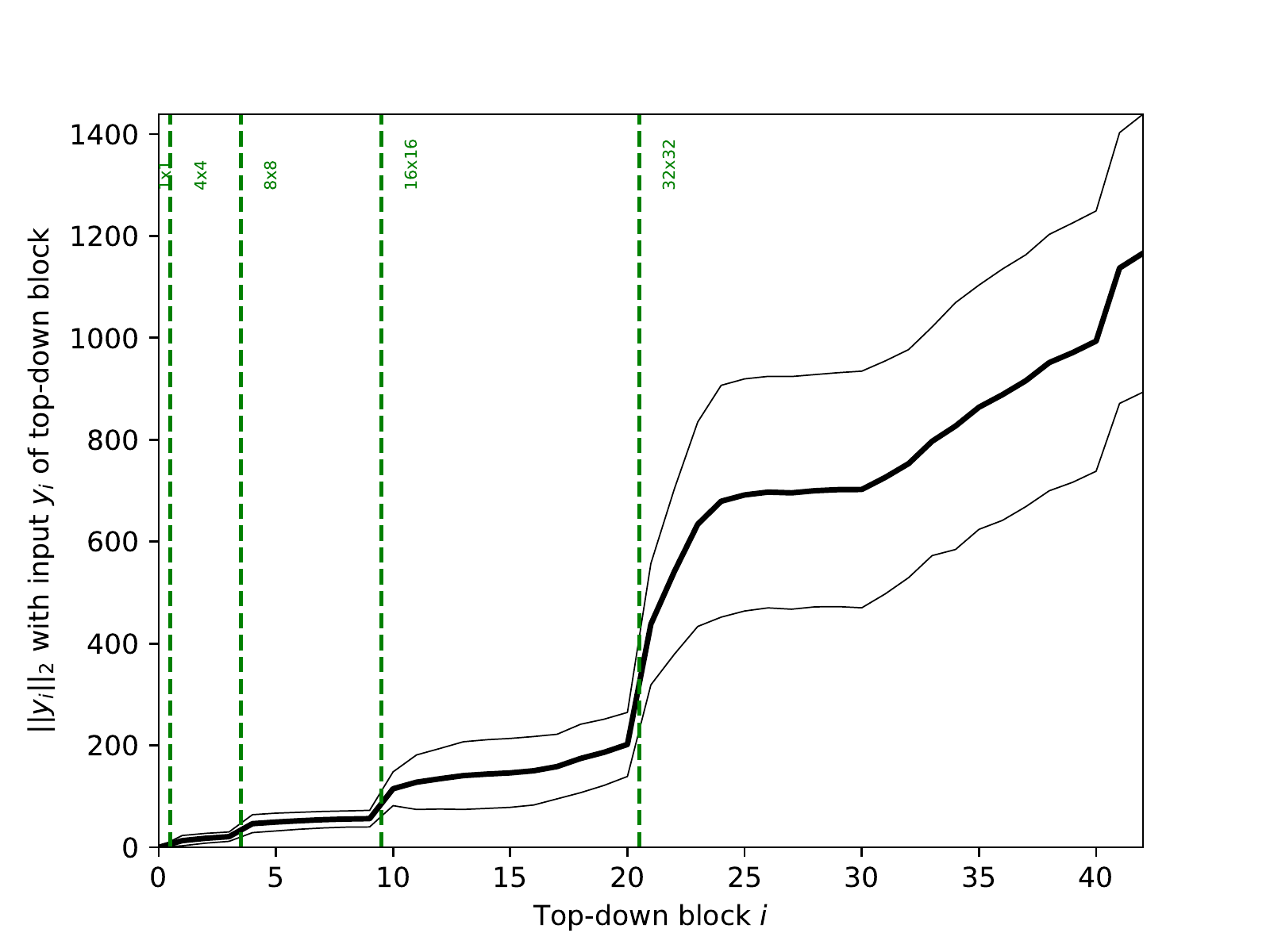}
    \caption{
    HVAEs are secretly representing time \textit{on CIFAR10}: 
    We measure the $L_2$-norm of the residual state at every residual block $i$ for the [Left] forward (bottom-up) pass, and [Right] the backward (top-down) pass, respectively, over 10 batches with 100 data points each.
    The thick line refers to the average and the thin, outer lines refer to $\pm 2$ standard deviations. 
    }
    \label{fig:magnitude_increase_cifar}
\end{figure}

\clearpage

\begin{figure}[h!]
    \centering
    \includegraphics[width=.48\linewidth]{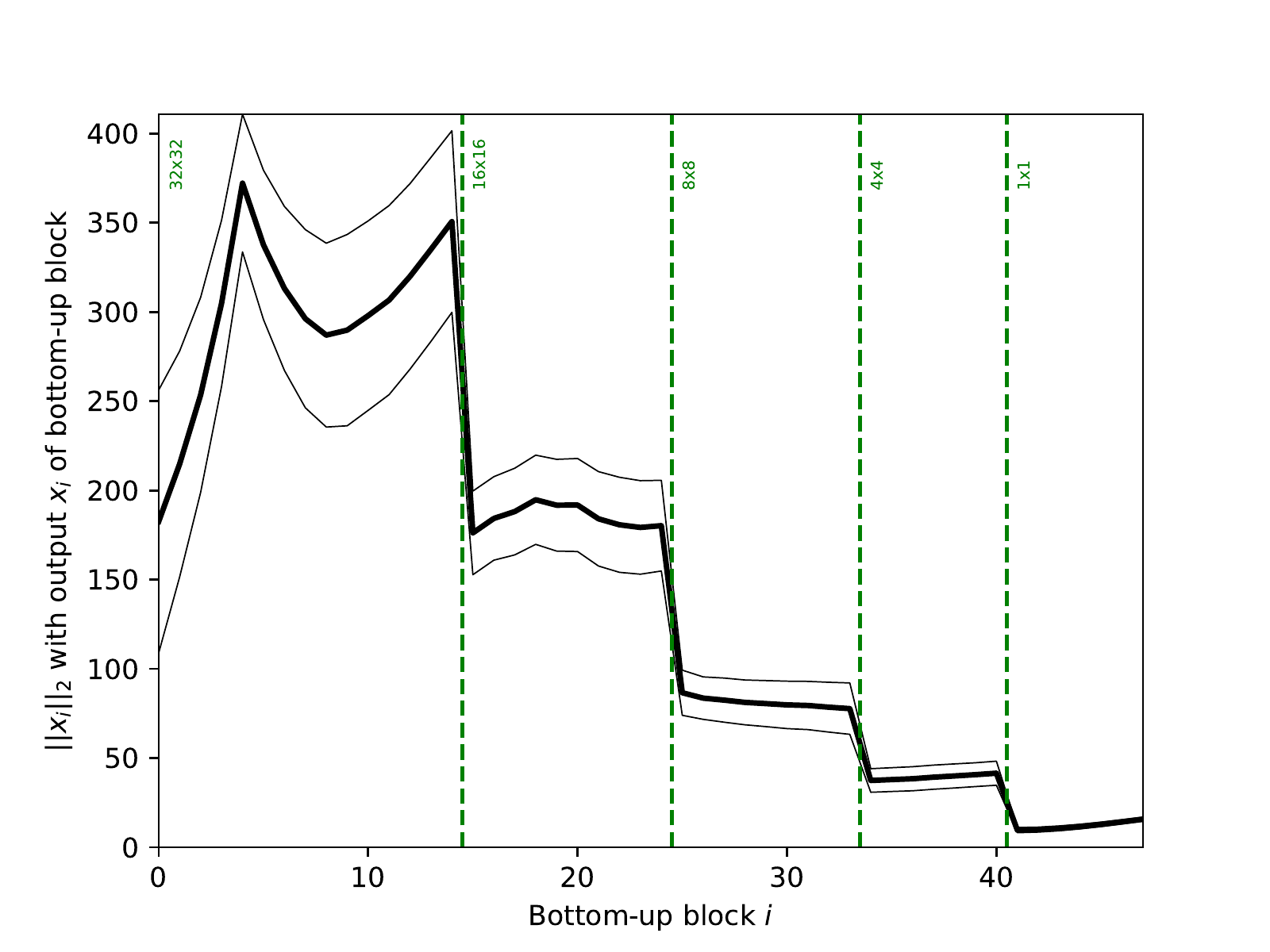}
    \includegraphics[width=.48\linewidth]{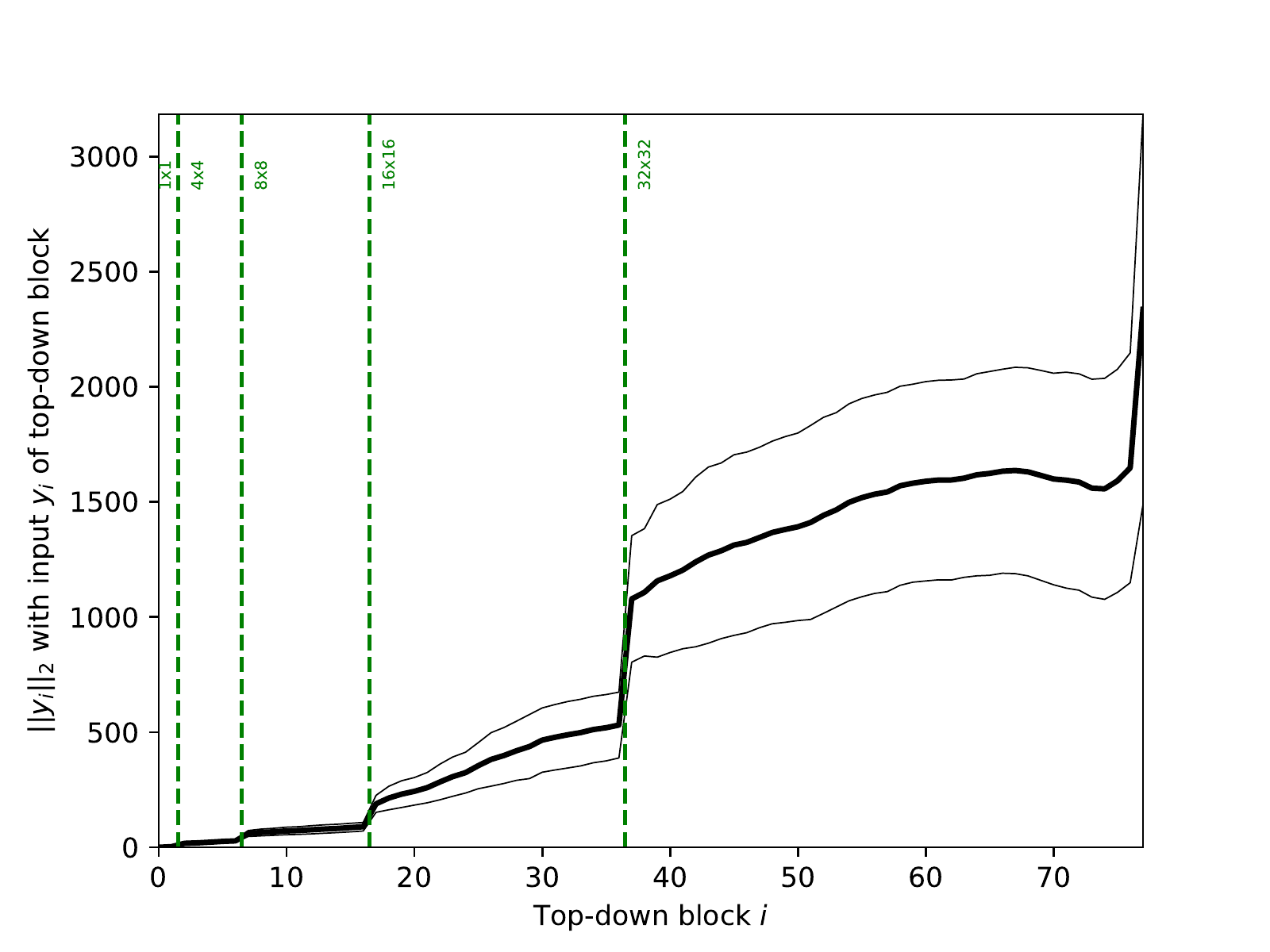}
    \caption{
    HVAEs are secretly representing time \textit{on ImageNet32}: 
    We measure the $L_2$-norm of the residual state at every residual block $i$ for the [Left] forward (bottom-up) pass, and [Right] the backward (top-down) pass, respectively, over 10 batches with 100 data points each.
    The thick line refers to the average and the thin, outer lines refer to $\pm 2$ standard deviations. 
    }
    \label{fig:magnitude_increase_imagenet32}
\end{figure}

When normalising the residual state in our experiments in Table \ref{tab:state_norm} (case ``normalised''), we do so at the same positions where we measure the state norm above. 
At the output of every forward ResNet block $x_i$ and the input of every backward ResNet block $y_i$, we assign
\begin{align*}
    x_i \leftarrow \frac{x_i}{\| x_i \|_2} && 
    y_i \leftarrow \frac{y_i}{\| y_i \|_2}
\end{align*}
for every mini-batch during training.
This results in a straight line in these plots for the ``normalised'' case. 
As the natural behavior of VDVAEs is---as we measured---to learn a non-constant norm, normalising the state norm has a deteriorating consequence, as we observe in Table \ref{tab:state_norm}.
In contrast, the regular, unnormalised runs (case ``non-normalised'') show well-performing results.

We further analysed the normalised state norm experiments in Table \ref{tab:state_norm}.
The normalised MNIST and CIFAR10 runs terminated early (indicated by \blackcross), more precisely after 18 hours and 4.5 days of training, respectively.
From the very start of the optimisation, the normalised models have poor training behavior.
To show this, in Fig. \ref{fig:normalised_vs_nonnormalised_nll}, we illustrate the NLL on the validation set during training for the three normalised runs as compared to regular, non-normalised training.
Validation ELBO only improves for a short time, after which the normalised runs deteriorate, showing no further improvement or even a worse NLL.

\begin{figure}[h!]
    \centering
    \includegraphics[width=.32\linewidth]{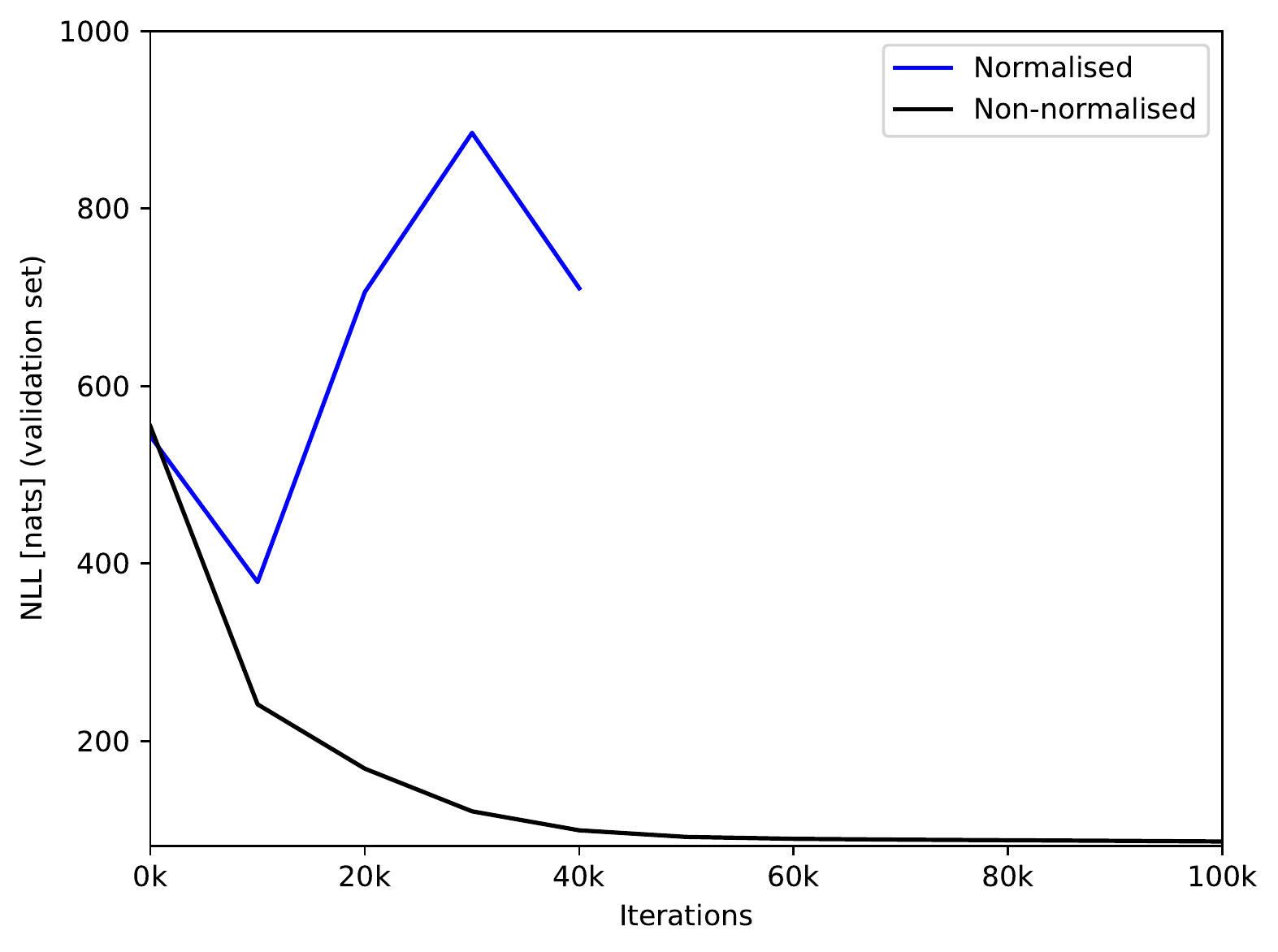}
    \includegraphics[width=.32\linewidth]{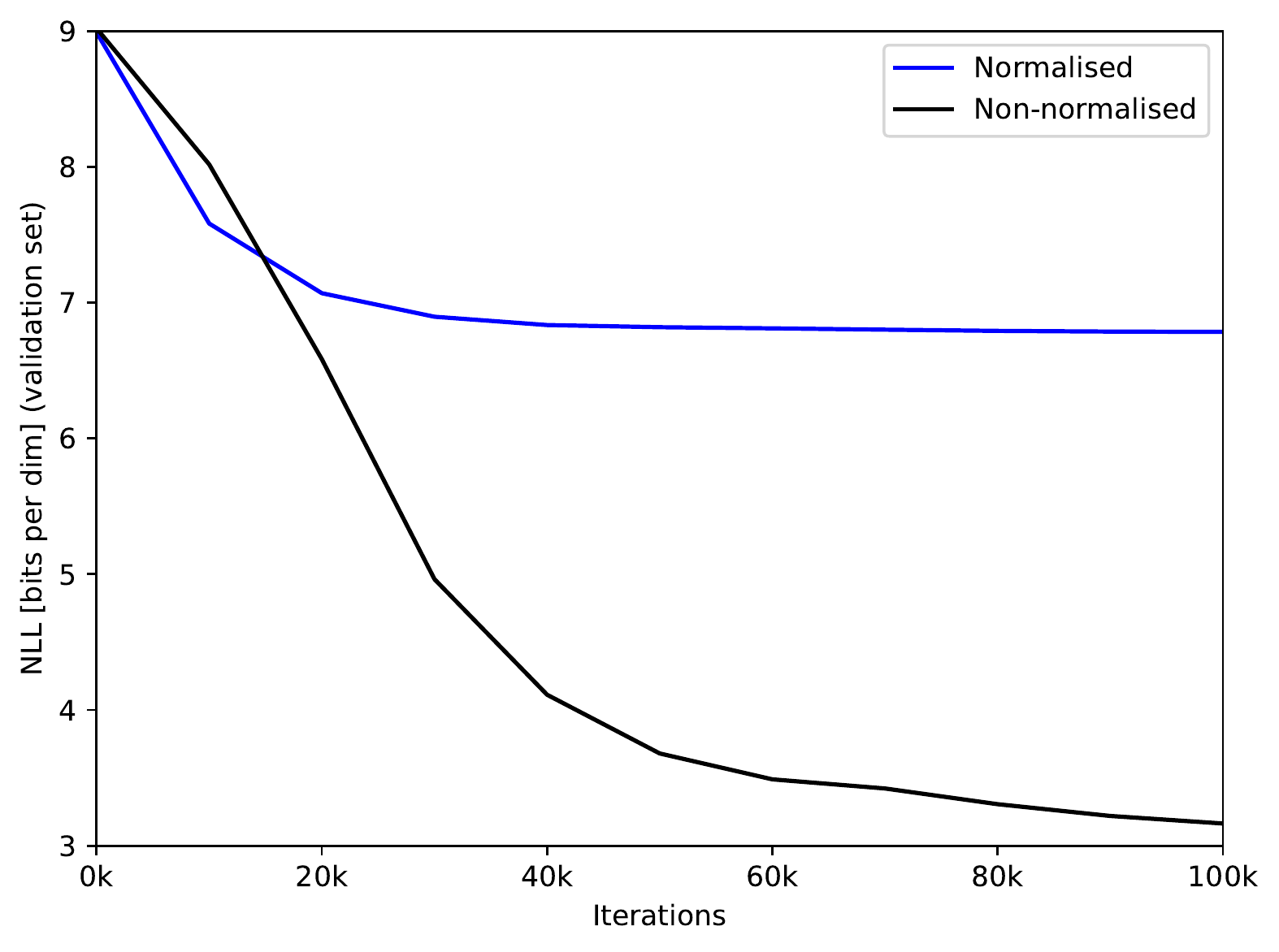}
    \includegraphics[width=.32\linewidth]{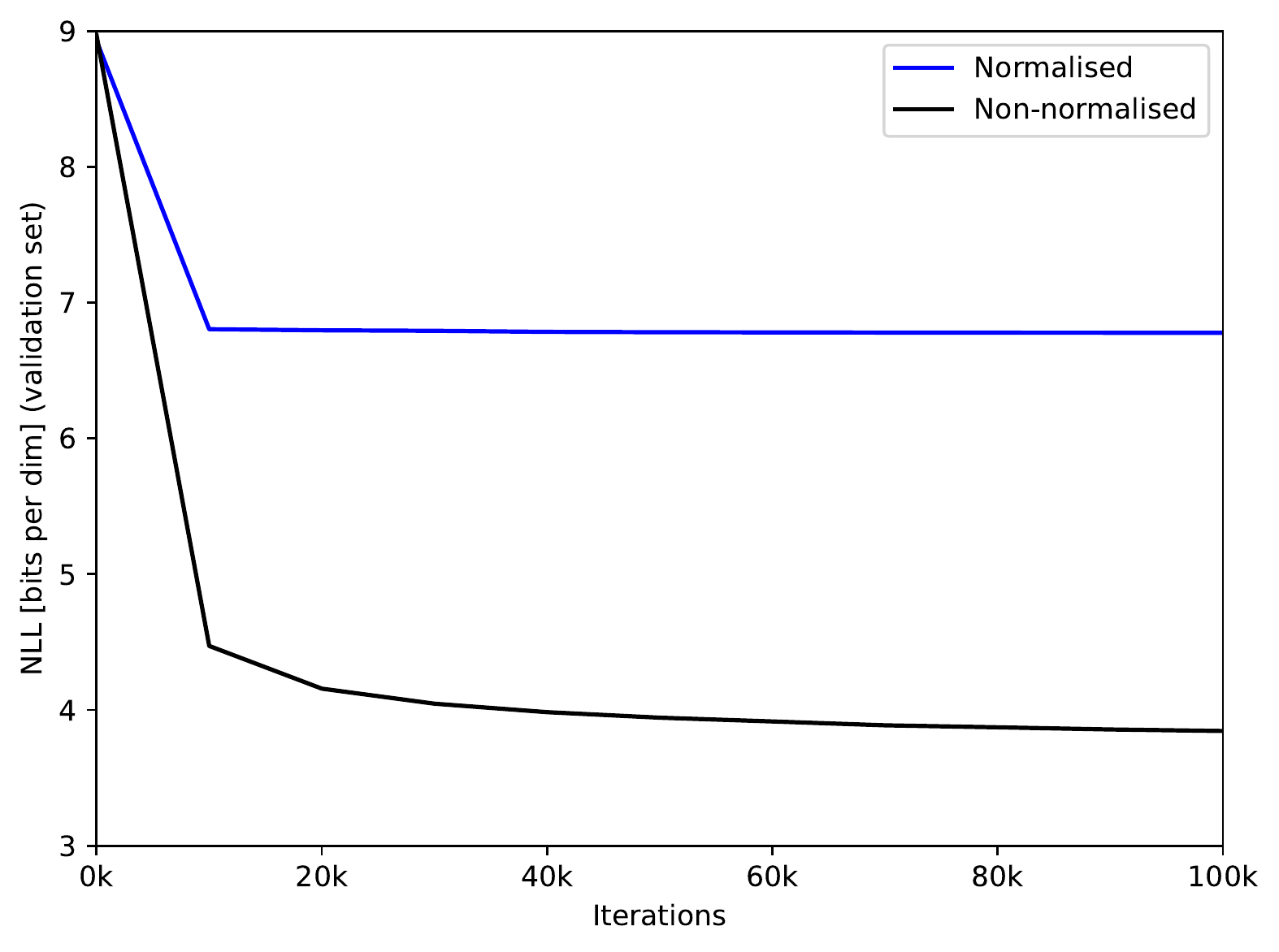}
    \caption{
    On the training dynamics of VDVAE with and without a normalised residual state norm.
    NLL ($\downarrow$) measured on the validation set of MNIST [left], CIFAR10 [middle] and ImageNet32 [right].
    The normalised runs suffer from poor training dynamics from the very start of the optimisation and even terminate early on MNIST and CIFAR10, indicating that VDVAE makes use of the time representing state norm during training.
    }
    \label{fig:normalised_vs_nonnormalised_nll}
\end{figure}

\ff{
}

\clearpage
\subsection{Sampling instabilities in HVAEs}
\label{app:add_exp_details_results_Sampling instabilities in HVAEs}

When retrieving unconditional samples from our models, we scale the variances in the unconditional distributions with a temperature factor $\tau$, as is common practice.
We tune $\tau$ ``by eye'' to improve the fidelity of the retrieved images, yet do not cherry pick these samples.
In Figs. \ref{fig:uncond_samples_more_cifar10} to \ref{fig:uncond_samples_more_celeba}, we provide additional, not cherry-picked unconditional samples for models trained on CIFAR10, ImageNet32, ImageNet64, MNIST and CelebA, extending those presented in Fig. \ref{fig:uncond_samples_instability}.
As shown earlier, the instabilities in VDVAE result in poor unconditional samples for CIFAR10, ImageNet32 and ImageNet64, but relatively good samples for MNIST and CelebA.

\begin{figure}[h!]
    \centering
    \includegraphics[width=.5\linewidth]{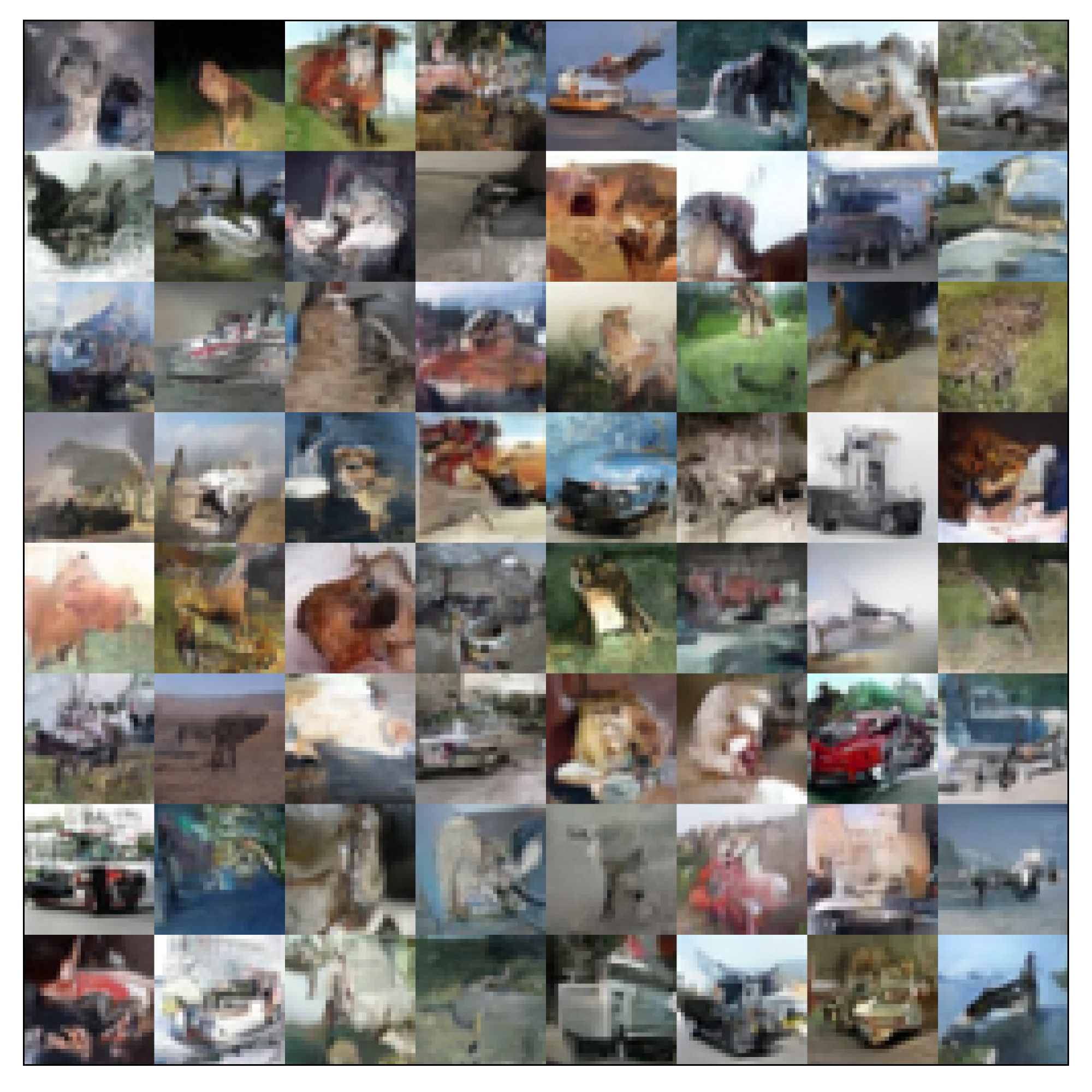}
    \caption{
    Further unconditional samples (not cherry-picked) of \texttt{VDVAE$^*$} on \textit{CIFAR10}, augmenting those presented in Fig. \ref{fig:uncond_samples_instability}.
    While samples on MNIST and CelebA demonstrate high fidelity and diversity, samples on CIFAR10, ImageNet32 and ImageNet64 are diverse, but  unrecognisable, demonstrating the instabilities identified by Theorem \ref{thm:discrete-VDVAE}.
    We chose the temperature as $\tau=0.9$.
    }
    \label{fig:uncond_samples_more_cifar10}
\end{figure}

\begin{figure}[h!]
    \centering
    \includegraphics[width=.5\linewidth]{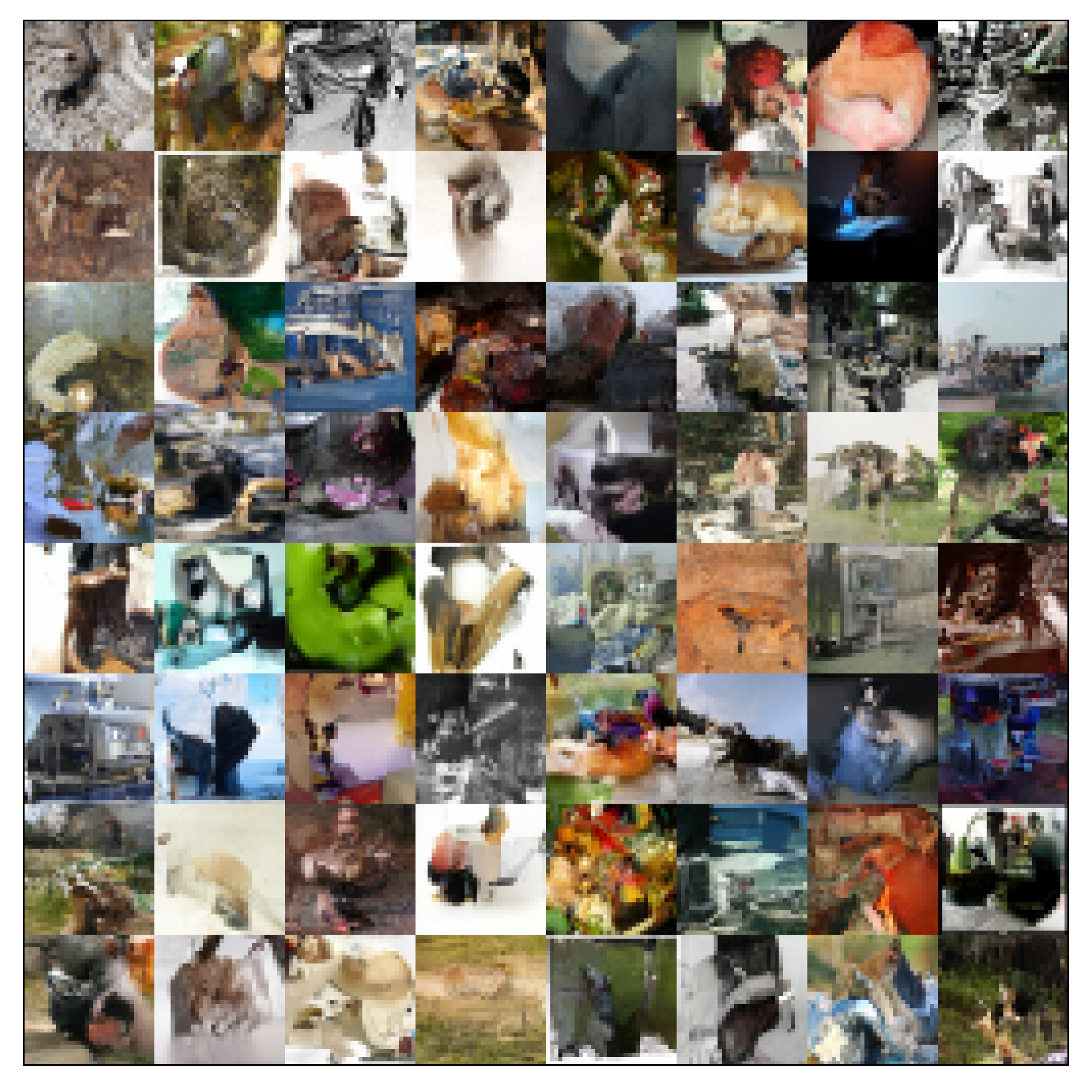}
    \caption{
    Further unconditional samples (not cherry-picked) of \texttt{VDVAE$^*$} on \textit{ImageNet32}, augmenting those presented in Fig. \ref{fig:uncond_samples_instability}.
    While samples on MNIST and CelebA demonstrate high fidelity and diversity, samples on CIFAR10, ImageNet32 and ImageNet64 are diverse, but  unrecognisable, demonstrating the instabilities identified by Theorem \ref{thm:discrete-VDVAE}.
    We chose the temperature as $\tau=1.0$.
    }
    \label{fig:uncond_samples_more_i32}
\end{figure}

\begin{figure}[h!]
    \centering
    \includegraphics[width=.5\linewidth]{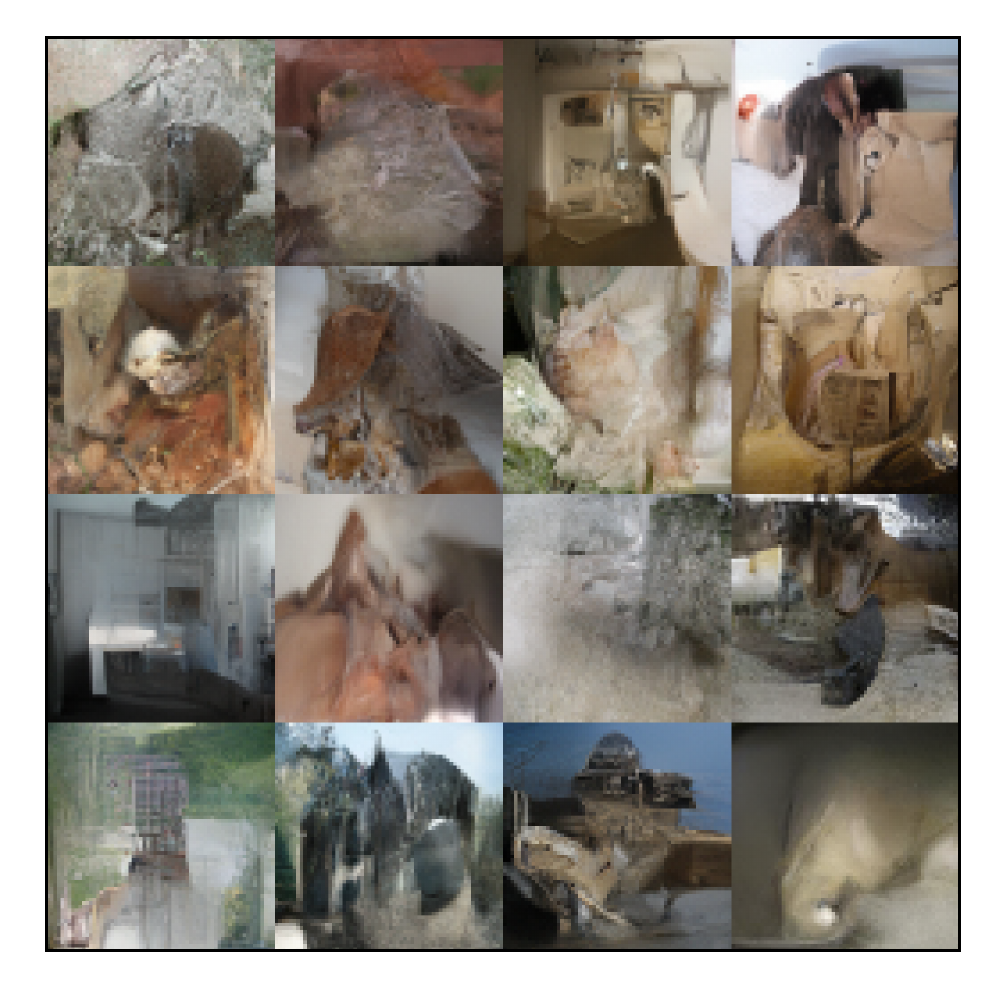}
    \caption{
    Further unconditional samples (not cherry-picked) of \texttt{VDVAE$^*$} on \textit{ImageNet64}, augmenting those presented in Fig. \ref{fig:uncond_samples_instability}.
    While samples on MNIST and CelebA demonstrate high fidelity and diversity, samples on CIFAR10, ImageNet32 and ImageNet64 are diverse, but  unrecognisable, demonstrating the instabilities identified by Theorem \ref{thm:discrete-VDVAE}.
    Temperatures $\tau$ are tuned for maximum fidelity.    
    We chose the temperature as $\tau=0.9$.
    }
    \label{fig:uncond_samples_more_i64}
\end{figure}

\begin{figure}[h!]
    \centering
    \includegraphics[width=.5\linewidth]{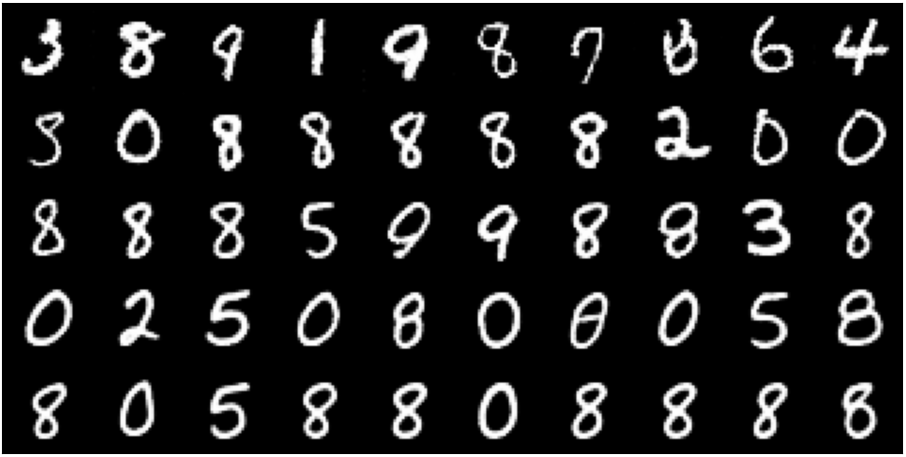}
    \caption{
    Further unconditional samples (not cherry-picked) of \texttt{VDVAE$^*$} on \textit{MNIST}, augmenting those presented in Fig. \ref{fig:uncond_samples_instability}.
    While samples on MNIST and CelebA demonstrate high fidelity and diversity, samples on CIFAR10, ImageNet32 and ImageNet64 are diverse, but  unrecognisable, demonstrating the instabilities identified by Theorem \ref{thm:discrete-VDVAE}.
    We chose the temperatures as $\tau \in \{ 1.0, 0.9, 0.8, 0.7, 0.5 \}$ (corresponding to the rows).
    }
    \label{fig:uncond_samples_more_mnist}
\end{figure}

\begin{figure}[h!]
    \centering
    \includegraphics[width=.5\linewidth]{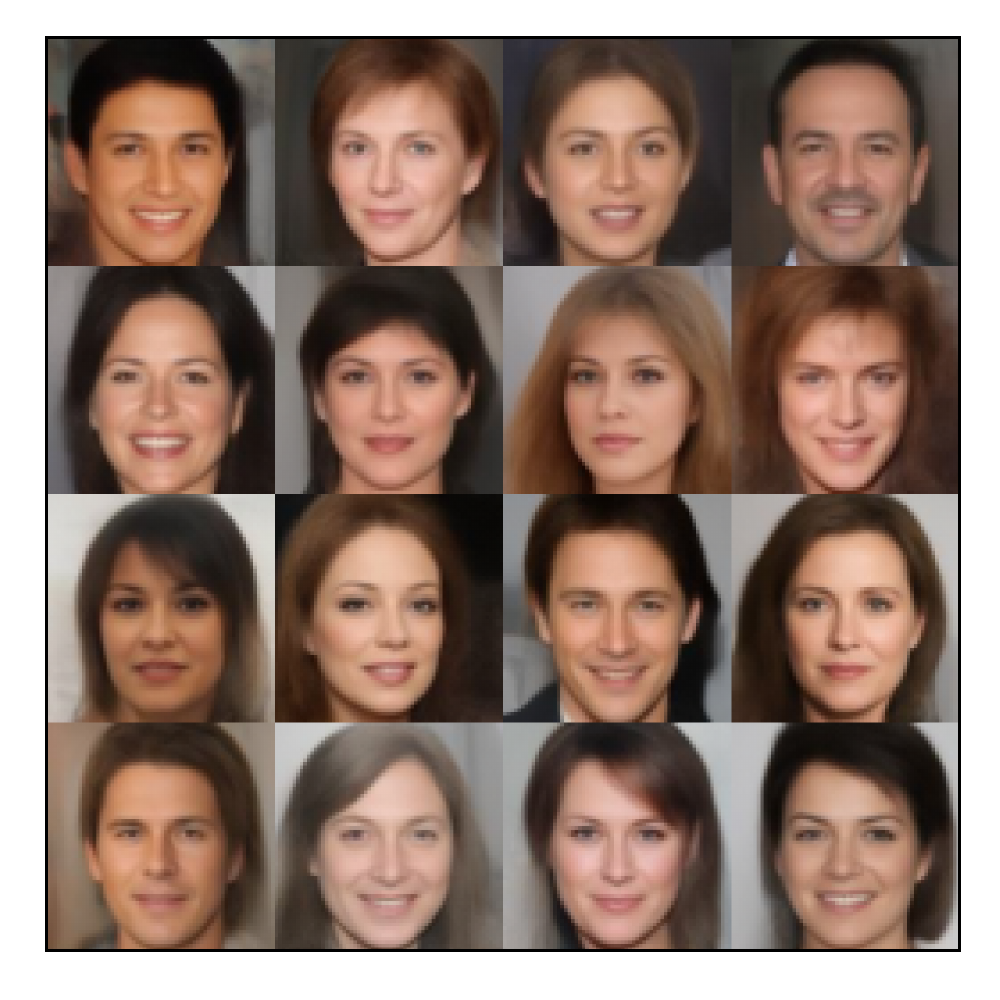}
    \caption{
    Further unconditional samples (not cherry-picked) of \texttt{VDVAE$^*$} on \textit{CelebA}, augmenting those presented in Fig. \ref{fig:uncond_samples_instability}.
    While samples on MNIST and CelebA demonstrate high fidelity and diversity, samples on CIFAR10, ImageNet32 and ImageNet64 are diverse, but  unrecognisable, demonstrating the instabilities identified by Theorem \ref{thm:discrete-VDVAE}.
    We chose the temperature as $\tau=0.5$.    
    }
    \label{fig:uncond_samples_more_celeba}
\end{figure}

\clearpage

In addition, we here also visualise the representational advantage of HVAEs. 
Fig. \ref{fig:samples_representational_adv} shows samples where we gradually increase the number of samples from the posterior vs. the prior distributions in each resolution across the columns.
This means that in column 1, we sample the first latent $\B{z}_l$ \textit{in each resolution} from the (on encoder activations conditional) posterior $q$, and all other latents from the prior $p$.
A similar figure, but gradually increasing the contribution of the posterior across the blocks of all resolutions (i.e. column 1 samples $\B{z}_l$ from the posterior in the very first resolution only) is shown in VDVAE \cite[Fig. 4]{Child2020VeryImages}.
Fig. \ref{fig:samples_representational_adv} 

\begin{figure}[h!]
    \centering
    \includegraphics[width=1\linewidth]{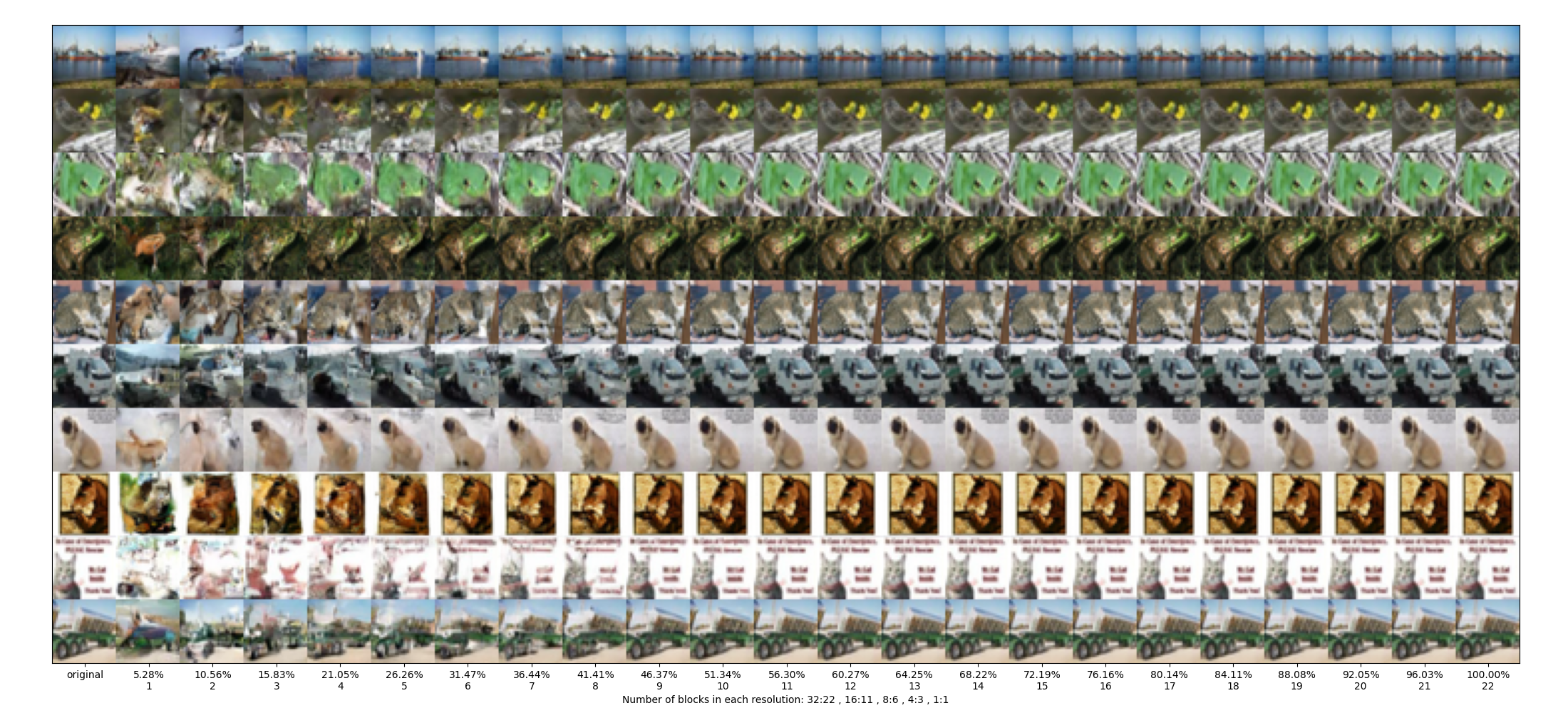}
    \caption{
    Samples drawn from our model when gradually increasing the contribution of the approximate posterior.
    In each column with integer $s$, we sample the first $s$ latent variables from the approximate posterior in each resolution, i.e.  $\v{z}_i \sim q(\v{z}_i | \v{z}_{>i})$  (up to the maximum number of latent variables in each resolution), and $\v{z}_j \sim p(\v{z}_j | \v{z}_{>j})$ for all other latent variables.
    The percentage number indicates the fraction of the number of latent variables among all latent variables sampled from the approximate posterior.
    In the left-most column, we visualise corresponding input images.
    }
    \label{fig:samples_representational_adv}
\end{figure}

\clearpage
\newpage

\subsection{Ablation studies}
\label{app:add_exp_details_results_Ablation_studies}

\subsubsection{Number of latent variables}
The number of latent variables increase when increasing the stochastic depth through weight-sharing. 
Thus, an important ablation study is the question whether simply increasing the number of latent variables improves HVAE performance, which may explain the weight-sharing effect.
On CIFAR10, we find that this is not the case: 
In Table \ref{tab:num_latent_vars}, we analyse the effect of increasing the number of latent variables ceteris paribus. 
Furthermore, in Fig. \ref{fig:num_latent_vars}, we report validation NLL during training for the same runs. 
In this experiment, we realise the increase in number of latent variables by increasing the number of channels in each latent variable $\v z_l$ exponentially while slightly decreasing the number of blocks so to keep the number of parameters roughly constant.
Both results indicate that the number of latent variables, at least for this configuration on CIFAR10, do not add performance and hence cannot explain the weight-sharing performance.

\begin{table}[h!]
\caption{
On the effect of the number of latent variables on CIFAR10.
We report the NLL on the test set at convergence.
}
\centering
\begin{tabular}{ccc} 
\toprule
\# of latent variables & \# Params & NLL ($\downarrow$)  \\   %
\midrule
396k & 39m & 2.88 \\  %
792k & 39m & 2.88 \\  %
1.584m & 39m & 2.87 \\  %
3.168m &  39m& 2.88  \\  %
\bottomrule
\end{tabular}
\label{tab:num_latent_vars}
\end{table}

\vspace{1cm}

\begin{figure}[h!]
    \centering
    \includegraphics[width=.5\linewidth]{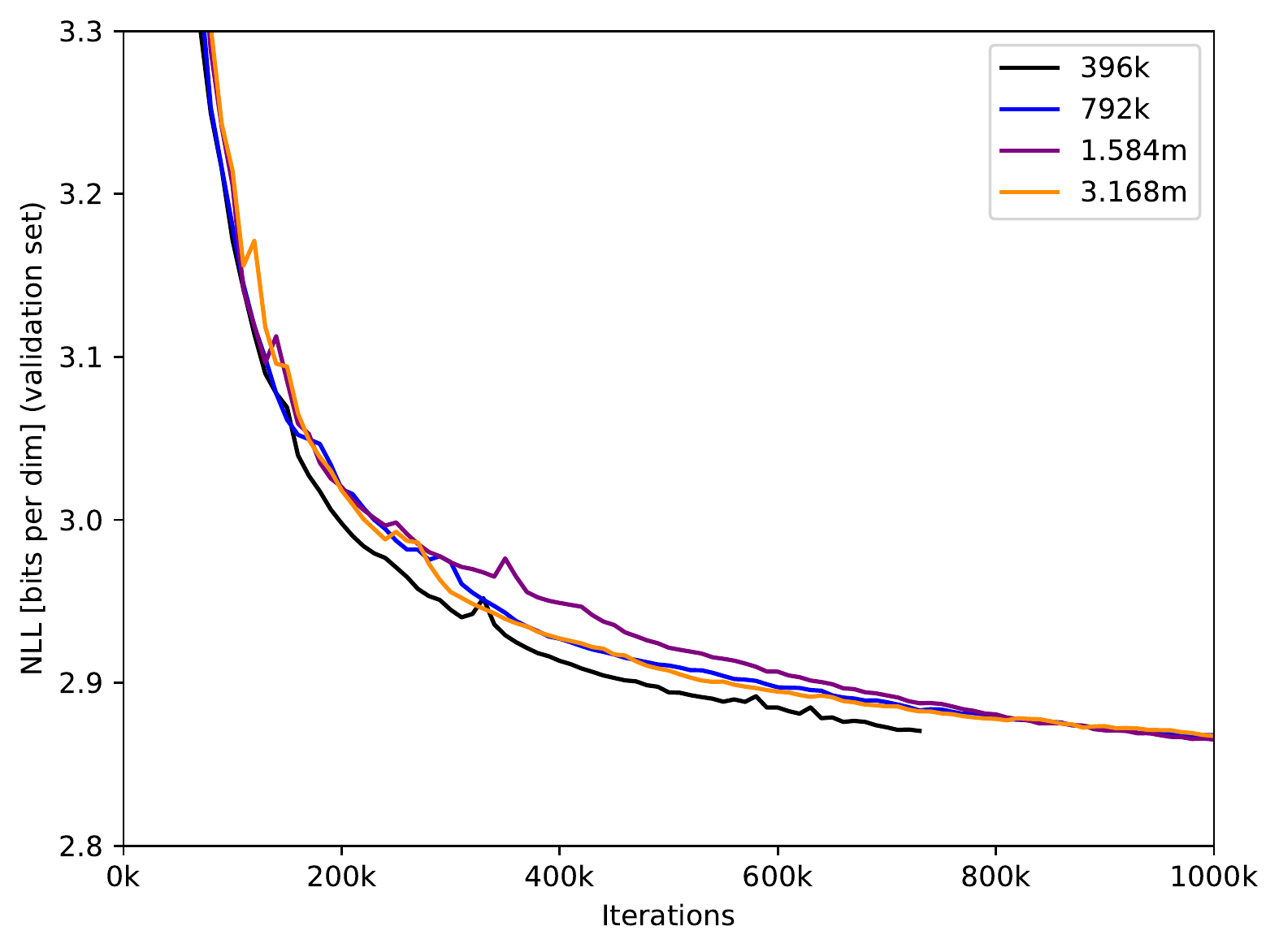}
    \caption{
    On the effect of the number of latent variables.
    We report NLL on the validation set of CIFAR10 during training.
    }
    \label{fig:num_latent_vars}
\end{figure}

\clearpage

\subsubsection{Fourier features}

In this experiment, we are interested in the effect of Fourier features imposed onto a Haar wavelet basis due to the inductive bias of the U-Net.
Intuitively, we would expect that Fourier features do not add performance as the U-Net already imposes a good basis for images.
We now validate this hypothesis experimentally: 
We compute Fourier features in every ResNet block at three different locations as additional channels and varying frequencies.
We implement Fourier features closely following VDM \cite{Kingma2021VariationalModels}: 
Let $h_{i,j,k}$ be an element of a hidden activation of the network, for example of a sampled latent $h = \v z_l$, in spatial position $(i,j)$ and channel $k$. 
Then, for each scalar $h_{i,j,k}$, we add two transformed scalars for each frequency governed by $\beta$ as follows: 
\begin{align*}
    f_{i, j, k}^{\beta}=\sin \left(z_{i, j, k} 2^{\beta} \pi\right), \text { and } g_{i, j, k}^{\beta}=\cos \left(z_{i, j, k} 2^{\beta} \pi\right).
\end{align*}
In our experiments, we experiment with different choices for $\beta$, but typically select two values at a time (as in VDM), increasing the number of channels in the resulting activation by a factor of five.
Fourier features are computed on and concatenated to activations at three different locations (in three separate experiments): At the input of the ResNet block, after sampling, and for the input of the two branches parameterising the posterior and prior distributions.

In Tables \ref{tab:fourier_mnist} and \ref{tab:fourier_cifar}, we report performance when concatenating Fourier features at every ResNet block in these three locations.
In all cases, Fourier features deteriorate performance in this multi-resolution wavelet basis, particularly for high-frequencies which often lead to early termination due to numeric overflows.
However, if training only a single-resolution model where no basis is enforced, training does not deteriorate, not even for high-frequency Fourier features, yet performance can neither be improved.
Furthermore, we experimented with computing and concatenating the Fourier features only to the input image of the model, hypothesising numerical instabilities caused by computing Fourier transforms at every ResNet block, and report results in Table \ref{tab:fourier_input}. 
Here, performance is significantly better as runs no longer deteriorate, but Fourier features still do not improve performance compared to not using Fourier features at all.

\begin{table}[h!]
\caption{
Fourier features introduced and concatenated in every ResNet block at three different locations on \textit{MNIST}.
VDVAE typically deteriorates or has poor performance.
}
\centering
\begin{tabular}{cc} \toprule
Exponent $\beta$ & NLL \\ 
\midrule 
\textbf{Loc. 1} \hfill \\
$ [ 1,2 ] $ & $\leq 78.4$ \\  %
$[3, 4 ]$ & $\leq 80.55$ \\   %
$[5, 6]$ (\blackcross) & --  \\  %
\midrule
\textbf{Loc. 2} \hfill \\
$[1,2]$  & $\leq 554.50$ \\  %
$[3, 4]$  (\blackcross) & -- \\  %
$[5, 6]$  (\blackcross) & --  \\   %
\midrule
\textbf{Loc. 3} \hfill \\
$[1,2]$ (\blackcross) & -- \\  %
$[2, 3]$ & $\leq 306.67$ \\   %
$[3, 4]$ & $\leq 345.67$  \\  %
\midrule
\textbf{Loc. 1 \& single-res.} \hfill  \\
$[3, 4]$ & $\leq 87.55$ \\  %
$[5, 6]$ & $\leq 86.96$  \\  %
$[7, 8]$ & $\leq 91.67$  \\  %
\midrule
\textbf{No Fourier Features} & $\leq 79.81$ \\  %
\bottomrule
\end{tabular}
\label{tab:fourier_mnist}
\end{table}

\begin{table}[h!]
\caption{
Fourier features introduced and concatenated in every ResNet block at three different locations on \textit{CIFAR10}.
VDVAE typically deteriorates or achieves a poor performance.
}
\centering
\begin{tabular}{cc} \toprule
Exponent $\beta$ & NLL \\ 
\midrule 
\textbf{Loc. 1} \hfill \\
$[3, 4]$ (\blackcross) & --  \\   %
$[5, 6]$ (\blackcross) & --  \\   %
$[7, 8]$ & $\leq 8.94$  \\   %
\midrule
\textbf{Loc. 2} \hfill \\    
$[3, 4]$ (\blackcross) & --  \\   %
$[5, 6]$ (\blackcross) & --  \\   %
$[7, 8]$ (\blackcross) & --  \\   %
\midrule
\textbf{Loc. 3} \hfill \\
$[3, 4]$ (\blackcross) & --  \\   %
$[5, 6]$ & $\leq 8.94$  \\   %
$[7, 8]$ & $\leq 8.99$  \\   %
\midrule
\textbf{No Fourier Features} & $\leq 2.87$ \\  %
\bottomrule
\end{tabular}
\label{tab:fourier_cifar}
\end{table}

\vspace{-.5cm}

\begin{table}[h!]
\caption{
Fourier features introduced on the input image of the model only, with results on  \textit{CIFAR10}.
While performing better than if introduced at every ResNet block, still Fourier features do not improve performance compared to using no Fourier features at all.
}
\centering
\begin{tabular}{cc} \toprule
Exponent $\beta$ & NLL \\ 
\midrule 
\textbf{Fourier Features on input only} \hfill \\
$[3, 4]$  & $\leq 2.95$  \\   %
$[5, 6]$  & $\leq 2.96$  \\   %
$[7, 8]$ & $\leq 2.89$  \\   %
\midrule
\textbf{No Fourier Features} & $\leq 2.87$ \\  %
\bottomrule
\end{tabular}
\label{tab:fourier_input}
\end{table}

\clearpage

\subsubsection{On the effect of a multi-resolution bridge. }

State-of-the-art HVAEs have a U-Net architecture with pooling and, hence, are multi-resolution bridges (see Theorem \ref{thm:vdvae_sde}).
We investigate the effect of multiple resolutions in HVAEs (here with spatial dimensions $\{32^2,16^2,8^2,4^2,1^2\}$) against a single resolution (here with spatial dimension $32^2$).
We choose the number of blocks for the single resolution model such that they are distributed in the encoder and decoder proportionally to the multi-resolution model and the total number of parameters are equal in both, ensuring a fair comparison.
As we show in Table~\ref{tab:resolutions}, the multi-resolution models perform slightly better than their single-resolution counterparts, yet we would have expected this difference to be more pronounced. 
We also note that it may be worth measuring other metrics for instance on fidelity, such as the FID score \cite{heusel2017gans}.
Additionally, multi-resolution models have a representational advantage due to their Haar wavelet basis representation (illustrated in Appendix~\ref{app:add_exp_details_results_Ablation_studies}, Fig. \ref{fig:samples_representational_adv}). 

\begin{table}[h!]
\caption{
Single- vs. multi-resolution HVAEs. 
}
\centering
\begin{tabular}{ccc} \toprule  %
\# Resolutions & \# Params & NLL \\ 
\midrule 
\textbf{MNIST}  \\  %
Single & 328k & $\leq 81.40$   \\  %
Multiple & 339k & $\leq 80.14$  \\  %
\midrule
\textbf{CIFAR10}  \\   %
Single & 39m & $\leq 2.89$   \\  %
Multiple & 39m & $\leq 2.87$  \\  %
\midrule
\textbf{ImageNet32}  \\   %
Single & 119m & $\leq 3.68$   \\  %
Multiple & 119m & $\leq 3.67$ \\  %
\bottomrule
\end{tabular}
\label{tab:resolutions}
\end{table}

\subsubsection{On the importance of a stochastic differential equation structure in HVAEs}
\label{app:On the importance of a stochastic differential equation structure in HVAEs}

A key component of recent HVAEs is a residual cell, as outlined in \S\ref{sec:Related_work}. 
The residual connection makes HVAEs discretise an underlying SDE, as we outlined in this work.
Experimentally, it was previously noted as being crucial for stability of very deep HVAEs.
Here, we are interested in ablating the importance of imposing an SDE structure into HVAEs: 
We compare models with a residual HVAE cell (as in VDVAE) with a non-residual HVAE cell which is as close to VDVAE as possible to ensure a fair comparison.
The non-residual VDVAE cell does not possess a residual state which flows through the backbone architecture. 
We achieve this by removing the connection between the first and second element-wise addition in VDVAE's cell (see \cite[Fig. 3]{Child2020VeryImages}), which is equivalent to setting $Z_{i,+} = 0$.
Hence, in the non-residual cell, during training and evaluation, the reparameterised sample is directly taken forward.
Note that this is distinct from the Euler-Maruyama cell which features a residual connection. 
Our experiments confirm that a \textit{residual} cell is key for training stability, as illustrated in Table \ref{tab:cell} and Fig. \ref{fig:cell}:
Without a residual state flowing through the decoder, models quickly experience posterior collapse of the majority of layers during training.  %

\begin{table}[h!]
\caption{
Residual vs. non-residual VDVAE cell.
The residual HVAE strongly outperforms a non-residual VDVAE cell, where the latter's training deteriorates. 
This is also analysed in Fig. \ref{fig:samples_representational_adv}.
We report NLL on the test set at convergence, or at the last model checkpoint before deterioration of training.
}
\centering
\begin{tabular}{cc} \toprule
Cell type & NLL \\ 
\midrule 
\textbf{MNIST}  \\  %
Residual VDVAE cell & $\leq80.05$ \\  %
Non-residual VDVAE cell & $\leq112.58$ \\  %
\midrule 
\textbf{CIFAR10}  \\   %
Residual VDVAE cell & $\leq 2.87$ \\  %
Non-residual VDVAE cell & $\leq 3.66$ \\  %
\midrule
\textbf{ImageNet}  \\   %
Residual VDVAE cell & $\leq 3.667$ \\  %
Non-residual VDVAE cell (\blackcross) & $\leq 4.608$ \\  %
\bottomrule
\end{tabular}
\label{tab:cell}
\end{table}

\begin{figure}[h!]
    \centering
    \includegraphics[width=.48\linewidth]{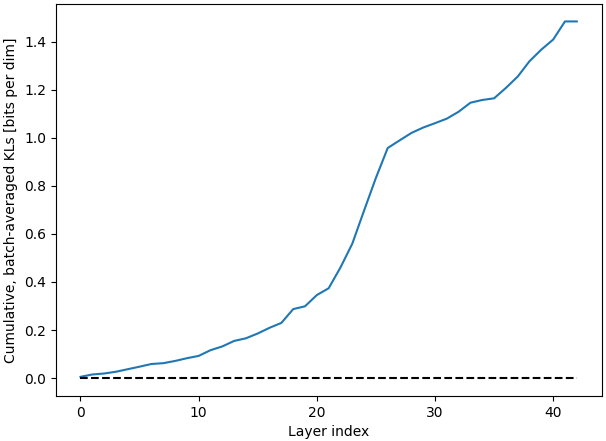}
    \includegraphics[width=.48\linewidth]{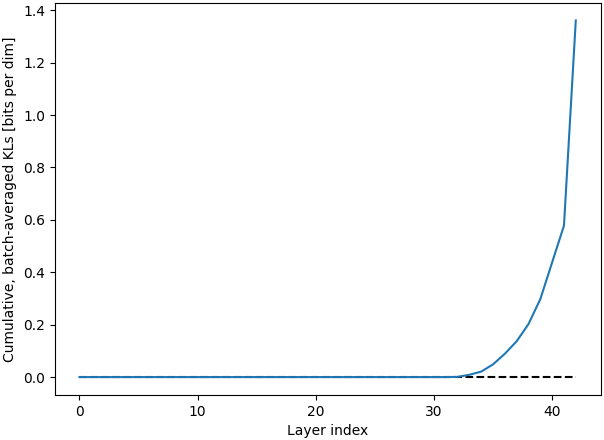}
    \caption{
    Cumulative sum of KL-terms in the ELBO of a residual and non-residual VDVAE, averaged over a batch at convergence.
    We report the two CIFAR10 runs in Table \ref{tab:cell}.
    The posterior collapses for the majority of the latent variables in the non-residual VDVAE cell case [right], but carries information for all latent variables in the regular, residual cell case [left].
    }
    \label{fig:cell}
\end{figure}

\subsubsection{Synchronous vs. asynchronous processing in time}
\label{app:Synchronous vs. asynchronous processing in time}

During the bottom-up pass, VDVAE takes forward the activation of the last time step in each resolution which is passed to every time step in the top-down pass on the same resolution (see Fig. 3 in VDVAE \cite{Child2020VeryImages}).
In this ablation study, we were interested in this slightly peculiar choice of an \textit{asynchronous} forward and backward process and to what degree it is important for performance.
We thus compare an asynchronous model, with skip connections as in VDVAE \cite{Child2020VeryImages}, with a \textit{synchronous} model, where activations from the bottom-up pass are taken forward to the corresponding time step in the top-down pass.
In other words, in the synchronous case, the skip connection mapping between time steps in the encoder and decoder is `bijective', and it is not `injective', but `surjective' in the asynchronuous case.
We realise the synchronous case by choosing the same number of blocks in the encoder as \texttt{VDVAE$^*$} has in the decoder, i.e. constructing a `symmetric' model.
To ensure a fair comparison, both models (synchronous and asynchronous) are further constructed to have the same number of parameters.
In Table \ref{tab:synch_asynch}, we find that synchronous and asynchronous processing achieve comparable NLL, indicating that the asynchronous design is not an important contributor to performance in VDVAE.
We note, however, that an advantage of the asynchronous design, which is exploited by VDVAE, is that the bottom-up and top-down architectures can have different capacities, i.e. have a different number of ResNet blocks.
VDVAE found that a more powerful decoder was beneficial for performance \cite{Child2020VeryImages}.

\begin{table}[h!]
\caption{
Synchronous vs. asynchronous processing in time.
We report NLL on the test set on CIFAR10 and ImageNet32, respectively.
}
\centering
\begin{tabular}{cc} \toprule
Processing & NLL \\ 
\midrule 
\textbf{CIFAR10}  \\  %
Synchronous & $\leq 2.85$ \\  %
Asynchronous & $\leq 2.86$ \\  %
\midrule 
\textbf{ImageNet32}  \\  %
Synchronous & $\leq 3.69$ \\  %
Asynchronous & $\leq 3.69$ \\  %
\bottomrule
\end{tabular} 
\label{tab:synch_asynch}
\end{table}

\end{appendices}

\end{document}